\documentclass{article}

\usepackage{microtype}
\usepackage{graphicx}
\usepackage{subcaption}
\usepackage{booktabs} 

\usepackage{hyperref}

\usepackage[accepted]{icml2026}

\usepackage{amsmath}
\usepackage{amssymb}
\usepackage{mathtools}
\usepackage{amsthm}

\usepackage[capitalize,noabbrev]{cleveref}

\usepackage{mathrsfs}

\makeatletter

    \long\def\py#1{{\color{blue}#1}}

\makeatother

\theoremstyle{plain}
\newtheorem{theorem}{Theorem}[section]
\newtheorem{proposition}[theorem]{Proposition}
\newtheorem{lemma}[theorem]{Lemma}
\newtheorem{corollary}[theorem]{Corollary}
\theoremstyle{definition}
\newtheorem{definition}[theorem]{Definition}

\theoremstyle{remark}
\newtheorem{remark}[theorem]{Remark}

\usepackage[textsize=tiny]{todonotes}
\icmltitlerunning{Sheaf Neural Networks on SPD Manifolds}
\usepackage{enumitem}
\setlist[itemize]{noitemsep, topsep=0pt}
\setlist[enumerate]{noitemsep, topsep=0pt}
\begin{document}

\twocolumn[
  \icmltitle{Sheaf Neural Networks on SPD Manifolds: \\Second-Order Geometric Representation Learning
}



  \icmlsetsymbol{equal}{*}
    \icmlsetsymbol{equalb}{†}

  \begin{icmlauthorlist}
    \icmlauthor{Yuhan Peng}{equal,NTU}
    \icmlauthor{Junwen Dong}{equal,NKU}
    \icmlauthor{Yuzhi Zeng}{equalb,NTU}
    \icmlauthor{Hao Li}{equalb,NUS}
    \icmlauthor{Ce Ju}{france}
    \icmlauthor{Huitao Feng}{NKU}
    \icmlauthor{Diaaeldin Taha}{MPI,data}
    \icmlauthor{Anna Wienhard}{MPI,data}
    \icmlauthor{Kelin Xia}{NTU}
  \end{icmlauthorlist}

   \icmlaffiliation{NTU}{School of Physical and Mathematical Sciences, Nanyang Technological University, Singapore}
  \icmlaffiliation{NUS}{Department of Mathematics, Faculty of Science, National University of Singapore, Singapore}
  \icmlaffiliation{NKU}{Chern Institute of Mathematics, Nankai University, Tianjin, China}
  \icmlaffiliation{MPI}{Max Planck Institute for Mathematics in the Sciences, Leipzig, Germany}
  \icmlaffiliation{france}{Inria, CEA, Université Paris-Saclay, Palaiseau, France} 

  \icmlaffiliation{data}{Center for Scalable Data Analytics and Artificial Intelligence, Leipzig, Germany}

  \icmlcorrespondingauthor{Kelin Xia}{xiakelin@ntu.edu.sg}
\icmlcorrespondingauthor{Anna Wienhard}{anna.wienhard@mis.mpg.de}
\icmlcorrespondingauthor{Diaaeldin Taha}{diaaeldin.taha@mis.mpg.de}

  \icmlkeywords{Machine Learning, ICML}

  \vskip 0.3in
]



\printAffiliationsAndNotice{\icmlEqualContribution}

\begin{abstract}

Graph neural networks face two fundamental challenges rooted in the linear structure of Euclidean vector spaces: (1) Current 
architectures represent geometry through vectors (directions, gradients), yet many tasks require matrix-valued representations that capture relationships between directions—such as how atomic orientations covary
in a molecule. These second-order representations are naturally  captured by points on the symmetric positive definite matrices (SPD) manifold;
(2) Standard message passing applies shared transformations across edges. Sheaf neural networks address this via edge-specific transformations, but existing formulations remain confined to vector spaces and therefore cannot propagate matrix-valued features. 
We address both challenges by developing the first 
sheaf neural network operates natively on the SPD manifold. Our key 
insight is that the SPD manifold admits a Lie group structure, enabling 
well-posed analogs of sheaf operators 
without projecting to Euclidean space. Theoretically, we prove that SPD-valued sheaves are strictly more expressive than Euclidean sheaves: 
they admit consistent configurations (global sections) that vector-valued 
sheaves cannot represent, directly translating to richer learned 
representations. Empirically, our sheaf convolution  transforms effectively rank-1 directional inputs into full-rank matrices 
encoding local geometric structure. Our dual-stream architecture achieves 
SOTA on 6/7 MoleculeNet benchmarks, with the sheaf 
framework 
providing consistent depth robustness.
\end{abstract}

\section{Introduction}
Consider predicting whether a drug molecule can penetrate the blood-brain barrier. Two molecules may share identical atoms and bonds leading to the same graph topology, yet differ dramatically in this property~\cite{Meng2021}. What distinguishes them is often not \emph{which} atoms connect, but \emph{how} their bonds are oriented: the 
angles between adjacent bonds, the planarity of ring systems, the local 
anisotropy of molecular pockets—properties that describe relationships between directions, not directions themselves. A vector encodes a direction; capturing how directions \emph{covary} requires a matrix.
This is the difference between first-order and second-order geometric representations.




\noindent\textbf{Challenge 1: Limitation to first-order representations.}\label{par:challenge1}
First-order representations correspond to vectors: directions, gradients, velocities. Second-order representations correspond to matrices, including covariances, curvatures, and Hessians, capturing \emph{relationships between directions}. These second-order representations are captured by symmetric matrices, which encode additional geometric structure that vectors alone do not represent. When positive definite, these matrices are points on $\mathrm{SPD}_n$, the manifold of symmetric positive definite matrices, which is a Riemannian manifold of non-positive curvature.

Current GNNs are confined to first-order representations. Despite recent progress in geometric architectures—whether through angle encodings~\cite{Gasteiger2020Directional,gasteiger_gemnet_2021,liu2022spherenet}, directional message passing~\cite{schutt2021painn,pmlr-v139-satorras21a}, or spherical harmonics~\cite{thomas2018tensor}—these methods all represent geometry through scalars, vectors
, never through matrices directly. Even methods designed for SPD data resort to a ``Euclidean  detour''~\cite{DBLP:conf/aaai/HuangG17,brooks2019riemBNforSPD,Wang2022SymNet}:
projecting to the tangent space to perform Euclidean operations before mapping back. While computationally convenient, such operations inevitably discard intrinsic geometric information and therefore fail to fully exploit the representational power of the SPD manifold. We seek instead to work natively on  $\mathrm{SPD}_n$~\cite{lopez2021gyroSPD}.

\noindent\textbf{Challenge 2: Euclidean constraints on sheaf convolution.}\label{par:challenge2} 
Standard message passing aggregates neighbors through a shared 
transformation—implicitly assuming that information should combine 
the same way along every edge. This homogeneity assumption fails 
when relationships are heterogeneous: a molecule's rigid ring bonds 
behave differently from its flexible chain bonds. Sheaf neural networks~\cite{hansen2020sheaf,barbero2022sheaf} address 
this by equipping each edge with a \emph{restriction map} that encodes edge-specific transformations. However, existing formulations 
operate only on vector spaces; extending sheaves to non-Euclidean 
geometries remains unexplored. Consequently, while sheaves provide 
a principled way to model heterogeneity, they cannot natively 
propagate matrix-valued, second-order representations.



\noindent\textbf{Connecting the challenges.}
These limitations share a common mathematical origin: the linear structure of Euclidean vector spaces. Vector spaces lack capacity to encode second-order representations, and sheaf constructions with linear operators cannot be directly lifted to the non-Euclidean manifolds where such representations naturally reside.

\noindent\textbf{Our solution: SPD-valued sheaves.} 
We resolve both challenges by developing the first sheaf 
framework natively on the SPD manifold (Figure~\ref{fig:pipeline}). The central difficulty is that the SPD manifold is not a vector space, so basic operations such as coboundary operators and Laplacian operators do not admit their standard linear definitions. To address this, we use the Lie group structure induced by the global
diffeomorphism $\log:\mathrm{SPD}_n \to \mathrm{Sym}_n$ (the vector space of $n \times n$ real symmetric 
matrices) and develop Riemannian counterparts of sheaf-theoretic operators through the resulting $\log$/$\exp$-induced algebraic structure. 
From the viewpoint of discrete fiber bundles, this construction should be distinguished from initially projecting all SPD inputs into a single Euclidean space via the logarithm. Although both formulations involve $\log$/$\exp$ calculations algebraically, their geometric interpretations
differ. A Euclidean projection globally flattens all SPD inputs into a single
Euclidean space and corresponds to a flat bundle, whereas our construction uses the $\log$/$\exp$-induced linear structure locally within SPD-valued fibers and can be viewed as operating over a
non-flat discrete bundle. This enables geometry-aware message passing that respects both SPD's Riemannian structure and sheaf theory's edge-dependent transformations.

\noindent\textbf{Contributions.}
We propose the first principled framework for geometric deep learning with second-order representations via SPD sheaves. Our contributions are three-fold:
\begin{enumerate}

    \item \textbf{Mathematical Framework}: We extend sheaf framework 
    to SPD manifold, defining isometric restriction maps and a coboundary 
    operator compatible with the Lie group structure. This enables the sheaf
Laplacian to act on $\mathrm{SPD}_n$ while avoiding
the global flattening of SPD inputs into a single Euclidean feature
space.
  
    \item \textbf{Theoretical Analysis}: We prove that SPD sheaves 
    strictly generalize Euclidean sheaves, admitting full-rank matrix 
    representations that Euclidean stalks cannot express (Theorem~\ref{thm:strict}).
    \item \textbf{Empirical Validation}: Our dual-stream architecture 
    achieves state-of-the-art on 6 of 7 MoleculeNet datasets. We observe 
    that sheaf convolution induces \emph{second-order emergence}: 
    effectively rank-1 initializations evolve into full-rank matrices, 
    encoding local geometric structure. Our architecture also exhibits 
    depth robustness, retaining 97\% performance at 32 layers.
\end{enumerate}

\begin{figure*}[t]
    \centering
    \includegraphics[width=0.95\linewidth]{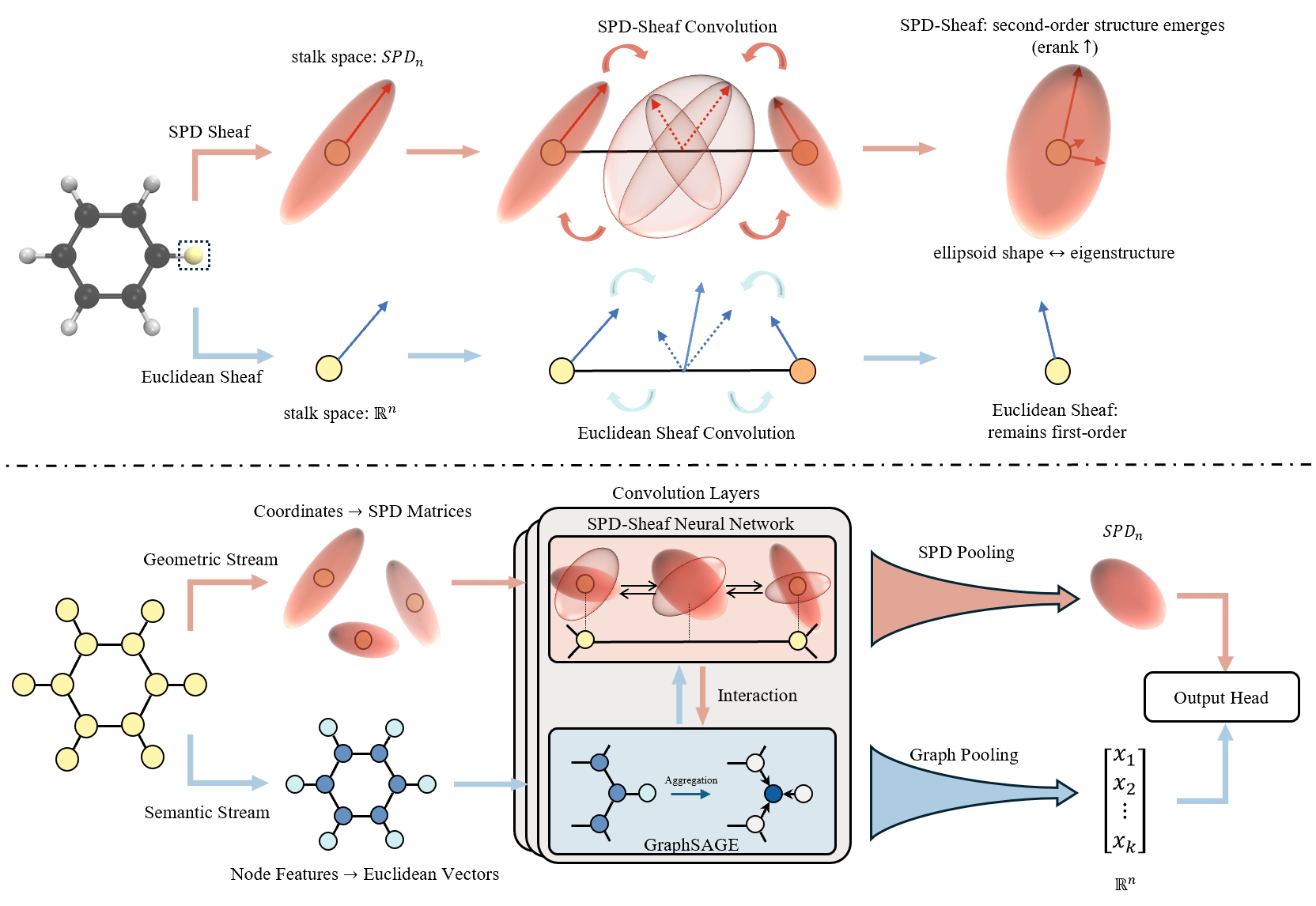}
    \caption{\textbf{SPD Sheaf Neural Networks.} \emph{Top:} SPD sheaves assign matrix-valued stalks ($\mathrm{SPD}_n$) rather than vector stalks ($\mathbb{R}^n$). Through sheaf convolution, SPD representations gain second-order structure (effective rank increases), while Euclidean representations remain first-order. \emph{Bottom:} Dual-stream architecture: the geometric stream processes coordinates via SPD sheaf convolution; the semantic stream processes node features via GraphSAGE, with cross-modal interaction enabling information exchange between streams.}
    \label{fig:pipeline}
        \vspace{-10pt}

\end{figure*}

\section{Related Work}
\label{sec:related_work}

\py{

}

\noindent\textbf{Geometric molecular representations.}
Message passing neural networks~\citep{gilmer2017messagePassingQuantumChemistry} provide a unified framework for molecular learning. 
Incorporating 3D structure drove substantial progress through interatomic distances~\citep{schutt2017schnet}, bond angles~\citep{Gasteiger2020Directional, liu2022spherenet}, and dihedral angles~\citep{gasteiger_gemnet_2021}.

Equivariant architectures~\citep{thomas2018tensor,fuchs2020se3,pmlr-v139-satorras21a,schutt2021painn} build symmetry constraints directly into the model.
All these methods encode geometry through scalars (distances, angles) or vectors (directions, displacements)—fundamentally first-order representations. 
Even tensor-product constructions~\citep{thomas2018tensor} producing matrix-like objects operate through iterative vector-level transformations, without directly exploiting the geometric structure of matrix manifolds such as SPD, motivating our SPD-based approach.


\noindent\textbf{Second-order representations and SPD matrix learning.} 
Second-order statistics capture richer information than first-order features: covariance pooling has demonstrated 
gains in visual recognition~\citep{li2017secondOrderRecognition,wang2020deepCNNsGCP}
, and SPD matrices arise naturally in diffusion tensor imaging, radar signal processing, and molecular geometry.
SPDNet~\citep{DBLP:conf/aaai/HuangG17} pioneered deep learning on SPD manifolds, followed by Riemannian batch normalization~\citep{brooks2019riemBNforSPD}, and gyrovector space formulations~\citep{lopez2021gyroSPD,nguyen2023gyrovectorSPD}.
Recent work has extended SPD matrix representations to graphs. 
\citet{DBLP:conf/pkdd/ZhaoLRSTT23} 
proposed a Log-Euclidean framework operating in the tangent space, while
\citet{DBLP:journals/pr/WangC26} 
operate directly on $\mathrm{SPD}_n$ via Cholesky decomposition, avoiding the tangent-space projection.
Yet two limitations remain: (i) their Euclidean-to-SPD embedding is learned via multilayer perceptrons (MLPs), lacking explicit geometric meaning, whereas we propose a geometry-aware lifting $\hat{u}_v\hat{u}_v^\top + \varepsilon I$ that preserves directional structure without learning; (ii) they employ standard message passing without edge-specific transformations, limiting expressivity on heterophilic graphs.
Our framework addresses both gaps by combining geometrically meaningful SPD embeddings with the sheaf neural network described below.

\noindent\textbf{Cellular sheaves and sheaf neural networks.}
Cellular sheaves generalize graphs by attaching vector spaces (stalks) to cells, connected by linear restriction maps~\citep{curry2014sheaves,hansen2019toward}.
\citet{hansen2020sheaf} introduced sheaf neural networks, learning restriction maps to capture edge-heterogeneous relationships.
Neural Sheaf Diffusion~\citep{bodnar2022neural} demonstrated that sheaf structure addresses heterophily, while subsequent work explored connection Laplacians~\citep{barbero2022sheaf} and nonlinear extensions~\citep{zaghen2024nonlinear}.
However, existing sheaf networks assume Euclidean stalks: restriction maps must be linear, and the coboundary operator
relies on vector subtraction unavailable on manifolds.
Our contribution extends sheaf theory to SPD stalks, 
enabling convolution of second-order geometric information with edge-specific transformations.
As a byproduct, the sheaf Laplacian's edge-dependent structure also mitigates over-smoothing~\citep{oono2020graph,bodnar2022neural}, a benefit we verify empirically in Section~\ref{par:robustness}.


\section{Background}


\subsection{The SPD Manifold}


\begin{definition}[SPD matrices]
\label{definition:spd}
A matrix $X \in \mathbb{R}^{n \times n}$ is \emph{symmetric positive definite} (SPD) if $X = X^\top$ and $x^\top X x > 0$ for all $x \in \mathbb{R}^n \setminus \{0\}$. The space of $n \times n$ SPD matrices forms a smooth manifold denoted $\mathrm{SPD}_n$, which can be identified with the open cone in the $\frac{n(n+1)}{2}$-dimensional space of symmetric matrices.
\end{definition}
\noindent\textbf{Riemannian structure.}
The \emph{affine-invariant Riemannian metric} (AIRM), a widely adopted metric on $\mathrm{SPD}_n$,
is given by
$g^{\mathrm{AI}}_P(V,U) := \mathrm{tr}(P^{-1} U P^{-1} V)$,
where $V,U \in T_P\mathrm{SPD}_n$. Another commonly used metric is the Log-Euclidean metric, defined as $g^{\mathrm{LEM}}_P(V,U)
:= \langle D_P\log\, V,\; D_P\log\, U\rangle_F$,
where $D_P\log$ denotes the differential of the matrix logarithm at $P$.
\begin{lemma}
\label{lem:spd_common_properties}
Under both the affine-invariant and Log-Euclidean metrics, $\mathrm{SPD}_n$ satisfies the following properties:
\begin{enumerate}
  \item $\mathrm{SPD}_n$ is geodesically complete.
  \item The Riemannian exponential and logarithm at the identity matrix $I_n$ coincide with
  the matrix exponential and logarithm  i.e., $\exp_{I_n}(X)=\exp(X)$ and $\log_{I_n}(P)=\log(P)$.
\end{enumerate}
\end{lemma}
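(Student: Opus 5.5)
I will treat the two metrics separately, since each has a structure that makes both claims transparent. For the Log-Euclidean metric, the key observation is that, by the definition $g^{\mathrm{LEM}}_P(V,U)=\langle D_P\log\,V, D_P\log\,U\rangle_F$, the matrix logarithm $\log:\mathrm{SPD}_n\to\mathrm{Sym}_n$ becomes an isometry onto the Euclidean space $(\mathrm{Sym}_n,\langle\cdot,\cdot\rangle_F)$. I will first recall that $\log$ is a global diffeomorphism, with smooth inverse the matrix exponential restricted to $\mathrm{Sym}_n$. This follows from the spectral theorem: each SPD matrix is $Q\,\mathrm{diag}(\lambda_i)Q^\top$ with $\lambda_i>0$, and the map acts eigenvalue-wise. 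Smoothness of the inverse follows from the inverse function theorem once $D_P\log$ is invertible, which I take as standard. Geodesics of $(\mathrm{SPD}_n,g^{\mathrm{LEM}})$ are then images under $\exp$ of straight lines $t\mapsto \log P + tW$. These are defined for all $t\in\mathbb{R}$, which gives geodesic completeness. For the identity, $\log I_n=0$ and $D_{I_n}\log=\mathrm{id}$, so the geodesic from $I_n$ with initial velocity $X$ is $\exp(tX)$. Hence $\exp_{I_n}(X)=\exp(X)$, and inverting gives $\log_{I_n}(P)=\log P$.

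For the affine-invariant metric, I will use the explicit geodesic formula
\[
\gamma(t)=P^{1/2}\exp\!\bigl(t\,P^{-1/2}VP^{-1/2}\bigr)P^{1/2}.
\]
The plan is to verify this formula rather than rederive the whole theory. First I note that the congruence action $P\mapsto APA^\top$, for $A\in GL_n$, is an isometry of $g^{\mathrm{AI}}$; this is a direct trace computation. Next I check that $t\mapsto\exp(tX)$, for $X\in\mathrm{Sym}_n$, is a geodesic through $I_n$. One clean route is to note that the symmetry $s(P)=P^{-1}$ is an isometry fixing $I_n$ with differential $-\mathrm{id}$, so $\mathrm{SPD}_n$ is a symmetric space. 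The one-parameter subgroup $\exp(tX)$ is then the orbit of the transvections $P\mapsto e^{tX/2}Pe^{tX/2}$ applied to $I_n$, and in a symmetric space such orbits are geodesics. Alternatively, I can verify the Euler--Lagrange equation $\ddot\gamma-\dot\gamma\gamma^{-1}\dot\gamma=0$ directly; it holds since $\gamma$ commutes with $\dot\gamma=X\gamma$.

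With these facts in hand, both claims for the affine-invariant metric follow quickly. Since $\exp(tX)$ is defined for all $t$, and every geodesic is the congruence image of one through $I_n$, all geodesics extend to $\mathbb{R}$, giving completeness. This also follows from Hopf--Rinow, because $\mathrm{SPD}_n$ is homogeneous. Moreover $\exp_{I_n}(X)=\gamma(1)=\exp(X)$, and since $\exp:\mathrm{Sym}_n\to\mathrm{SPD}_n$ is a bijection, $\log_{I_n}=\log$.

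The main obstacle is justifying that $\exp(tX)$ is a geodesic for the affine-invariant metric. My first attempt will be the direct computation of the geodesic equation from the Christoffel symbols of $g^{\mathrm{AI}}$. If that becomes messy, I will fall back on the symmetric-space argument, citing the standard references.
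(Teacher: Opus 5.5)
Your proof is correct, and it does more than the paper does. The paper's appendix entry for this lemma only recalls the definition of geodesic completeness: maximal geodesics are defined on all of $\mathbb{R}$, equivalently $\exp_p$ is defined on all of $T_p\mathcal{M}$. It treats both items as known facts and gives no argument for item 2.

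You give an actual argument for each metric:
\begin{itemize}
\item For $g^{\mathrm{LEM}}$, you use that the metric is by definition the pullback of $\langle\cdot,\cdot\rangle_F$ under $\log$. So $\log$ is an isometry onto the flat space $\mathrm{Sym}_n$, and both claims follow from straight lines through $0$ together with $D_{I_n}\log=\mathrm{id}$.
\item For $g^{\mathrm{AI}}$, you combine invariance under congruence by $GL_n$ with the fact that $t\mapsto\exp(tX)$ is a geodesic, and then use uniqueness of geodesics.
\end{itemize}

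What your route adds to the paper's bare definition is a proof that $\exp_{I_n}=\exp|_{\mathrm{Sym}_n}$ is a global diffeomorphism onto $\mathrm{SPD}_n$. This is what the paper actually relies on later: its construction of $\odot$ and of the log-induced pairing explicitly assumes $\exp_{I_n}$ is a global diffeomorphism. Completeness alone would not give that.

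Of your two options for the key step, the direct one is short. The first variation of the energy $\int\mathrm{tr}(\gamma^{-1}\dot\gamma\gamma^{-1}\dot\gamma)\,dt$ gives the geodesic equation $\ddot\gamma=\dot\gamma\gamma^{-1}\dot\gamma$ in a few lines. The curve $\gamma(t)=\exp(tX)$ satisfies it, since $\dot\gamma\gamma^{-1}\dot\gamma=X\gamma\gamma^{-1}X\gamma=X^2\gamma=\ddot\gamma$. So the symmetric-space machinery is optional.

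One wording fix: the map whose smoothness needs the inverse function theorem is $\log$, not $\exp$. Concretely, you need $D_X\exp$ to be invertible on $\mathrm{Sym}_n$; $\exp$ itself is smooth by its power series. Since $g^{\mathrm{LEM}}$ is a Riemannian metric only when $\log$ is a diffeomorphism, taking this as standard is legitimate.
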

Although the affine-invariant and Log-Euclidean metrics induce substantially different
global geometries on $\mathrm{SPD}_n$, the constructions developed in this paper rely
only on the properties listed in Lemma~\ref{lem:spd_common_properties}.
Consequently, all results hold under either choice of metric.

\noindent\textbf{Lie group structure.}
Beyond its Riemannian geometry, $\mathrm{SPD}_n$ admits a natural Lie group
structure that plays a central role in our sheaf construction. 
Fixing the identity matrix $I_n$ as base point, the logarithm 
$
\log : \mathrm{SPD}_n \longrightarrow \mathrm{Sym}_n
$
provides a global diffeomorphism. 
This enables transporting the additive group structure of $\mathrm{Sym}_n$ to $\mathrm{SPD}_n$ and defining
a binary operation
\begin{equation}
    P \odot Q := \exp(\log P + \log Q).
\end{equation}
Under this operation, $(\mathrm{SPD}_n,\odot)$ forms an Abelian Lie group with identity element
$I_n$ and inverse given by
$P^{-1} = \exp\!\big(-\log P\big).$

As established in Lemma~\ref{lem:spd_common_properties}, in view of the geodesic completeness of $\mathrm{SPD}_n$ and the compatibility between
the matrix and Riemannian logarithm and exponential at the identity under both the
affine-invariant and Log-Euclidean metrics, the Lie group operation $\odot$ admits the
following interpretation: any point $P \in \mathrm{SPD}_n$ can be mapped to the common
tangent space $T_{I_n}\mathrm{SPD}_n$ via the Riemannian logarithm at $I_n$, group addition
is performed in this linear space, and the result is mapped back to the manifold via the
Riemannian exponential at $I_n$.

\subsection{Cellular Sheaves on Graphs}

We recall the classical cellular sheaf theory on graphs \cite{hansen2020sheaf}. This provides the algebraic framework that we will generalize to SPD-valued stalks.

\begin{definition}[Cellular Sheaf]\label{def:cellular-sheaf}
A \emph{cellular sheaf} $(G, \mathcal{F})$ on an undirected graph $G = (V, E)$ consists of:
\begin{itemize}
    \item A vector space $\mathcal{F}(v)$ (the \emph{stalk}) for each vertex $v \in V$;
    \item A vector space $\mathcal{F}(e)$ for each edge $e \in E$;
    \item A linear \emph{restriction map} $\mathcal{F}_{v \to e}: \mathcal{F}(v) \to \mathcal{F}(e)$ for every incident pair $v \in e$.
\end{itemize}
\end{definition}

The restriction maps encode edge-specific transformations, enabling sheaves to capture heterogeneous relationships.


\noindent\textbf{Cochain spaces and coboundary operator.}
The space of \emph{0-cochains} is $C^0(G, \mathcal{F}) := \bigoplus_{v \in V} \mathcal{F}(v)$, and the space of \emph{1-cochains} is $C^1(G, \mathcal{F}) := \bigoplus_{e \in E} \mathcal{F}(e)$. The \emph{coboundary operator} $\delta: C^0 \to C^1$ measures local inconsistency:
\begin{equation}
    \delta(\mathbf{x})_e = \mathcal{F}_{v \to e} \mathbf{x}_v - \mathcal{F}_{u \to e} \mathbf{x}_u,
\end{equation}
for each edge $e = (u, v)$ with a chosen orientation. Note that this definition relies on vector subtraction, which is a structure unavailable on general manifolds.





\noindent\textbf{Sheaf Laplacian.}
The \emph{sheaf Laplacian} $\mathbf{L}_\mathcal{F}: C^0 \to C^0$
is defined by $\delta^\top \delta.$ The kernel $\ker \mathbf{L}_\mathcal{F} = \ker \delta$ consists of \emph{global sections}, the 0-cochains that are consistent across all edges. By the sheaf-theoretic Hodge decomposition, these correspond to the harmonic cochains and determine the representational capacity of sheaf convolutions.


\noindent\textbf{Sheaf neural networks.}
Sheaf neural networks~\citep{bodnar2022neural} use the sheaf Laplacian for 
message passing. Let $\mathbf{H}^{(\ell)} \in \mathbb{R}^{|V| d \times f}$ denote 
the node feature matrix at layer $\ell$, stacking the stalk-valued 
representations of all vertices, and $\mathbf{W}^{(\ell)} \in \mathbb{R}^{f \times f'}$ 
a learnable channel-mixing weight matrix. The update rule is
\begin{equation}
    \mathbf{H}^{(\ell+1)} = \sigma\left( (I - \mathbf{L}_\mathcal{F}) \mathbf{H}^{(\ell)} \mathbf{W}^{(\ell)} \right),
\end{equation}
where the restriction maps are learned. This enables edge-specific transformations while maintaining a 
convolution framework. However, the stalks $\mathcal{F}(v)$ remain vector spaces, limiting representations to first-order information.


\section{SPD-Valued Sheaf Theory}

We now develop the theory of sheaves with stalks valued in $\mathrm{SPD}_n$.
This construction addresses both challenges identified in the introduction simultaneously: 
SPD-valued stalks enable native processing of second-order geometric information (\hyperref[par:challenge1]{Challenge 1}), 
while the Lie group structure of $\mathrm{SPD}_n$ allows us to extend the sheaf framework beyond Euclidean stalks, 
defining coboundary operators and Laplacians that respect manifold geometry (\hyperref[par:challenge2]{Challenge 2}). All proofs in this section are deferred to Appendix~\ref{app:proofs}.


\subsection{SPD-Valued Sheaves and Restriction Maps}

The first step in extending sheaf theory to SPD manifolds is to define appropriate restriction maps. In classical sheaves, restriction maps are linear transformations between vector spaces. For SPD-valued sheaves, we require well-defined maps 
from $\mathrm{SPD}_n$ to $\mathrm{SPD}_n$ and respect the intrinsic geometric structure of the manifold.

\begin{definition}[SPD Sheaf]\label{def:spd-sheaf}
An \emph{SPD sheaf} $(G, \mathcal{F})$ on a graph $G = (V, E)$ assigns $\mathcal{F}(v) = \mathrm{SPD}_n$ to each vertex $v$ and $\mathcal{F}(e) = \mathrm{SPD}_n$ to each edge $e$. For each incident pair $v \in e$, the restriction map is:
\begin{equation}
    \mathcal{F}_{v \to e}(P) = M_{ve} P M_{ve}^\top, \quad M_{ve} \in O(n),
\end{equation}
where $O(n)$ denotes the orthogonal group.
\end{definition}

The choice of orthogonal restriction maps is 
canonical:

\begin{proposition}[Restriction Maps are Isometries]
\label{prop:isometry_restriction}
Let $g$ denote either the affine-invariant Riemannian metric $g^{AIRM}$
or the Log-Euclidean metric $g^{LEM}$ on $\mathrm{SPD}_n$.
For any $M \in O(n)$, the map $\phi_M : P \mapsto MPM^\top$ is an isometry of $(\mathrm{SPD}_n, g)$.
\end{proposition}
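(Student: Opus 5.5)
First we check that $\phi_M$ is well defined and a diffeomorphism of $\mathrm{SPD}_n$. For $P\in\mathrm{SPD}_n$, the matrix $MPM^\top$ is symmetric, and $x^\top MPM^\top x=(M^\top x)^\top P(M^\top x)>0$ for $x\neq 0$ because $M^\top$ is invertible. The map $\phi_M$ is the restriction of a linear map on $\mathrm{Sym}_n$, and its inverse is $\phi_{M^\top}$. Hence it is a diffeomorphism, and its differential at every point is $D_P\phi_M(V)=MVM^\top$. It then suffices to show that $g_{\phi_M(P)}(MVM^\top,MUM^\top)=g_P(V,U)$ for all $P$ and all $V,U\in T_P\mathrm{SPD}_n\cong\mathrm{Sym}_n$. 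We treat the two metrics separately.

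\textbf{AIRM.} Since $M$ is orthogonal, $(MPM^\top)^{-1}=MP^{-1}M^\top$. Substituting gives
\[
\mathrm{tr}\big(MP^{-1}M^\top MUM^\top MP^{-1}M^\top MVM^\top\big)=\mathrm{tr}\big(MP^{-1}UP^{-1}VM^\top\big).
\]
By cyclicity of the trace and $M^\top M=I$, this equals $\mathrm{tr}(P^{-1}UP^{-1}V)$. This case is routine; affine invariance in fact holds for any $M\in GL(n)$.

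\textbf{LEM.} The key fact is the equivariance of the matrix logarithm under orthogonal conjugation:
\[
\log(MPM^\top)=M(\log P)M^\top .
\]
It follows from the spectral decomposition: if $P=Q\Lambda Q^\top$, then $MPM^\top=(MQ)\Lambda(MQ)^\top$ with $MQ$ orthogonal. Equivalently, $\exp(MXM^\top)=M\exp(X)M^\top$ by the power series, and $\exp$ is a bijection $\mathrm{Sym}_n\to\mathrm{SPD}_n$. In other words, $\log\circ\phi_M=\psi_M\circ\log$, where $\psi_M(X)=MXM^\top$ is linear on $\mathrm{Sym}_n$.

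Differentiating this identity at $P$ in direction $V$ gives, by the chain rule,
\[
D_{MPM^\top}\log\,(MVM^\top)=M\,(D_P\log\, V)\,M^\top .
\]
The Frobenius inner product is invariant under orthogonal conjugation, since $\langle MAM^\top,MBM^\top\rangle_F=\mathrm{tr}(MA^\top M^\top MBM^\top)=\mathrm{tr}(A^\top B)$. Therefore $g^{\mathrm{LEM}}_{\phi_M(P)}(D\phi_M V,D\phi_M U)=g^{\mathrm{LEM}}_P(V,U)$.

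The main point requiring care is the LEM case. One must establish the conjugation-equivariance of $\log$ rigorously and then pass to the differential via the chain rule. This avoids computing $D_P\log$ explicitly, for instance through the Daleckii–Krein formula; if that formula were used instead, one would only need to note that the eigenbasis rotates by $M$ while the divided differences of the eigenvalues are unchanged.

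Note that orthogonality of $M$ is genuinely used here. For a general invertible $M$, the logarithm does not commute with congruence $P\mapsto MPM^\top$, so $\phi_M$ would not in general be an LEM isometry.
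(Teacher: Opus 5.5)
Your proof is correct, but it takes a different route from the paper's.

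The paper works with geodesic distances. It takes the closed-form expressions $d_{\mathrm{AI}}(X,Y)=\|\log(X^{-1/2}YX^{-1/2})\|_F$ and $d_{\mathrm{LE}}(X,Y)=\|\log X-\log Y\|_F$ as known. It then shows both are unchanged under $X\mapsto MXM^\top$, using $(MXM^\top)^{-1/2}=MX^{-1/2}M^\top$, the equivariance $\log(MAM^\top)=M(\log A)M^\top$, and orthogonal invariance of the Frobenius norm.

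You instead verify $\phi_M^*g=g$ directly on the metric tensors, using exactly the formulas for $g^{\mathrm{AI}}_P$ and $g^{\mathrm{LEM}}_P$ given in the Background. In the Log-Euclidean case you get the differential identity by applying the chain rule to $\log\circ\phi_M=\psi_M\circ\log$, rather than computing $D_P\log$.

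Both arguments rest on the same key facts: conjugation-equivariance of $\log$ and Frobenius invariance. What each buys differs:
\begin{itemize}
\item The paper's version is shorter. However, it imports distance formulas that are never derived from the paper's own definitions of the metrics. It also needs an unstated standard step (e.g.\ Myers--Steenrod, or differentiating distances along short geodesics) to pass from ``distance-preserving'' to ``preserves $g$''.
\item Your version is self-contained relative to the stated definitions. It gives the infinitesimal statement, from which distance preservation follows at once. It also makes visible that the AIRM case holds for all $M\in GL(n)$, while orthogonality is what the LEM case uses.
\end{itemize}

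One small caveat on your closing remark. Scalar matrices $M=cI$ do act as LEM isometries, since $\log(c^2P)=\log P+2\log(c)\,I$ is a translation in log-coordinates. So the failure of $\log$ to commute with congruence does not by itself rule out an isometry, and your ``in general'' hedge is needed.
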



The orthogonal group $O(n)$ fixes the base point $I_n$ and is the maximal subgroup of $GL(n)$ whose congruence action preserves both AIRM and Log-Euclidean metric~\cite{thanwerdas2023oninvariant}. This parallels the Euclidean case: just as orthogonal matrices preserve the Euclidean metric on $\mathbb{R}^n$, orthogonal congruence transformations preserve the  metric on $\mathrm{SPD}_n$.
This ensures that edge-specific transformations, which in classical sheaf theory require Euclidean structure, can now operate natively within SPD geometry, thereby addressing \hyperref[par:challenge2]{Challenge 2}.

\subsection{The Coboundary Operator}

The classical coboundary operator $\delta(\mathbf{x})_e = \mathcal{F}_{v \to e} \mathbf{x}_v - \mathcal{F}_{u \to e} \mathbf{x}_u$ relies fundamentally on vector subtraction. Since $\mathrm{SPD}_n$ is not a vector space, we must reformulate this construction. The key is to exploit the Lie group structure: subtraction in $\mathbb{R}^n$ becomes subtraction in the Lie algebra $\mathrm{Sym}_n$, followed by the exponential map back to the group.

\begin{definition}[SPD Coboundary Operator]\label{def:spd-coboundary}
For a 0-cochain $\sigma = (\sigma_v)_{v \in V}$ with $\sigma_v \in \mathrm{SPD}_n$, the coboundary $\delta \sigma \in C^1$ is defined on each oriented edge $e = (u, v)$ as:
\begin{equation}
    (\delta \sigma)_e = \exp\left( \log(\mathcal{F}_{v \to e}(\sigma_v)) - \log(\mathcal{F}_{u \to e}(\sigma_u)) \right).
\end{equation}
\end{definition}

This definition directly parallels the Euclidean case and preserves the essential algebraic property:

\begin{proposition}[Linearity of Coboundary]\label{prop:coboundary-linear}
The coboundary operator $\delta$ is linear with respect to the Lie group operation $\odot$: $    \delta(\sigma \odot \tau)_e = (\delta \sigma)_e \odot (\delta \tau)_e$.
\end{proposition}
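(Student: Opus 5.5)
The plan is to reduce everything to the additive structure of $\mathrm{Sym}_n$ via the global diffeomorphism $\log$. Here $\sigma\odot\tau$ denotes the vertexwise product, $(\sigma\odot\tau)_v=\sigma_v\odot\tau_v$. The key auxiliary fact is that each restriction map $\phi_M(P)=MPM^\top$, $M\in O(n)$, is a group homomorphism of $(\mathrm{SPD}_n,\odot)$. Equivalently, $\log$ intertwines $\phi_M$ with the linear map $S\mapsto MSM^\top$ on $\mathrm{Sym}_n$.

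\emph{Step 1.} I would first show $\log(MPM^\top)=M(\log P)M^\top$ for $M\in O(n)$. Write the spectral decomposition $P=U\Lambda U^\top$. Since $MU$ is again orthogonal, $MPM^\top=(MU)\Lambda(MU)^\top$ is a spectral decomposition. Hence $\log(MPM^\top)=(MU)\log\Lambda\,(MU)^\top=M(\log P)M^\top$. The same argument gives $\exp(MSM^\top)=M\exp(S)M^\top$ for $S\in\mathrm{Sym}_n$. This step uses $M^\top=M^{-1}$ essentially and is the only point that needs care. For a general $GL(n)$ congruence it fails, which is why orthogonality of the restriction maps matters.

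\emph{Step 2.} From Step 1 the homomorphism property follows directly:
\[
\mathcal{F}_{v\to e}(\sigma_v\odot\tau_v)=M_{ve}\exp(\log\sigma_v+\log\tau_v)M_{ve}^\top=\exp\big(M_{ve}\log\sigma_v M_{ve}^\top+M_{ve}\log\tau_v M_{ve}^\top\big).
\]
Applying Step 1 again, the right-hand side equals $\mathcal{F}_{v\to e}(\sigma_v)\odot\mathcal{F}_{v\to e}(\tau_v)$. Taking $\log$ yields the additivity
\[
\log\mathcal{F}_{v\to e}(\sigma_v\odot\tau_v)=\log\mathcal{F}_{v\to e}(\sigma_v)+\log\mathcal{F}_{v\to e}(\tau_v),
\]
and the same holds for $u$.

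\emph{Step 3.} Substitute into Definition~\ref{def:spd-coboundary}. The argument of $\exp$ in $(\delta(\sigma\odot\tau))_e$ becomes
\[
\big[\log\mathcal{F}_{v\to e}(\sigma_v)-\log\mathcal{F}_{u\to e}(\sigma_u)\big]+\big[\log\mathcal{F}_{v\to e}(\tau_v)-\log\mathcal{F}_{u\to e}(\tau_u)\big],
\]
which is the rearrangement allowed by commutativity of addition in $\mathrm{Sym}_n$. Each bracket equals $\log(\delta\sigma)_e$ and $\log(\delta\tau)_e$ respectively, using $\log\circ\exp=\mathrm{id}$ on $\mathrm{Sym}_n$. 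Therefore $(\delta(\sigma\odot\tau))_e=\exp(\log(\delta\sigma)_e+\log(\delta\tau)_e)=(\delta\sigma)_e\odot(\delta\tau)_e$, which is the claim.

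As a remark, the same computation also shows that $\delta$ preserves identities and inverses. The main obstacle is Step 1; everything else is bookkeeping in the abelian group $(\mathrm{Sym}_n,+)$.
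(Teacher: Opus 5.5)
Your proposal is correct and follows essentially the same route as the paper: first show that orthogonal congruence is a homomorphism of $(\mathrm{SPD}_n,\odot)$, then substitute into the coboundary and regroup the terms in $\mathrm{Sym}_n$. The only difference is cosmetic: you justify $\exp(MSM^\top)=M\exp(S)M^\top$ and $\log(MPM^\top)=M(\log P)M^\top$ via spectral decompositions, whereas the paper uses the power series for $\exp$ and takes the $\log$ identity as known.
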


\subsection{The Adjoint Operator and Sheaf Laplacian}
In classical sheaf theory, the adjoint $\delta^\top$ is defined via the Euclidean
inner product on stalks. In contrast, $\mathrm{SPD}_n$ carries no canonical inner
product. However, this is not an obstacle: the
definition of an adjoint requires only a non-degenerate bilinear pairing. We
therefore introduce an SPD pairing and define $\delta^\top$ through the
corresponding Green identity; details are given in Appendix~\ref{app:proofs}.

\begin{definition}[SPD Pairing]\label{def:spd-pairing}
The pairing on $\mathrm{SPD}_n$ is defined as:
\begin{equation}
    \langle X, Y \rangle := g_{I_n}(\log X, \log Y) .
    \end{equation}
    where $g_{I_n}$ is the inner product on the tangent space $T_{I_n}\mathrm{SPD}_n$ induced by the metric $g$ at the point $I_n$.
\end{definition}
This pairing is compatible with the Lie group structure: $\langle X, Y_1 \odot Y_2 \rangle = \langle X, Y_1 \rangle + \langle X, Y_2 \rangle$.

\begin{proposition}[Adjoint Operator]\label{prop:adjoint}
Fix an arbitrary orientation for each edge $e = (u, v)$. To encode the resulting orientation-dependent signs, we introduce an incidence
index $I(v,e)\in\{0,1\}$ 
, defined as follows:
if $e$ is oriented as $v\to u$, then $I(v,e)=0$ and $I(u,e)=1$.

The adjoint $\delta^\top: C^1 \to C^0$ satisfying $\langle \delta \sigma, \tau \rangle = \langle \sigma, \delta^\top \tau \rangle$ is given by:
\vspace{-0.5em}
\begin{equation}
    (\delta^\top \tau)_v = \exp\left( \sum_{ v \to e} (-1)^{I(v,e)} M_{ve}^\top (\log \tau_e) M_{ve} \right),
\end{equation}
\end{proposition}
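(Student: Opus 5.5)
The plan is to move everything into the Lie algebra $\mathrm{Sym}_n$ through the global diffeomorphism $\log$, where the SPD pairing of Definition~\ref{def:spd-pairing} becomes an ordinary inner product and the coboundary of Definition~\ref{def:spd-coboundary} becomes a linear map; the adjoint is then the usual linear-algebra adjoint, carried back by $\exp$.

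First I extend the pairing to cochains by summing over cells:
\[
\langle \sigma,\sigma'\rangle_{C^0}=\sum_v g_{I_n}(\log\sigma_v,\log\sigma'_v),
\qquad
\langle \rho,\tau\rangle_{C^1}=\sum_e g_{I_n}(\log\rho_e,\log\tau_e).
\]
Write $X_v=\log\sigma_v$ and $Y_e=\log\tau_e$.

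The key identity is $\log(M P M^\top)=M(\log P)M^\top$ for $M\in O(n)$. It holds because $M\exp(X)M^\top=\exp(MXM^\top)$, which follows from $M^\top M=I$ and the power series. With this identity,
\[
\log(\delta\sigma)_e = M_{ve}X_vM_{ve}^\top - M_{ue}X_uM_{ue}^\top .
\]
So $\hat\delta:=\log\circ\,\delta\circ\exp$ is a genuinely linear map $\bigoplus_v\mathrm{Sym}_n\to\bigoplus_e\mathrm{Sym}_n$. This is consistent with Proposition~\ref{prop:coboundary-linear}.

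Next I need the metric at the identity. Both $g^{\mathrm{AI}}_{I_n}$ and $g^{\mathrm{LEM}}_{I_n}$ reduce to the Frobenius product $\mathrm{tr}(UV)$, since $D_{I_n}\log=\mathrm{id}$. Therefore
\[
g_{I_n}(MXM^\top,Y)=\mathrm{tr}(MXM^\top Y)=\mathrm{tr}(X\,M^\top YM)=g_{I_n}(X,M^\top YM),
\]
so the adjoint of $X\mapsto MXM^\top$ is $Y\mapsto M^\top YM$.

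Now expand the left-hand pairing edge by edge and regroup by vertex:
\[
\langle\delta\sigma,\tau\rangle
=\sum_{e=(u,v)}\Bigl[g(X_v,M_{ve}^\top Y_eM_{ve})-g(X_u,M_{ue}^\top Y_eM_{ue})\Bigr]
=\sum_v g\Bigl(X_v,\sum_{e\ni v}\pm M_{ve}^\top Y_eM_{ve}\Bigr).
\]
The sign is $+$ when $v$ occurs in the positive slot of $\delta$ on $e$ and $-$ otherwise. This sign is exactly $(-1)^{I(v,e)}$ under the stated orientation convention, which I will check against Definition~\ref{def:spd-coboundary}.

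Finally, set $(\delta^\top\tau)_v=\exp\bigl(\sum_{e\ni v}(-1)^{I(v,e)}M_{ve}^\top(\log\tau_e)M_{ve}\bigr)$. Since $\log\circ\exp=\mathrm{id}$ on $\mathrm{Sym}_n$, the right-hand pairing $\langle\sigma,\delta^\top\tau\rangle$ equals the regrouped sum above, which proves the identity.

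For uniqueness, note that $\log$ is a bijection onto $\mathrm{Sym}_n$ and the pairing is non-degenerate there. So any operator satisfying the identity must agree with this one after applying $\log$, hence everywhere.

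The main obstacle is bookkeeping rather than analysis. One part is matching the sign convention $I(v,e)$ with the orientation used in $\delta$. The other is justifying that the affine-invariant metric at $I_n$ is Frobenius, so that the result holds for both metrics as the paper claims. The conjugation identity for $\log$ under $O(n)$ is the only genuinely structural ingredient.
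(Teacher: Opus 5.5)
Your proposal is correct and follows the paper's route: pass to log coordinates, where $\delta$ becomes a linear map $B$, take its linear adjoint with respect to $g_{I_n}$, and map back by $\exp$. The only difference is in deriving the explicit form: the paper expands $\log(\delta^\top\tau)_v$ in an orthonormal frame using test cochains supported at a single vertex, whereas you regroup the sum directly via $\mathrm{tr}(MXM^\top Y)=\mathrm{tr}(X\,M^\top YM)$, which also makes explicit the $O(n)$-invariance of $g_{I_n}$ that the paper's final step uses implicitly.
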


\begin{definition}[SPD Sheaf Laplacian]\label{def:spd-laplacian}
The SPD sheaf Laplacian $\mathbf{L}_\mathcal{F} := \delta^\top \circ \delta$ acts on SPD-valued cochains as:
\vspace{-0.5em}   
\begin{align}\label{eq:Lf}
(\mathbf{L}_{\mathcal F}X)_v
&= \exp\!\Bigg(
\sum_{(v,u)=e}
(-1)^{I(v,e)}\,
M_{ve}^{\top}
\Big(
\log\!\big({\mathcal{F}}_{v\to e}X_v\big) \notag \\
&\qquad\qquad\qquad
-\log\!\big(\mathcal{F}_{u\to e}X_u\big)
\Big)
M_{ve}
\Bigg).
\end{align}
\end{definition}

\begin{proposition}
[Hodge-type Decomposition for SPD-valued Sheaves]
\label{prop:spd-hodge}
With the coboundary operator $\delta$, its adjoint $\delta^{\top}$, and the
Laplacian $\mathbf{L}=\delta^{\top}\delta$ defined above, harmonic
$0$-cochains coincide with global sections:
\begin{equation}
\ker \mathbf{L} = \ker \delta .
\end{equation}

\end{proposition}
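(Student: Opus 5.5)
The plan is to transport the whole question to the vector space $\mathrm{Sym}_n$ through the global diffeomorphism $\log$, where the SPD operators become ordinary linear maps. There the classical argument $\ker \delta^\top\delta = \ker\delta$ applies.

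\textbf{Step 1: linear models of the operators.} For a 0-cochain $\sigma$, set $s_v := \log\sigma_v \in \mathrm{Sym}_n$. Because $M_{ve}\in O(n)$, the matrix logarithm commutes with orthogonal congruence:
\[
\log(M_{ve}\sigma_v M_{ve}^\top)=M_{ve}s_vM_{ve}^\top .
\]
Hence $\log(\delta\sigma)_e = M_{ve}s_vM_{ve}^\top - M_{ue}s_uM_{ue}^\top =: (d s)_e$, where $d:\bigoplus_v\mathrm{Sym}_n\to\bigoplus_e\mathrm{Sym}_n$ is a linear map. Similarly, Proposition~\ref{prop:adjoint} gives $\log(\delta^\top\tau)_v = (d^*t)_v$ with $t_e=\log\tau_e$. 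Here $d^*$ is the linear map $t\mapsto \sum_e (-1)^{I(v,e)}M_{ve}^\top t_e M_{ve}$. Consequently $\log(\mathbf{L}\sigma)=d^*d\,s$.

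\textbf{Step 2: $d^*$ is the genuine adjoint.} I will take the cochain pairing to be the sum over vertices (or edges) of the stalk pairing of Definition~\ref{def:spd-pairing}, so that $\langle\sigma,\sigma'\rangle=\sum_v g_{I_n}(s_v,s'_v)$. The defining identity $\langle\delta\sigma,\tau\rangle=\langle\sigma,\delta^\top\tau\rangle$ of Proposition~\ref{prop:adjoint} then reads $g(ds,t)=g(s,d^*t)$ for all $s,t$, so $d^*$ is the adjoint of $d$ with respect to $g_{I_n}$ summed over stalks.

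This requires $g_{I_n}$ to be a positive definite inner product on $\mathrm{Sym}_n$. For AIRM it equals $\mathrm{tr}(UV)$. For LEM, $D_{I}\log=\mathrm{id}$, so it is the Frobenius product. The congruence $X\mapsto MXM^\top$ is Frobenius-orthogonal, which makes the formula in Proposition~\ref{prop:adjoint} consistent. I expect this to be the main point needing care, since the sign convention $I(v,e)$ and the orientation in $\delta$ must match. I will check it edge by edge: for $e=(u,v)$, the edge term $g(M_{ve}s_vM_{ve}^\top - M_{ue}s_uM_{ue}^\top,t_e)$ splits into $g(s_v,M_{ve}^\top t_eM_{ve})-g(s_u,M_{ue}^\top t_eM_{ue})$.

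\textbf{Step 3: kernels.} The identity element of $(\mathrm{SPD}_n,\odot)$ is $I_n$, so the kernel means the preimage of the identity cochain. Since $\exp$ is a bijection $\mathrm{Sym}_n\to\mathrm{SPD}_n$ sending $0\mapsto I_n$, we have $\mathbf{L}\sigma=I$ iff $d^*ds=0$, and $\delta\sigma=I$ iff $ds=0$.

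One inclusion is trivial: $ds=0\Rightarrow d^*ds=0$. For the converse, if $d^*ds=0$ then
\[
0=g(s,d^*ds)=g(ds,ds)=\sum_e\|(ds)_e\|^2_{g_{I_n}},
\]
and positive definiteness forces $ds=0$. Therefore $\ker\mathbf{L}=\ker\delta$.
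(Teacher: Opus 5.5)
Your proposal is correct and follows essentially the same route as the paper. Both arguments get $\ker\delta\subseteq\ker\mathbf{L}$ for free, and both get the reverse inclusion from the Green identity $\langle\mathbf{L}x,x\rangle=\langle\delta x,\delta x\rangle$ together with positive definiteness of the log-induced pairing; the paper phrases this directly on SPD cochains with identity $I_n$, while you phrase it in log coordinates on $\mathrm{Sym}_n$ and additionally make explicit that $g_{I_n}$ is the Frobenius product under both metrics and that the orientation signs match, details the paper leaves implicit.
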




\begin{remark}[Edge Heterogeneity on SPD Manifolds]\label{rmk:challenge2}
The relative transformation from $\mathcal{F}_u$ to $\mathcal{F}_v$ on edge $e=u\to v$ defined by
$\mathcal{F}_u\xrightarrow{{\mathcal{F}}_{u\to e}}\mathcal{F}_e\xleftarrow{{\mathcal{F}}_{v\to e}}\mathcal{F}_u$ captures edge-specific relationships, enabling the sheaf Laplacian to model heterogeneous inter-node interactions. When all transformations are identity, the SPD sheaf Laplacian reduces to the standard graph Laplacian on matrix-valued signals. This edge-specificity also prevents over-smoothing, as verified in Section~\ref{par:robustness}.
\end{remark}

\subsection{Relationship to Euclidean Sheaves}

\begin{figure}[t]
    \centering
    \includegraphics[width=0.8\columnwidth]{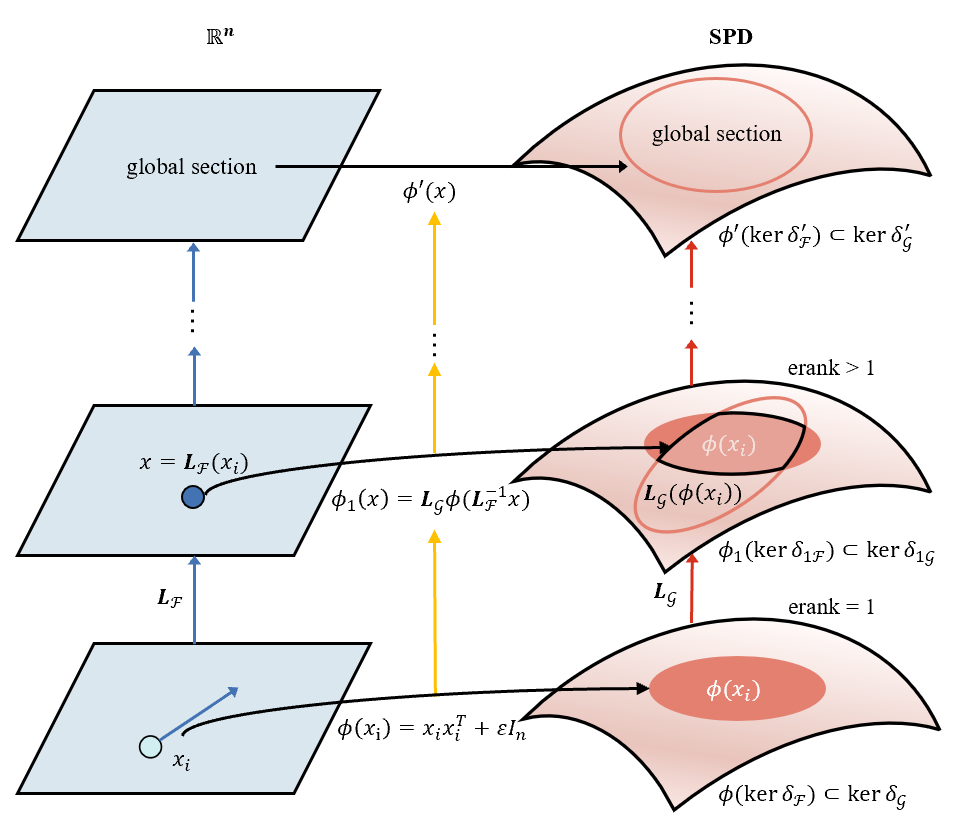} 

\caption{\textbf{Relationship to Euclidean Sheaves.} Left: Laplacian-based updates on a Euclidean sheaf $\mathcal{F}$. Right: updates on the corresponding SPD-valued sheaf $\mathcal{G}$. The embedding $\Phi$ lifts Euclidean features to SPD matrices, satisfying $\Phi(\ker \delta_\mathcal{F}) \subseteq \ker \delta_\mathcal{G}$. This inclusion is preserved across layers, with final representations corresponding to global sections. The strict inclusion highlights the greater representational capacity of SPD-valued sheaves.}
    \label{fig:theory}
    \vspace{-10pt}

\end{figure}

Having constructed the SPD sheaf framework, we now establish its relationship to classical Euclidean sheaves. This clarifies in what sense SPD sheaves constitute a \emph{generalization} rather than merely an alternative formulation.

\begin{definition}[Euclidean-to-SPD Embedding]\label{def:embedding}
For $\varepsilon > 0$, define $\Phi_\varepsilon: \mathbb{R}^n \to \mathrm{SPD}_n$ by:
\begin{equation}\label{eq
:embedding}
    \Phi_\varepsilon(x) = xx^\top + \varepsilon I_n.
\end{equation}
\end{definition}

This embedding lifts first-order vectors to second-order representations (effective rank-1 SPD matrices).

\begin{proposition}[Restriction Maps as Pullbacks]\label{prop:pullback}
Each $\Sigma \in \mathrm{SPD}_n$ defines an inner product $\langle x, y \rangle_\Sigma = x^\top \Sigma y$ on $\mathbb{R}^n$. Under this identification, SPD restriction maps are induced from the transposes of the Euclidean restriction maps via pullback:
\begin{equation}
  \mathcal{G}_{v \to e} := (\mathcal{F}^\top_{v \to e})^*,
\end{equation}
where
$ (\mathcal{F}^\top_{v \to e})^* \Sigma (x, y)
  := \Sigma(\mathcal{F}^\top_{v \to e} x,\,
            \mathcal{F}^\top_{v \to e} y).$

\end{proposition}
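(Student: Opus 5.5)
The plan is a direct computation showing that the pullback of a quadratic form along $\mathcal{F}^\top_{v\to e}$ coincides with the congruence action $\Sigma \mapsto M_{ve}\Sigma M_{ve}^\top$ of Definition~\ref{def:spd-sheaf}. Throughout, the Euclidean sheaf $\mathcal{F}$ has orthogonal restriction maps, written $\mathcal{F}_{v\to e} = M_{ve}\in O(n)$, matching the orthogonality assumption in Definition~\ref{def:spd-sheaf}.

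First I will identify $\Sigma\in\mathrm{SPD}_n$ with the bilinear form $\Sigma(x,y)=x^\top\Sigma y$. Symmetry of $\Sigma$ gives symmetry of the form, and positive definiteness gives $\Sigma(x,x)>0$ for $x\neq 0$. The identification is a bijection between $\mathrm{SPD}_n$ and the set of inner products on $\mathbb{R}^n$, since the matrix is recovered from the form via $\Sigma_{ij}=\Sigma(e_i,e_j)$.

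Next I will compute the pullback along $A:=\mathcal{F}^\top_{v\to e}$:
\begin{equation*}
(A^*\Sigma)(x,y) = (Ax)^\top \Sigma (Ay) = x^\top (A^\top \Sigma A)\, y .
\end{equation*}
The matrix representing $A^*\Sigma$ is therefore $A^\top\Sigma A = M_{ve}\Sigma M_{ve}^\top$, which is exactly $\mathcal{G}_{v\to e}(\Sigma)$ as given in Definition~\ref{def:spd-sheaf}.

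Finally, I will check that the pullback stays in $\mathrm{SPD}_n$. Because $A$ is invertible (it is orthogonal), $Ax\neq 0$ whenever $x\neq 0$, so $(A^*\Sigma)(x,x)>0$ and the pullback is again an inner product. The transpose in the statement is what makes the map covariant, in the sense that $\mathcal{G}_{v\to e}=(\mathcal{F}^\top_{v\to e})^*$ uses $M_{ve}$ and not $M_{ve}^\top$ on the left.

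For a Euclidean sheaf whose restriction maps are not orthogonal, the same formula still defines an SPD-valued map $\Sigma\mapsto M\Sigma M^\top$ as long as $M$ is invertible. In that case, however, it is not an isometry in the sense of Proposition~\ref{prop:isometry_restriction}.

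The only delicate step is the bookkeeping of the transpose, so that the direction of the pullback matches the convention $M P M^\top$. Everything else is routine.
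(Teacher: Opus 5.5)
Your computation is correct and matches the paper's proof: both show $x^\top M_{ve}\Sigma M_{ve}^\top y=\Sigma(M_{ve}^\top x, M_{ve}^\top y)$, i.e.\ congruence by $M_{ve}$ is pullback along $M_{ve}^\top$, and your extra checks (bijectivity of the identification and SPD-preservation) are harmless additions. One side remark is inaccurate, though: for invertible non-orthogonal $M$, the map $P\mapsto MPM^\top$ is still an isometry of the affine-invariant metric, since AIRM is invariant under all $GL(n)$ congruences; what is lost in general is only the Log-Euclidean invariance.
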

\begin{remark}[Naturality of Orthogonal Restriction Maps]
The choice of orthogonal congruence as SPD restriction maps is structurally determined. Consider the embedding $\Phi$ in~\eqref{eq
:embedding}:
if a Euclidean restriction map acts as $x \mapsto Mx$ with $M \in O(n)$, 
then $
\Phi(Mx)=(Mx)(Mx)^\top + \varepsilon I = M\Phi(x)M^\top.
$
Thus, the congruence action $P \mapsto MPM^\top$ on $\mathrm{SPD}_n$ is the canonical lift of restriction maps on $\mathbb{R}^n$ when extending Euclidean sheaves to SPD-valued sheaves.
\end{remark}
\begin{proposition}[Kernel Correspondence]\label{prop:kernel}
Let $\mathcal{F}$ be a Euclidean sheaf and $\mathcal{G}$ an SPD sheaf with matched restriction maps. Then:
\begin{equation}
    \delta_\mathcal{F} \mathbf{x} = 0 \Rightarrow \delta_\mathcal{G}(\Phi(\mathbf{x})) = I_n.
\end{equation}
That is, the embedding preserves global sections. On the other side, the embedding global section feedback as
$
\delta_\mathcal{G}(\Phi(\mathbf{x})) = I_n \Rightarrow \delta_\mathcal{F} (\mathbf{|x|}) = 0.
$
\end{proposition}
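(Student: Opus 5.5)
Both implications reduce to one identity on edges. Take an edge $e=(u,v)$ with matched restriction maps, so that the Euclidean map is $x\mapsto M_{ve}x$ and the SPD map is $P\mapsto M_{ve}PM_{ve}^\top$ with $M_{ve}\in O(n)$. By the Naturality remark, $\mathcal{G}_{v\to e}(\Phi(x_v))=\Phi(M_{ve}x_v)$. Write $y_v:=M_{ve}x_v$ and $y_u:=M_{ue}x_u$ for the restricted vectors on $e$. Definition~\ref{def:spd-coboundary} then gives
\begin{equation*}
(\delta_\mathcal{G}\Phi(\mathbf{x}))_e=\exp\big(\log\Phi(y_v)-\log\Phi(y_u)\big).
\end{equation*}
Since $\exp$ is injective on $\mathrm{Sym}_n$ with $\exp(0)=I_n$, this equals $I_n$ if and only if $\log\Phi(y_v)=\log\Phi(y_u)$. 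Because $\log$ is a bijection $\mathrm{SPD}_n\to\mathrm{Sym}_n$, that in turn holds if and only if $\Phi(y_v)=\Phi(y_u)$, i.e.\ $y_vy_v^\top=y_uy_u^\top$. The whole proposition therefore reduces to the behaviour of the map $y\mapsto yy^\top$.

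For the forward direction, $\delta_\mathcal{F}\mathbf{x}=0$ means $y_v=y_u$ on every edge. Then trivially $y_vy_v^\top=y_uy_u^\top$, so every edge component of $\delta_\mathcal{G}\Phi(\mathbf{x})$ is $I_n$.

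For the converse, the key step is the standard fact that $yy^\top=zz^\top$ iff $z=\pm y$. If $y=0$, the trace gives $\|z\|^2=0$. Otherwise apply both sides to $y$ to get $\|y\|^2y=(z^\top y)z$, so $z$ is parallel to $y$, and comparing traces forces the scalar to be $\pm1$. Hence on each edge $M_{ve}x_v=\pm M_{ue}x_u$. The sign ambiguity is exactly why the statement is phrased with $|\mathbf{x}|$.

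The main obstacle is interpreting $|\mathbf{x}|$ so that the conclusion is actually true. Coordinatewise absolute value only commutes with $M\in O(n)$ when $M$ is a signed permutation. The converse therefore holds cleanly in two cases: when the restriction maps are signed permutations (for instance diagonal $\pm1$ or identity, as in the connection-Laplacian setting), or when $|\mathbf{x}|$ is read as the class of $\mathbf{x}$ modulo the sign action $x_v\mapsto\pm x_v$, i.e.\ the sign-quotient $\mathbb{R}^n/\{\pm1\}$, on which $\Phi$ is injective. Under either reading, $y_v=\pm y_u$ yields $\delta_\mathcal{F}(|\mathbf{x}|)=0$; for general orthogonal maps, applying entrywise absolute values to $M_{ve}x_v=\pm M_{ue}x_u$ does not give this. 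In the write-up I would state this interpretation explicitly and present the converse as global-section consistency up to edgewise sign flips. This is the honest content of the claim, and it shows that $\Phi$ identifies $\pm$-related sections.
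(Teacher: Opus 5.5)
Your argument follows the paper's own route. You use the edgewise identity $\mathcal{G}_{v\to e}(\Phi(x_v))=\Phi(M_{ve}x_v)$, injectivity of $\log$ and $\exp$, and uniqueness of a rank-one factor up to sign. The forward direction and the reduction of the converse to $M_{ve}x_v=\pm M_{ue}x_u$ are correct, and you prove the rank-one fact that the paper only asserts. You are also right that the paper's final line (``this induces $\delta_\mathcal{F}(|\mathbf{x}|)=0$'') is not justified for general orthogonal maps under the entrywise reading of $|\mathbf{x}|$. Your sign-quotient interpretation is the one under which the converse is exactly what has been proved, and it matches the $\mathbb{Z}_2$ reading in Remark~\ref{4.14}.

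One part of your repair is wrong: the entrywise reading does \emph{not} hold for signed permutations.
\begin{itemize}
\item \emph{Why the claim fails.} Entrywise absolute value satisfies $|Mx|=M|x|$ for all $x$ only when $M$ is an unsigned permutation matrix. Taking $x=e_j$ forces every entry of $M$ to be nonnegative, and a nonnegative orthogonal matrix is a permutation. For a signed permutation $M=DP$ you get $|Mx|=P|x|$, which differs from $DP|x|$ in general.
\item \emph{Counterexample.} Take $n=1$, $M_{ue}=1$, $M_{ve}=-1$, $x_u=1$, $x_v=-1$. Then $\delta_\mathcal{F}\mathbf{x}=0$, so this is even a genuine section. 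Yet $M_{ve}|x_v|-M_{ue}|x_u|=-2$. So the connection-Laplacian $\pm1$ setting you cite is precisely where the entrywise reading breaks.
\item \emph{The condition actually needed.} On each edge, the relative transport $T_e:=M_{ve}^{\top}M_{ue}$ must be a permutation matrix. Then $M_{ve}x_v=\pm M_{ue}x_u$ gives $x_v=\pm T_e x_u$, hence $|x_v|=T_e|x_u|$, which is $M_{ve}|x_v|=M_{ue}|x_u|$. This condition is not about the individual maps: equal arbitrary rotations $M_{ve}=M_{ue}$ satisfy it, while the signed pair above does not.
\end{itemize}
Either drop the signed-permutation case or replace it by the $T_e$ condition. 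Keep the sign-quotient formulation, consistency up to edgewise sign flips, as the main statement of the converse.
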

\begin{remark}\label{4.14}

    For graph with $\mathbb{Z}_2$-gauge ambiguity, especially  trees, the feedback equation $\delta_\mathcal{F} (\mathbf{|x|}) = 0$ reduces to $\delta_\mathcal{F} \mathbf{x} = 0$, meaning the image of global section feedback lies in global sections.  This shows SPD-sheaves preserve geometric capacity by quotienting out line-bundle frustrations that Euclidean sheaves cannot process. Moreover, $\ker \delta_\mathcal{G}$ is "larger" than $\ker\delta_\mathcal{F}$, as formalized below.

\end{remark}

\begin{theorem}[Strict Generalization]\label{thm:strict}
Let $\mathcal{F}$ be a Euclidean cellular sheaf and $\mathcal{G}$ its associated
SPD-valued sheaf induced by the embedding $\Phi$, with corresponding Laplacians $\mathbf{L}_{\mathcal{F}}$ and $\mathbf{L}_{\mathcal{G}}$. Then $\Phi$ preserves global sections:
\begin{equation}
    \Phi\!\left(\ker \mathbf{L}_\mathcal{F}\right)
    \subsetneq
    \ker \mathbf{L}_\mathcal{G}.
\end{equation}

That is, the SPD sheaf admits strictly more global sections than the Euclidean sheaf, illustrated in Figure~\ref{fig:theory}.
\end{theorem}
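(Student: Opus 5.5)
The proof splits into the inclusion and its strictness.

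\textbf{Inclusion.} We first reduce both kernels to coboundary kernels. In the Euclidean case, $\ker \mathbf{L}_\mathcal{F} = \ker \delta_\mathcal{F}$ is standard. In the SPD case, Proposition~\ref{prop:spd-hodge} gives $\ker \mathbf{L}_\mathcal{G} = \ker \delta_\mathcal{G}$, where the kernel means the cochains sent to the identity $I_n$, the neutral element of $(\mathrm{SPD}_n,\odot)$. The inclusion $\Phi(\ker \mathbf{L}_\mathcal{F}) \subseteq \ker \mathbf{L}_\mathcal{G}$ is then exactly the first half of Proposition~\ref{prop:kernel}.

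For completeness I would recheck that step directly. Take $x\in\ker\delta_\mathcal{F}$ and an edge $e=(u,v)$, so $M_{ve}x_v = M_{ue}x_u$. The Remark on naturality gives $\mathcal{G}_{v\to e}\Phi(x_v) = \Phi(M_{ve}x_v)$. The two logarithms in Definition~\ref{def:spd-coboundary} therefore coincide, and $(\delta_\mathcal{G}\Phi(x))_e = \exp(0) = I_n$.

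\textbf{Strictness.} I will exhibit an element of $\ker\delta_\mathcal{G}$ that is not of the form $\Phi(x)$ for any $0$-cochain $x$. The obstruction to use is rank: every $\Phi_\varepsilon(x_v) = x_vx_v^\top + \varepsilon I_n$ has eigenvalue $\varepsilon$ with multiplicity at least $n-1$. So for $n\ge 2$ it suffices to find a global section of $\mathcal{G}$ with some stalk value lacking this spectral shape.

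The construction uses the Lie-algebra picture. Since $\log(MPM^\top) = M(\log P)M^\top$ for orthogonal $M$, we have $\sigma\in\ker\delta_\mathcal{G}$ if and only if the symmetric matrices $S_v = \log\sigma_v$ satisfy $M_{ve}S_vM_{ve}^\top = M_{ue}S_uM_{ue}^\top$ on every edge. The simplest instance is the constant section $S_v = \lambda I_n$, i.e.\ $\sigma_v = e^\lambda I_n$. This satisfies all edge constraints for every choice of orthogonal restriction maps, because $M(\lambda I)M^\top = \lambda I$. For $\lambda \ne \log\varepsilon$, the matrix $e^\lambda I_n$ has all eigenvalues equal to $e^\lambda\neq\varepsilon$. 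When $n\ge 2$ this is not of the form $xx^\top+\varepsilon I_n$, since that would require at least $n-1$ eigenvalues equal to $\varepsilon$. A cleaner witness with respect to rank is any section with $S_v$ having $n$ distinct eigenvalues. On a connected graph one obtains such a section by choosing $S_{v_0}$ and transporting it along a spanning tree. That transport is consistent on cycles exactly when the holonomy commutes with $S_{v_0}$, and the scalar choice always meets this condition.

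\textbf{Main obstacle.} I expect the delicate part to be the precise claim of strictness. The statement must hold for an arbitrary Euclidean sheaf, including one whose $\ker\delta_\mathcal{F}$ is trivial. That case is actually easier: $\Phi(\ker\mathbf{L}_\mathcal{F}) = \{\varepsilon I_n\}$, which is a single point, while $\ker\mathbf{L}_\mathcal{G}$ contains the whole family $e^\lambda I_n$ for $\lambda\in\mathbb{R}$.

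The proof also needs two assumptions made explicit. The first is $n\ge 2$, or at least that the identity of the scalar family distinguishes the sets. For $n=1$, $\Phi$ maps onto $[\varepsilon,\infty)$ while constant sections $c$ with $c<\varepsilon$ lie outside the image, so strictness still holds. The second is that $\varepsilon$ is fixed. With these assumptions, the constant-scalar witness handles all cases uniformly, and I would present that witness rather than relying on holonomy.
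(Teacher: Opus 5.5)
Your inclusion step matches the paper's: you identify both Laplacian kernels with coboundary kernels and apply the forward half of the kernel-correspondence proposition. Your strictness argument, however, takes a genuinely different route.

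The paper proves strictness only under an extra hypothesis that does not appear in the theorem, namely that the holonomy of $(G,\mathcal{F})$ is trivial. It then takes a ``constant'' section $X_i\equiv P$, which should be read as $P$ transported in a trivializing gauge, i.e.\ your spanning-tree construction. It chooses $P$ with more than two distinct eigenvalues, which cannot equal $xx^\top+\varepsilon I_n$ because the latter has spectrum contained in $\{\|x\|^2+\varepsilon,\varepsilon\}$. This also needs $n\ge 3$.

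Your witness $\sigma_v=e^{\lambda}I_n$ is fixed by every orthogonal congruence. So it lies in $\ker\delta_{\mathcal{G}}\subseteq\ker\mathbf{L}_{\mathcal{G}}$ for arbitrary restriction maps and arbitrary holonomy. The multiplicity-of-$\varepsilon$ obstruction keeps it out of the image of $\Phi$ for all $n\ge 2$, and choosing $e^{\lambda}<\varepsilon$ handles $n=1$. You therefore prove the statement as written, and you never use the nontrivial direction of the Hodge-type proposition.

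What the paper's witness buys is qualitative rather than logical. Its extra sections are anisotropic matrices with full spectral content, which is the ``second-order'' phenomenon the authors want to exhibit and which ties into their holonomy-invariant description of $\ker\mathbf{L}_{\mathcal{G}}$. Your isotropic witness certifies strictness only set-theoretically. Your observation that a distinct-eigenvalue witness exists exactly when every holonomy element commutes with $S_{v_0}$ is the correct general form of the paper's argument, and leading with the scalar witness is the right call.
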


The strict inclusion can be understood quantitatively through a comparison of their discrete indices. We show that the enlargement of the harmonic space for SPD-valued sheaves is governed by an explicit index jump.

\begin{remark}[Representational Capacity] By sheaf-theoretic Hodge decomposition, $\ker \mathbf{L}_{\mathcal{F}}$ corresponds to global sections. For standard graph Laplacians on connected graphs, this space is typically one-dimensional, leading to over-smoothing at steady states. Sheaf neural networks
mitigate this collapse via non-trivial restriction maps. Moreover, Theorem~\ref{thm:strict} shows that SPD sheaves strictly expand the kernel beyond Euclidean sheaves. This implies that while Euclidean and SPD frameworks resist over-smoothing, SPD sheaves preserve a fundamentally richer space of global sections. Consequently, the signal maintained at depth encodes complex second-order geometric configurations that vector-valued sheaves cannot fundamentally represent.

\end{remark}

\section{Model Architecture}


We instantiate SPD sheaf theory as a neural architecture. This applies to any domain where SPD matrices arise naturally (e.g., EEG covariance matrices; see Appendix~\ref{app:EEG}). We demonstrate on molecular property prediction, lifting 3D coordinates to SPD matrices via geometry-aware embedding. Our design follows from the theoretical development: geometric and semantic information possess different mathematical structures and should be processed accordingly.

\subsection{Dual-Stream Design}

Molecular graphs encode two distinct information types:
\begin{itemize}
    \item \textbf{Geometric structure}: 3D atomic coordinates define directional relationships naturally captured by SPD matrices encoding local geometric structure.
    \item \textbf{Chemical semantics}: Atom types are discrete, categorical quantities represented in Euclidean space.
\end{itemize}

This motivates a dual-stream architecture where a \emph{geometric stream} processes coordinates via SPD sheaf convolution, while a \emph{semantic stream} handles chemical features via standard graph neural networks as shown in Figure~\ref{fig:pipeline} bottom.

\noindent\textbf{Coordinate-to-SPD lifting.} 
Given atomic coordinates $\{p_v\}_{v \in V}$, we compute centered displacement vectors $u_v = p_v - \bar{p}$ where $\bar{p}$ is the molecular centroid and $\hat{u}_v = \frac{u_v}{\|u_v\| + \epsilon}$, then lift to initial effectively rank-1 SPD matrices:
\begin{equation}
X_v^{(0)} = \hat{u}_v \hat{u}_v^\top + \varepsilon I_3 \in \mathrm{SPD}_3,
\end{equation}
To enable learnable geometric transformations while preserving SPD structure, we apply a 
transformation: $\tilde{X}_v^{(\ell)} = Q_v^{(\ell)} X_v^{(\ell)} Q_v^{(\ell)\top}$,
where $Q_v^{(\ell)} \in O(3)$ is learnable.


\noindent\textbf{SPD sheaf convolution.} The geometric stream applies the SPD sheaf Laplacian with Lie group updates. Given node representations $X_v^{(\ell)} \in \mathrm{SPD}_3$, the 
update rule is:
\begin{equation}\label{sheaf}
    X_v^{(\ell+1)} = X_v^{(\ell)} \odot \left(\mathbf{L}^{(\ell)}_\mathcal{F} \tilde{X}^{(\ell)}\right)_v,
\end{equation}
where the Laplacian term follows Definition~\ref{def:spd-laplacian} with learned restriction maps $M_{ve}^{(\ell)}, M_{ue}^{(\ell)} \in O(3)$.
Specifically, adjacent node features are concatenated and passed through an MLP, then antisymmetrized to obtain skew-symmetric matrices $S$, which are converted to orthogonal matrices via the scaled Cayley transform $M = (I - S/2)^{-1}(I + S/2)$,
ensuring exact orthogonality:
$M_{ue}^{(\ell)} = \text{Cayley}\left(\text{MLP}([h_u^{(\ell)} \| h_v^{(\ell)}])\right)$.
This design allows the restriction maps to adapt to local chemical environments while maintaining the isometry property required by Proposition~\ref{prop:isometry_restriction}. A TgReEig nonlinearity~\citep{DBLP:conf/pkdd/ZhaoLRSTT23} is applied after each update.

\noindent\textbf{Semantic stream.}
Chemical features are processed via GraphSAGE~\citep{hamilton2017inductive} with mean aggregation:
\vspace{-1em}
\[
h_v^{(\ell+1)} = \sigma\left( W^{(\ell)} \cdot \mathrm{CONCAT}\big(h_v^{(\ell)}, \mathrm{MEAN}_{u \in \mathcal{N}(v)} h_u^{(\ell)}\big) \right).
\]

\subsection{Cross-Modal Interaction}
While the streams process different modalities, molecular properties depend on their joint information. We introduce asymmetric cross-modal interaction where geometry modulates semantics: $    h_v^{(\ell)} \leftarrow h_v^{(\ell)} + \alpha_v \cdot W_\xi \mathrm{vec}(\log X_v^{(\ell)})$,

where $\alpha_v = \sigma(\mathrm{MLP}([W_\mathrm{spd} [\mathrm{vec}(\log X_v^{(\ell)}) \|W_{\mathrm{feat}} h_v^{(\ell)}])$ is a learned attention weight. This preserves geometric integrity while allowing geometry to inform semantic processing.
\subsection{Pooling and Prediction}

\noindent\textbf{SPD pooling.}
For graph-level prediction, we aggregate node SPD matrices via power-Euclidean pooling:
\begin{equation}
    \bar{X}_G = \left( \frac{1}{|V|} \sum_{v \in V} (X_v^{(L)})^\theta \right)^{1/\theta},
\end{equation}
$\theta \in (0, 1]$. This pooling is computed in closed form and guarantees the output remains in $\mathrm{SPD}_n$ \cite{chen2025riemgcp}. 


\noindent\textbf{Output head: fusion and prediction.}
The pooled SPD matrix is vectorized as $g = \mathrm{vec}(\log \bar{X}_G)$.
The semantic stream produces $h_G$ via mean pooling. We fuse via factorized bilinear pooling: $    z = (Ug) \circ (Vh_G)$, $\hat{y} = \mathrm{MLP}(\mathrm{BN}([g \| h_G \| z]))$.
Further implementation details are provided in Appendix \ref{app:implementation}.

\section{Experiments and Results}
\label{sec:experiments}

Our experiments are designed to validate the two main empirical claims from the introduction: (1) SPD representations provide substantial gains on geometry-sensitive tasks by capturing second-order information (\hyperref[par:challenge1]{Challenge 1}), and (2) our extension of sheaf neural network to SPD manifolds enables edge-specific transformations on second-order features, providing depth robustness (\hyperref[par:challenge1]{Challenge 2}). We further analyze the geometric mechanism underlying second-order emergence in SPD convolution.

\subsection{Experimental Setup}

We evaluate on 7 molecular property prediction benchmarks from 
MoleculeNet~\citep{wu2018moleculenet}, comparing against SOTA
methods spanning message-passing networks, pre-training approaches, 
geometry-aware models, and transformers (Table~\ref{tab:main_results}). 
Details are provided in Appendix~\ref{app:experimental}.

\subsection{Main Results}



\begin{table}[t]
\caption{Comparison with methods on MoleculeNet benchmarks. We report ROC-AUC (\%) with standard deviations as subscripts. \textbf{Bold} indicates the best result; \underline{underline} indicates second-best.}
\label{tab:main_results}
\centering
\resizebox{\columnwidth}{!}{%
\begin{tabular}{lccccccc}
\toprule
\textbf{Model} & \textbf{BACE} & \textbf{BBBP} & \textbf{ClinTox} & \textbf{SIDER} & \textbf{Tox21} & \textbf{HIV} & \textbf{MUV} \\
\midrule
No. molecules & 1,513 & 2,039 & 1,478 & 1,427 & 7,831 & 41,127 & 93,808 \\
No. avg atoms & 65 & 46 & 50.58 & 65 & 36 & 46 & 43 \\
No. tasks & 1 & 1 & 2 & 27 & 12 & 1 & 17 \\
\midrule
D-MPNN & $80.9_{0.6}$ & $71.0_{0.3}$ & $90.6_{0.7}$ & $57.0_{0.7}$ & $75.9_{0.7}$ & $77.1_{0.5}$ & $78.6_{1.4}$ \\
AttentiveFP & $78.4_{2.2}$ & $66.3_{1.8}$ & $84.7_{0.3}$ & $60.6_{3.2}$ & $78.1_{0.5}$ & $75.7_{1.4}$ & $78.6_{1.5}$ \\
N-GramRF & $77.9_{1.5}$ & $69.7_{0.6}$ & $77.5_{4.0}$ & $66.8_{0.7}$ & $74.3_{0.9}$ & $77.2_{0.4}$ & $76.9_{0.2}$ \\
N-GramXGB & $79.1_{1.3}$ & $69.1_{0.8}$ & $87.5_{2.7}$ & $65.5_{0.7}$ & $75.8_{0.9}$ & $78.7_{0.4}$ & $74.8_{0.2}$ \\
PretrainGNN & $84.5_{0.7}$ & $68.7_{1.3}$ & $72.6_{1.5}$ & $62.7_{0.8}$ & $78.1_{0.6}$ & $79.9_{0.7}$ & $81.3_{2.1}$ \\
GROVE$_{\text{base}}$ & $82.1_{0.7}$ & $70.0_{0.1}$ & $81.2_{3.0}$ & $64.8_{0.6}$ & $74.3_{0.1}$ & $62.5_{0.9}$ & $67.3_{1.8}$ \\
GROVE$_{\text{large}}$ & $81.0_{1.4}$ & $69.5_{0.1}$ & $76.2_{3.7}$ & $65.4_{0.1}$ & $73.5_{0.1}$ & $68.2_{1.1}$ & $67.3_{1.8}$ \\
GraphMVP & $81.2_{0.9}$ & $72.4_{1.6}$ & $79.1_{2.8}$ & $63.9_{1.2}$ & $75.9_{0.5}$ & $77.0_{1.2}$ & $77.7_{0.6}$ \\
MolCLR & $82.4_{0.9}$ & $72.2_{2.1}$ & $91.2_{3.5}$ & $58.9_{1.4}$ & $75.0_{0.2}$ & $78.1_{0.5}$ & $79.6_{1.9}$ \\
GEM & $85.6_{1.1}$ & $72.4_{0.4}$ & $90.1_{1.3}$ & $67.2_{0.4}$ & $78.1_{0.1}$ & $80.6_{0.9}$ & $81.7_{0.5}$ \\
Mol-GDL & $86.3_{1.9}$ & $72.8_{1.9}$ & ${96.6}_{0.2}$ & ${83.1}_{0.2}$ & $79.4_{0.5}$ & $80.8_{0.7}$ & $67.5_{1.4}$ \\
Uni-Mol & $85.7_{0.2}$ & $72.9_{0.6}$ & $91.9_{1.8}$ & $65.9_{1.3}$ & $79.6_{0.5}$ & ${80.8}_{0.7}$ & ${82.1}_{1.3}$ \\
SMPT & $\underline{87.3}_{1.5}$ & $\underline{73.4}_{0.3}$ & ${92.7}_{0.2}$ & ${67.6}_{5.0}$ & $\underline{79.7}_{0.1}$ & $\textbf{81.2}_{0.1}$ & $\underline{82.2}_{0.8}$ \\
SchNet & $73.7_{0.9}$ & $70.8_{1.4}$ & $97.8_{0.3}$ & $83.1_{0.0}$ & $72.4_{0.2}$ & $68.8_{0.1}$ &  $68.2_{0.6}$ \\
EGNN & $79.4_{2.3}$ & $72.1_{2.0}$ & $\underline{98.6}_{0.5}$ & $\underline{83.2}_{0.1}$ & $74.5_{0.3}$ & $70.4_{0.2}$ & $68.6_{0.5}$ \\
\midrule

\textbf{SPD-Sheaf (Ours)} & $\textbf{89.0}_{1.4}$ & $\textbf{77.4}_{2.7}$ & $\textbf{99.4}_{0.2}$ & $\textbf{84.3}_{0.4}$ & $\textbf{80.1}_{0.7}$ & $\underline{80.9}_{0.7}$ & $\textbf{82.3}_{1.4}$ \\
\bottomrule
\end{tabular}%
}
\vspace{-10pt}

\end{table}
 \noindent\textbf{Geometry-sensitive improvements.}
 SPD-Sheaf model achieves the best performance on 6 out of 7 benchmarks and ranks second on the remaining one (Table~\ref{tab:main_results}). Substantial improvements appear on geometry-sensitive tasks.
 On BBBP, we observe a 4.0\% absolute improvement (77.4\% vs.\ 73.4\%), suggesting that shape-dependent properties benefit from second-order geometric information captured by SPD sheaves.
 On ClinTox 
 , we achieve 99.4\% ROC-AUC, improving upon the geometry-aware baseline EGNN by 0.8\%. The consistent gains on BACE (+1.7\%) and SIDER (+1.1\%) further corroborate that protein-ligand binding and drug side-effect prediction are geometry-sensitive tasks where second-order representations provide meaningful benefits.

\noindent\textbf{Comparison with pre-training methods.}
On HIV, SPD-Sheaf ranks second (80.9\% vs.\ 81.2\%).
SMPT relies on 
pre-training on millions of molecules, whereas SPD-Sheaf is trained from scratch. Achieving competitive performance without pre-training suggests that the inductive bias of SPD geometry can partially offset the lack of external data.

\subsection{Ablation Studies: Disentangling Contributions}

Table~\ref{tab:ablation_geom} systematically isolates the contributions of SPD geometry and sheaf structure.

\noindent\textbf{SPD geometry matters.}
Comparing SPD-Sheaf with Euclidean SheafNN, we observe consistent improvements across all datasets (+3.3\% on BBBP, +1.5\% on BACE, +7.0\% on MUV), consistent with Theorem~\ref{thm:strict}: SPD-valued stalks provide enhanced representational capacity.


\noindent\textbf{Sheaf structure matters.}
SPD Sheaf also outperforms both standard GNNs (+4.6\% over GCN on BBBP) and methods using SPD geometry without sheaf structure (+6.1\% over SPD4GNN, +3.1\% over HyperbolicGCN on MUV), confirming that edge-specific transformations provide complementary benefits (Remark~\ref{rmk:challenge2}). We further verify this mechanistically via a controlled graph perturbation analysis (Appendix~\ref{app:perturbation}).


\noindent\textbf{Combined benefit.}
Neither component alone matches the full model, supporting our thesis that SPD geometry and sheaf structure address distinct challenges: the former captures richer second-order information, while the latter enables principled edge-specific propagation.

\subsection{Architectural Ablations}

Table~\ref{tab:ablation_arch} examines the dual-stream design. Both streams are essential: removing the semantic stream causes severe degradation (up to 23.9\% on MUV), confirming geometry alone is insufficient. 
Removing the geometric stream also degrades performance, particularly on BACE ($-6.9\%$) and MUV ($-12.5\%$), 
demonstrating the importance of second-order geometric representations. 
Cross-modal interaction improves most datasets, validating the design choice of allowing geometry to modulate semantic learning. Note that the dual-stream design is tailored for molecular data, where chemical semantics complement geometric structure. In Appendix~\ref{app:EEG}, the SPD-Sheaf stream alone achieves competitive performance on EEG classification, confirming the standalone effectiveness of our geometric framework.

\begin{table}[t]
\caption{Disentangling SPD and sheaf contributions. All variants use the same semantic stream and fusion mechanism; only the geometric module differs. ROC-AUC (\%) with standard deviations.$^{\dagger}$Averaged over 3 runs due to computational cost.}
\label{tab:ablation_geom}
\centering
\resizebox{\columnwidth}{!}{%
\begin{tabular}{lccccccc}
\toprule
\textbf{Geometric Module} & \textbf{BBBP} & \textbf{BACE} & \textbf{ClinTox} & \textbf{SIDER} & \textbf{Tox21} & \textbf{HIV} & \textbf{MUV} \\
\midrule
GCN (baseline) & $73.9_{1.9}$ & $87.0_{1.0}$ & $98.4_{0.2}$ & $82.6_{0.1}$ & $78.7_{0.3}$ & $80.4_{1.0}$ & $75.7_{3.0}$ \\
\midrule
\multicolumn{8}{l}{\textit{Effect of sheaf structure (Euclidean stalks):}} \\
Euclidean SheafNN~\cite{barbero2022sheaf} & $74.1_{1.8}$ & $87.5_{0.9}$ & $99.0_{0.4}$ & $83.9_{0.2}$ & $79.4_{0.6}$ & $80.9_{0.7}$ & $75.3_{1.1}^{\dagger}$ \\
\midrule
\multicolumn{8}{l}{\textit{Effect of SPD geometry (no sheaf structure):}} \\
SPD4GNN \cite{DBLP:conf/pkdd/ZhaoLRSTT23} & $71.9_{2.8}$ & $87.5_{0.9}$ & $98.7_{0.4}$ & $83.9_{0.3}$ & $79.3_{0.8}$ & $79.6_{1.2}$ & $76.2_{4.6}$ \\
HyperbolicGCN \cite{chami2019hyperbolic} & $73.3_{2.3}$ & $87.3_{0.8}$ & $98.6_{0.6}$ & $84.1_{0.2}$ & $79.3_{0.9}$ & $80.7_{1.2}$ & $79.2_{2.6}$ \\
\midrule
\multicolumn{8}{l}{\textit{Combined: SPD geometry + sheaf structure:}} \\
\textbf{SPD Sheaf (Ours)} & $\mathbf{77.4}_{2.7}$ & $\mathbf{89.0}_{1.4}$ & $\mathbf{99.4}_{0.2}$ & $\mathbf{84.3}_{0.4}$ & $\mathbf{80.1}_{0.7}$ & $\mathbf{80.9}_{0.7}$ & $\mathbf{82.3}_{1.4}$ \\
\bottomrule
\end{tabular}%
}
\vspace{-10pt}
\end{table}

\begin{table}[t]
\caption{Ablation on architectural components. $\Delta$ indicates change from the full model.}
\label{tab:ablation_arch}
\centering
\resizebox{\columnwidth}{!}{%
\begin{tabular}{lccccccc}
\toprule
\textbf{Variant} & \textbf{BBBP} & \textbf{BACE} & \textbf{ClinTox} & \textbf{SIDER} & \textbf{Tox21} & \textbf{HIV} & \textbf{MUV} \\
\midrule
\textbf{Full model} & $77.4_{2.7}$ & $89.0_{1.4}$ & $99.4_{0.2}$ & $84.3_{0.4}$ & $80.1_{0.7}$ & $80.9_{0.7}$ & $82.3_{1.4}$ \\
\midrule
w/o Semantic stream & $71.8_{2.0}$ & $71.6_{4.1}$ & $97.9_{0.6}$ & $83.2_{0.5}$ & $68.0_{0.7}$ & $66.2_{1.5}$ & $58.4_{0.9}$ \\
\quad $\Delta$ & $-5.6$ & $-17.4$ & $-1.5$ & $-1.1$ & $-12.1$ & $-14.7$ & $-23.9$ \\
\midrule
w/o Geometric stream & $73.3_{3.2}$ & $82.1_{1.4}$ & $98.1_{0.7}$ & $84.3_{0.2}$ & $76.3_{0.3}$ & $74.8_{1.4}$ & $69.8_{0.5}$ \\
\quad $\Delta$ & $-4.1$ & $-6.9$ & $-1.3$ & $0.0$ & $-3.8$ & $-6.1$ & $-12.5$ \\
\midrule
w/o Cross-modal interaction & $75.2_{0.6}$ & $86.9_{1.3}$ & $99.4_{0.3}$ & $84.3_{0.3}$ & $79.8_{0.8}$ & $80.3_{0.6}$ & $80.8_{2.0}$ \\
\quad $\Delta$ & $-2.2$ & $-2.1$ & $0.0$ & $0.0$ & $-0.3$ & $-0.6$ & $-1.5$ \\
\bottomrule
\end{tabular}%
}
\vspace{-10pt}
\end{table}

\subsection{Analysis}
\noindent\textbf{Depth Robustness.}\label{par:robustness}
A key advantage of sheaf methods 
is resistance to over-smoothing. Figure~\ref{fig:depth} evaluates this on BACE with varying network depths. GCN exhibits severe performance degradation: ROC-AUC drops from 87.0\% at 2 layers to 51.1\% at 32 layers, a decline of over 35\% confirming that standard message passing suffers from representation collapse at depth. In contrast, both sheaf methods maintain remarkable stability. At 32 layers, SPD-Sheaf retains 97\% of its peak performance (86.7\% vs.\ 89.0\%) while GCN retains only 59\%. We further quantify this behavior using normalized Dirichlet energy and Mean Average Distance across BACE, Cora, and Texas at depths up to 128 (Appendix~\ref{app:oversmoothing}).

Two observations merit discussion. 
First, both sheaf formulations effectively resist over-smoothing, confirming that edge-specific transformations preserve node distinctiveness (Remark~\ref{rmk:challenge2}).
Second, SPD outperforms Euclidean sheaf across most depths, and notably achieves its best performance at just 2 layers (89.0\%), 
suggesting that SPD geometry enables more efficient information propagation and extracts richer geometric representations in fewer layers.

\begin{figure}[t]
    \centering
    \includegraphics[width=0.9\columnwidth]{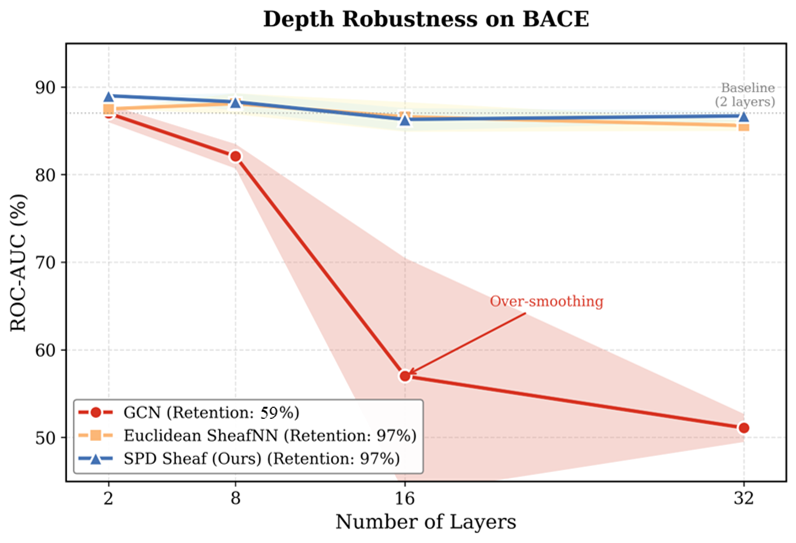} 
    \caption{Both sheaf methods resist over-smoothing, with SPD-Sheaf achieving the best performance across most depths.}
    \label{fig:depth}
    \vspace{-10pt}
\end{figure}

\noindent\textbf{From boundary to interior: Second-order emergence.}\label{par:geometric}

A key finding of our work is that SPD sheaf convolution automatically lifts first-order directional inputs into second-order geometric representations (Figure~\ref{fig:pipeline}, top). We track the \emph{effective rank} of node-level SPD matrices across layers, defined as $\mathrm{erank}(X) = \exp(H(\boldsymbol{\lambda}))$, where $H(\boldsymbol{\lambda}) = -\sum_i \bar{\lambda}_i \log \bar{\lambda}_i$ is the normalized eigenvalue entropy. 
For a $3 \times 3$ SPD matrix with eigenvalues $(\lambda, \varepsilon, \varepsilon)$ where $\varepsilon \ll \lambda$, $\mathrm{erank} \approx 1$; for equal eigenvalues, $\mathrm{erank} = 3$.

Table~\ref{tab:emergence} confirms this phenomenon across all datasets. Initial SPD matrices $\hat{u}_v\hat{u}_v^\top + \varepsilon I$ have effective rank $\approx 1.00$, encoding essentially first-order directional information. After sheaf propagation, effective rank increases substantially (1.72--2.28), with the second eigenvalue $\lambda_2$ growing from near-zero to values exceeding 1.1.

This observation admits a geometric interpretation. The initialization $\hat{u}_v\hat{u}_v^\top + \varepsilon I$ lies near the \emph{boundary} of the SPD manifold---nearly rank-deficient matrices occupying a lower-dimensional submanifold. Through sheaf convolution via the Laplacian, these boundary representations evolve into the \emph{interior} of $\mathrm{SPD}_n$, becoming full-rank matrices that span the entire manifold. Intuitively, the sheaf Laplacian aggregates directional information from neighbors; when these directions are non-collinear, the aggregation naturally produces matrices with multiple significant eigenvalues, encoding local geometric structure.

This mechanism explains the empirical gains on geometry-sensitive tasks: by evolving from boundary (erank-1, first-order) to interior (full-rank, second-order), SPD sheaf representations capture richer structural information including bond angle distributions, ring planarity, local anisotropy, that vector-valued methods fundamentally cannot express.



\begin{table}[t]
\caption{SPD sheaf convolution transforms effectively rank-1 initializations into higher rank matrices across all datasets.
}
\label{tab:emergence}
\centering
\resizebox{\columnwidth}{!}{%
\begin{tabular}{lcccccc}
\toprule
\textbf{Dataset} & \textbf{Layers} & \textbf{Initial Erank} & \textbf{Final Erank} & \textbf{$\Delta$ Erank} & \textbf{$\lambda_2$: Init$\to$Final} \\
\midrule
BACE & 2 & 1.00 & 2.28 & +1.28 & $0.00 \to 1.16$ \\
BBBP & 2 & 1.00 & 2.16 & +1.16 & $0.00 \to 1.15$ \\
SIDER & 2 & 1.00 & 2.22 & +1.22 & $0.00 \to 1.14$ \\
ClinTox & 5 & 1.00 & 1.83 & +0.83 & $0.00 \to 1.32$ \\
HIV & 5 & 1.00 & 1.87 & +0.87 & $0.00 \to 1.42$ \\
Tox21 & 7 & 1.00 & 1.75 & +0.75 & $0.00 \to 1.46$ \\
MUV & 7 & 1.00 & 1.72 & +0.72 & $0.00 \to 1.55$ \\
\bottomrule
\end{tabular}%
}
\vspace{-10pt}
\end{table}


\section{Conclusions}
\label{sec:conclusion}


We developed the first sheaf neural network on SPD manifolds. By exploiting the Lie group structure of $\mathrm{SPD}_n$, we constructed sheaf operators that respect non-Euclidean geometry while preserving classical algebraic structure. 
Our theoretical analysis 
establishes that SPD sheaves strictly generalize Euclidean sheaves 
(Theorem~\ref{thm:strict}).

Empirically, our dual-stream architecture achieves SOTA on 
6 out of 7 MoleculeNet benchmarks. We observe that sheaf convolution transforms erank-1 initializations into full-rank representations, encoding local geometric structure.
Ablations confirm complementary benefits from SPD geometry and sheaf structure, while 97\% performance retention at 32 layers demonstrates depth robustness.

\subsection{Limitation \& Future Work}
\textbf{Sign ambiguity of SPD embeddings.}
One limitation of the SPD embedding
$\Phi_\varepsilon(v)=vv^\top+\varepsilon I$ is that it removes sign
information, since \(vv^\top=(-v)(-v)^\top\). Thus, when the data
carries directional information and \(v\) and \(-v\) correspond to
physically distinct states, this sign loss may be undesirable.
However, whether this ambiguity is a limitation depends on the task.
For sign-insensitive targets, the same ambiguity can instead act as
a useful inductive bias. As discussed in Remark~\ref{4.14}, SPD sheaves
naturally quotient out the \(\mathbb{Z}_2\) sign ambiguity, eliminating
the line-bundle frustrations that can obstruct global sections in
Euclidean sheaves and thereby preserving geometric information that
Euclidean sheaves may lose due to sign conflicts. Theorem~\ref{thm:strict}
formalizes this advantage: the global-section space of the SPD sheaf
strictly contains that of the corresponding Euclidean sheaf.

This is precisely the regime considered in our molecular property
prediction experiments. Molecular properties are determined primarily
by relative atomic geometry and intrinsic second-order structure, not
by an arbitrary absolute sign of an embedding vector. Quotienting out
the sign degree of freedom therefore removes a confounding factor that
is irrelevant to the target properties, allowing the model to focus on
sign-invariant geometric structure. This interpretation is consistent
with our empirical results, where SPD-Sheaf achieves state-of-the-art
performance on six out of seven MoleculeNet benchmarks.

\textbf{From invariance to equivariance.}
A second limitation concerns the symmetry properties of the current architecture: SPDSheaf is invariant by design. While this suffices for scalar molecular property prediction, where targets are intrinsic to the molecule, tasks such as force field prediction, stress tensor estimation, and molecular dynamics demand equivariant outputs, a regime where SOTA methods (e.g., MACE, NequIP) build equivariance directly into the architecture. Our framework, however, admits a natural path to this extension. Since the Lie group operations $\log$ and $\exp$ commute with orthogonal conjugation, the only requirement is that restriction maps transform equivariantly, i.e., $M_{ve} \mapsto R M_{ve} R^\top$ under a global rotation $R$. One concrete construction is to decompose $X_v = Q_v \Lambda_v Q_v^\top$ and define $M_{ve} = Q_v \hat{M} Q_v^\top$, where $\hat{M} \in O(n)$ is a learnable orthogonal matrix predicted from invariant features, yielding equivariance without modifying the core SPD sheaf convolution. We regard the development of a natively equivariant SPD-Sheaf, and its evaluation on geometry-sensitive equivariant tasks, as an important direction for future work.

\section*{Acknowledgements}
This work was supported in part by the Singapore Ministry of Education 
Academic Research Fund Tier~1 (RG16/23) and Tier~2 (MOE-T2EP20125-0004, 
MOE-T2EP50223-0036).

It was further supported by the National Key R\&D Program of China 
(Grant No.~2024YFA1013202), the National Natural Science Foundation of 
China (Grant No.~11221091), the Fundamental Research Funds for the 
Central Universities, and the Nankai Zhide Foundation.

A.W.\ thanks the Hector Fellow Academy for support.

\section*{Impact Statement}
This paper presents work whose goal is to advance the field of  Machine Learning. There are many potential societal consequences  of our work, none of which we feel must be specifically highlighted here.

\nocite{langley00}

\bibliography{icml2026}
\bibliographystyle{icml2026}

\newpage
\appendix
\onecolumn

\section*{Appendix}

\section{Proofs}
\label{Proofs of Theoretical Results}\label{app:proofs}

\subsection{Geodesic completeness of Lemma ~\ref{lem:spd_common_properties}}

A Riemannian manifold $(\mathcal M,g)$ is said to be \emph{(geodesically) complete}
if every maximal geodesic
$\gamma:I\to\mathcal M$
is defined on the entire real line, i.e.,
$I = (-\infty,\infty)$.
Equivalently, $M$ is (geodesically) complete if for all points $p \in M$, 
the exponential map at $p$ is defined on $T_pM$, the entire tangent space at $p$.

\subsection{Proof of Proposition~\ref{prop:isometry_restriction} (Restriction Maps are Isometries)}

\begin{proof}
For any $X, Y \in \mathrm{SPD}_n$ and $M \in O(n)$:
\begin{align*}
    d_{\mathrm{AI}}^2(MXM^\top, MYM^\top) 
    &= \left\| \log\left( (MXM^\top)^{-1/2} (MYM^\top) (MXM^\top)^{-1/2} \right) \right\|_F^2 \\
    &= \left\| \log\left( M^{-\top} X^{-1/2} M^{-1} \cdot MYM^\top \cdot M^{-\top} X^{-1/2} M^{-1} \right) \right\|_F^2 \\
    &= \left\| \log\left( M X^{-1/2} Y X^{-1/2} M^\top \right) \right\|_F^2 \\
    &= \left\| M \log(X^{-1/2} Y X^{-1/2}) M^\top \right\|_F^2 \\
    &= \left\| \log(X^{-1/2} Y X^{-1/2}) \right\|_F^2 \\
    &= d_{\mathrm{AI}}^2(X, Y),
\end{align*}

\begin{align*}
d_{\mathrm{LE}}(MXM^\top,\,MYM^\top)
&= \bigl\|\log(MXM^\top)-\log(MYM^\top)\bigr\|_F \\
&= \bigl\|M(\log X-\log Y)M^\top\bigr\|_F \\
&= \bigl\|\log X-\log Y\bigr\|_F \\
&= d_{\mathrm{LE}}(X,Y).
\end{align*}
where we used $M^\top = M^{-1}$ (orthogonality), the identity $\log(MAM^\top) = M(\log A)M^\top$ for symmetric $A$, and the orthogonal invariance of the Frobenius norm: $\|MAM^\top\|_F = \|A\|_F$.

\end{proof}

\subsection{Proof of Proposition~\ref{prop:coboundary-linear} (Linearity of Coboundary)}

\begin{proof}
We first establish that restriction maps are group homomorphisms with respect to the Lie group operation $\odot$, i.e. For any $X, Y \in \mathrm{SPD}_n$ and $M \in O(n)$:
$
    \mathcal{F}_{v \to e}(X \odot Y) = \mathcal{F}_{v \to e}(X) \odot \mathcal{F}_{v \to e}(Y).
$

\begin{align*}
    \mathcal{F}_{v \to e}(X \odot Y) 
    &= M \exp(\log X + \log Y) M^\top \\
    &= M \left( \sum_{k=0}^{\infty} \frac{(\log X + \log Y)^k}{k!} \right) M^\top \\
    &= \sum_{k=0}^{\infty} \frac{M(\log X + \log Y)^k M^\top}{k!} \\
    &= \sum_{k=0}^{\infty} \frac{(M(\log X + \log Y)M^\top)^k}{k!} \\
    &= \exp(M(\log X + \log Y)M^\top) \\
    &= \exp(M \log X M^\top + M \log Y M^\top).
\end{align*}
On the other hand:
\begin{align*}
    \mathcal{F}_{v \to e}(X) \odot \mathcal{F}_{v \to e}(Y) 
    &= \exp(\log(MXM^\top) + \log(MYM^\top)) \\
    &= \exp(M \log X M^\top + M \log Y M^\top).
\end{align*}
Thus, $
    \mathcal{F}_{v \to e}(X \odot Y) = \mathcal{F}_{v \to e}(X) \odot \mathcal{F}_{v \to e}(Y).
$ We now use this fact to verify the linearity of $\delta$ under $\odot$.

\begin{align*}
    (\delta(\sigma \odot \tau))_e 
    &= \exp\left( \log(\mathcal{F}_{v \to e}(\sigma_v \odot \tau_v)) - \log(\mathcal{F}_{u \to e}(\sigma_u \odot \tau_u)) \right) \\
    &= \exp\left( \log(\mathcal{F}_{v \to e}(\sigma_v) \odot \mathcal{F}_{v \to e}(\tau_v)) - \log(\mathcal{F}_{u \to e}(\sigma_u) \odot \mathcal{F}_{u \to e}(\tau_u)) \right) \\
    &= \exp\Big( \log(\mathcal{F}_{v \to e}(\sigma_v)) + \log(\mathcal{F}_{v \to e}(\tau_v)) \\
    &\qquad\qquad - \log(\mathcal{F}_{u \to e}(\sigma_u)) - \log(\mathcal{F}_{u \to e}(\tau_u)) \Big) \\
    &= \exp\Big( [\log(\mathcal{F}_{v \to e}(\sigma_v)) - \log(\mathcal{F}_{u \to e}(\sigma_u))] \\
    &\qquad\qquad + [\log(\mathcal{F}_{v \to e}(\tau_v)) - \log(\mathcal{F}_{u \to e}(\tau_u))] \Big) \\
    &= (\delta \sigma)_e \odot (\delta \tau)_e.
\end{align*}
\end{proof}

\subsection{Proof of Proposition~\ref{prop:adjoint} (Adjoint Operator)}

\begin{proof}

\medskip\noindent\textbf{Overview.}
The goal of this appendix is to construct an adjoint coboundary operator
$\delta^\top:C^1\to C^0$ satisfying the Green identity
\[\label{eq:green}
\langle \delta \sigma,\tau\rangle_{C^1}
=
\langle \sigma,\delta^\top\tau\rangle_{C^0}.
\]
Since the space $\mathrm{SPD}_n$ is not a vector space, it does not admit a
canonical bilinear pairing on its points; consequently, the adjoint operator
is not intrinsic for $\mathrm{SPD}_n$-valued cochains and must be defined
relative to a chosen pairing.

We therefore proceed in three steps:
(i) we define a bilinear pairing on $\mathrm{SPD}_n$ by transporting a fixed
pairing on the tangent space $T_{I_n}\mathrm{SPD}_n$ via the global logarithm map;
(ii) under this pairing, we construct an adjoint operator $\delta^\top$ as the
(unique) operator satisfying the Green identity \eqref{eq:green};
and (iii) we derive the explicit form of $\delta^\top$.

\medskip\noindent\textbf{Log-induced pairing and the meaning of $g_{I_n}$.}
Let $(M,g)$ be a complete Riemannian manifold and fix a base point $p\in M$.
Assume that the Riemannian exponential map at $p$,
\[
\exp_p : T_pM \to M,
\]
is a global diffeomorphism, so that its inverse $\log_p:M\to T_pM$ is well-defined
on all of $M$.

Denote by $g_p$ the inner product on the tangent space $T_pM$ induced by the Riemannian metric $g$ at the point $p$.
Using the inner product $g_p$ on $T_pM$, we define a \emph{log-induced pairing}
between points $x,y\in M$ by
\[
\langle x,y\rangle_p \;:=\; g_p\!\bigl(\log_p(x),\,\log_p(y)\bigr).
\]
Moreover, the logarithm map induces a base-point dependent ``addition'' on $M$:
\[
x \odot_p y \;:=\; \exp_p\!\bigl(\log_p(x)+\log_p(y)\bigr),
\]
under which the pairing is bilinear in the sense that
$\langle x\odot_p x',y\rangle_p=\langle x,y\rangle_p+\langle x',y\rangle_p$
and similarly for the second argument. 

Importantly, both $\langle\cdot,\cdot\rangle_p$
and $\odot_p$ depend on the choice of the base point $p$. We therefore regard it as a base-point-dependent structure, rather than the
standard pointwise Riemannian inner products $g_x$ defined on the tangent spaces
$T_xM$.

In our setting $M=\mathrm{SPD}_n$, we take $p=I_n$ so that the resulting
pairing is compatible with the matrix logarithm used in Lie group computations.
This explains the notation $g_{I_n}$ used in the main text: it denotes the fixed
bilinear pairing on $T_{I_n}\mathrm{SPD}_n$ transported to $\mathrm{SPD}_n$
via the global logarithm map.\

Since the construction of the pairing depends on a choice of basepoint, it is
natural to ask how the pairing varies when the basepoint is perturbed. The following proposition characterizes this dependence
precisely.

\begin{proposition}
For any $p\in U$ and $V\in T_pM$, the covariant derivative of the vector field
$X(p)=\log_p(x)$ in direction $V$ is given by
\[
\nabla_V X = J'(0),
\]
where $J(t)$ is the unique Jacobi field along the geodesic
$\sigma(t)=\exp_p(tX(p))$ satisfying the boundary conditions
\[
J(0)=V,
\qquad
J(1)=0.
\]
\end{proposition}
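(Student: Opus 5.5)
Fix $x\in M$ and work in a normal neighborhood $U$ of $p$ where $\log_{(\cdot)}(x)$ is smooth. The strategy is to realize $X(q)=\log_q(x)$ as the initial velocity of a geodesic from $q$ to $x$, vary the base point, and identify the covariant derivative with the initial derivative of the resulting Jacobi field.

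\textbf{Step 1: the geodesic variation.} Take a smooth curve $c(s)$ with $c(0)=p$ and $c'(0)=V$, and set
\[
\Gamma(s,t)=\exp_{c(s)}\bigl(t\,X(c(s))\bigr),\qquad t\in[0,1].
\]
By definition of $\log$, each $t\mapsto\Gamma(s,t)$ is a geodesic. It satisfies
\[
\Gamma(s,0)=c(s),\qquad \Gamma(s,1)=x \ \text{ for all } s,\qquad \partial_t\Gamma(s,0)=X(c(s)).
\]
Hence $J(t):=\partial_s\Gamma(0,t)$ is a Jacobi field along $\sigma(t)=\exp_p(tX(p))$. It has $J(0)=c'(0)=V$, and $J(1)=0$ because the endpoint is fixed.

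\textbf{Step 2: symmetry lemma.} The Levi-Civita connection is torsion-free, so
\[
\tfrac{D}{ds}\partial_t\Gamma=\tfrac{D}{dt}\partial_s\Gamma.
\]
Evaluating at $(0,0)$ gives
\[
\nabla_V X=\tfrac{D}{ds}\Big|_{s=0}X(c(s))=\tfrac{D}{ds}\partial_t\Gamma(0,0)=\tfrac{D}{dt}J(0)=J'(0).
\]
The left side is the covariant derivative of the vector field $X$ along $c$. It depends only on $V$, so the result is independent of the chosen curve.

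\textbf{Step 3: uniqueness.} The boundary-value problem $J(0)=V$, $J(1)=0$ has a unique solution exactly when $p$ and $x=\sigma(1)$ are not conjugate along $\sigma$. Inside $U$ (a normal/convex neighborhood, or all of $\mathrm{SPD}_n$, which is Hadamard under both metrics so there are no conjugate points) the linear map $J\mapsto(J(0),J(1))$ on the $2\dim M$-dimensional space of Jacobi fields is injective, and hence bijective. Therefore the field $J$ built in Step 1 is the unique one in the statement.

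\textbf{Main obstacle.} The delicate point is the smoothness of $q\mapsto\log_q(x)$, which Steps 1–2 need in order to differentiate. I would obtain it from the inverse function theorem applied to $(q,v)\mapsto(q,\exp_q v)$. This map is a local diffeomorphism precisely when $d(\exp_q)_v$ is nonsingular, which is the same no-conjugate-point condition used in Step 3.
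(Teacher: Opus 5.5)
Your proposal is correct and follows the paper's proof essentially step for step. It uses the same geodesic variation $\exp_{c(s)}(t\,X(c(s)))$ with fixed endpoint $x$, the same torsion-free symmetry $\frac{D}{ds}\partial_t\Gamma=\frac{D}{dt}\partial_s\Gamma$ evaluated at $(0,0)$, and the same boundary values $J(0)=V$, $J(1)=0$; your Step 3 and smoothness remark add the uniqueness of $J$ and the differentiability of $q\mapsto\log_q(x)$ (via the no-conjugate-point condition, automatic on $\mathrm{SPD}_n$), which the paper asserts without proof.
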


\begin{proof}
Let $\gamma:(-\epsilon,\epsilon)\to U$ be a smooth curve such that
$\gamma(0)=p$ and $\gamma'(0)=V$. Consider the smooth variation of geodesics
$\alpha:(-\epsilon,\epsilon)\times[0,1]\to M$ defined by
\[
\alpha(s,t)=\exp_{\gamma(s)}\!\big(t\,X(\gamma(s))\big).
\]
For each fixed $s$, the curve $t\mapsto \alpha(s,t)$ is the geodesic connecting
$\gamma(s)$ to $x$, hence the variation has boundary values
\[
\alpha(s,0)=\gamma(s),
\qquad
\alpha(s,1)=x.
\]
By construction, the initial velocity of this geodesic is
\[
X(\gamma(s))=\frac{\partial \alpha}{\partial t}(s,0).
\]
Therefore, the covariant derivative of $X$ along $V=\gamma'(0)$ can be written as
\[
\nabla_V X
=
\left.\frac{D}{ds}X(\gamma(s))\right|_{s=0}
=
\left.\frac{D}{ds}\frac{\partial\alpha}{\partial t}(s,t)\right|_{s=0,\,t=0}.
\]
Since the Levi--Civita connection is torsion-free, the covariant derivatives
commute for this two-parameter variation:
\[
\frac{D}{ds}\frac{\partial\alpha}{\partial t}
=
\frac{D}{dt}\frac{\partial\alpha}{\partial s}.
\]
Consequently,
\[
\nabla_V X
=
\left.\frac{D}{dt}\frac{\partial\alpha}{\partial s}(0,t)\right|_{t=0}.
\]
Define the variational vector field along $\sigma(t):=\alpha(0,t)$ by
\[
J(t):=\frac{\partial\alpha}{\partial s}(0,t).
\]
Because $\alpha$ is a variation through geodesics, $J$ is a Jacobi field and
satisfies the Jacobi equation
\[
J''(t) + R\!\big(J(t),\dot{\sigma}(t)\big)\dot{\sigma}(t)=0.
\]
Substituting the definition of $J$ yields
\[
\nabla_V X=\left.\frac{D}{dt}J(t)\right|_{t=0}=J'(0).
\]
Finally, the boundary conditions follow from the variation boundaries:
\[
J(0)=\frac{\partial\alpha}{\partial s}(0,0)=\gamma'(0)=V,
\qquad
J(1)=\frac{\partial\alpha}{\partial s}(0,1)=0,
\]
where $J(1)=0$ holds because $\alpha(s,1)=x$ is constant in $s$.
\end{proof}

In summary, the first-order change of the ``pair'' is explicitly encoded by the Jacobi
field $J$ by viewing $\langle x,y\rangle_p$ as function of $p\in M$,
\[
V\langle x,y\rangle_p=Vg_p\left(X(p),Y(p)\right)=\left(g\left(\nabla_VX,Y\right)+g\left(X,\nabla_VY\right)\right)(p)
\]
Hence it depends on the geometric information carried by the metric $g$.

\medskip\noindent\textbf{Adjoint and Green identity.}
We begin by establishing the linearity of the restriction maps in logarithmic coordinates. Let the restriction map be given by orthogonal congruence
\[
\mathcal F_{v\to e}(X) = M_{ve} X M_{ve}^\top,
\qquad M_{ve}\in O(n).
\]

Since orthogonal conjugation preserves the spectral decomposition,
the matrix logarithm commutes with $\mathcal F_{v\to e}$, yielding
\[
\log(\mathcal F_{v\to e}(X))
= M_{ve}(\log X)M_{ve}^\top
=: A_{ve}(\log X),
\]
where $A_{ve}:\mathrm{Sym}(n)\to\mathrm{Sym}(n)$ is a linear operator.
\\

And this implies that the coboundary operator is linear in logarithmic coordinates.
The coboundary operator is defined edgewise by
\[
(\delta \sigma)_e
=
\exp\!\Big(
\log(\mathcal F_{v\to e}(\sigma_v))
-
\log(\mathcal F_{u\to e}(\sigma_u))
\Big).
\]
Taking the logarithm gives
\[
\log(\delta \sigma)_e
=
A_{v\to e}\log(\sigma_v)
-
A_{u\to e}\log(\sigma_u).
\]
Collecting all vertex-wise logarithms into a vector
$z:=\log \sigma\in\bigoplus_{v\in V}\mathrm{Sym}(n)$,
there exists a linear operator
\[
B:\bigoplus_{v\in V}\mathrm{Sym}(n)\to\bigoplus_{e\in E}\mathrm{Sym}(n)
\]
such that
\[
\log(\delta \sigma)=B\,\log \sigma.
\]

With respect to the fixed bilinear pairing on the tangent space
$T_{I_n}\mathrm{SPD}_n$, the linear operator $B$ admits an adjoint $B^\top$
satisfying
\[
\langle B\,\log \sigma,\log\tau\rangle
=
\langle \log \sigma, B^\top \log\tau\rangle,
\]
where $B^\top$ denotes the adjoint of $B$ with respect to the chosen pairing
on $T_{I_n}\mathrm{SPD}_n$.
This motivates defining the adjoint coboundary operator
$\delta^\top:C^1\to C^0$ via
\[
\log(\delta^\top\tau):=B^\top\log\tau.
\]
Consequently, we obtain
\[
\langle \log(\delta \sigma),\log\tau\rangle
=
\langle \log \sigma,\log(\delta^\top\tau)\rangle,
\]
which yields the Green identity
\[
\langle \delta \sigma,\tau\rangle_{C^1}
=
\langle \sigma,\delta^\top\tau\rangle_{C^0}.
\]

\medskip\noindent\textbf{Derive the explicit form of $\delta^\top$.}
Now, we derive the adjoint by requiring $\langle \delta x, y \rangle = \langle x, \delta^\top y \rangle$ for all 0-cochains $x$ and 1-cochains $y$. We denote in this part that $i$ for index of vertices and $j$ for index of edges. For the case of vertex $i$ belongs to edge $j$ (denoted by $i\to j$), denote
\[
I(i,j)=\begin{array}{ll}
   0  &  \text{i is the start of arrow}\\
    1 & \text{i is the end of arrow}
\end{array}
\]
We have 
\[
\big\langle \delta(x_1,\ldots,x_n), (y_1,\ldots,y_m) \big\rangle
= \big\langle (x_1,\ldots,x_n), \delta^\top (y_1,\ldots,y_m) \big\rangle.
\]

\[\Rightarrow
\sum_i \langle x_i, \delta^\top(y)_i \rangle
=
\sum_j \langle y_j, \delta(x)_j \rangle.
\]

Let
$x = (I_d,\ldots,x_i,\ldots,I_d)$, then
\[
\delta(x)_j =
\begin{cases}
\mathcal F_{i\to j}x_i = M_{ij} x_i M_{ij}^\top,
& I(i,j)=0, \\[6pt]
{(\mathcal F_{i\to j}x_i)}^{-1} = M_{ij} x_i^{-1} M_{ij}^\top,
& I(i,j)=1.
\end{cases}
\]

Thus
\[
\langle x_i, \delta^\top(y)_i \rangle
= \sum_j \langle y_j, \delta(x)_j \rangle
= \sum_{i \to j}
\big\langle y_j,\,
M_{ij} x_i^{(-1)^{I(i,j)}} M_{ij}^\top
\big\rangle.
\]

Equivalently,
\[
g_{I_n}\!\left( \log x_i,\; \log \delta^\top(y)_i \right)
=
\sum_{i\to j}
g_{I_n}\!\left( \log y_j,\; \log \delta(x)_j \right).
\]

Let $\{e_i\}$ be an orthonormal frame at $T_{I_d}\mathrm{SPD}_n$,
and write
$x_i = \exp(e_k).$

Then
\begin{align*}
\log \delta^\top(y)_i
&=\sum_k g_{I_n}\!\left( \log \delta^\top(y)_i,\; e_k \right) e_k.\\
&=\sum_k \sum_{i\to j}
g_{I_n}\!\left( \log y_j,\; \log \delta(x)_j \right) e_k\\
&=\sum_k \sum_{i\to j}
g_{I_n}\!\left(
\log y_j,\;
M_{ij} (-1)^{I(i,j)} e_k M_{ij}^\top
\right) e_k\\
&=\sum_{i\to j} \sum_{k}
(-1)^{I(i,j)}
g_{I_n}\!\left(
\log y_j,\;
M_{ij} e_k M_{ij}^\top
\right) e_k.\\
&=\sum_{i\to j} \sum_{k}
(-1)^{I(i,j)}
M_{ij}^\top
\Big[
g_{I_n}\!\left(
\log y_j,\;
M_{ij} e_k M_{ij}^\top
\right)
M_{ij} e_k M_{ij}^\top\Big]M_{ij}.\\
&=\sum_{i\to j}
M_{ij}^\top
\left(
\sum_k (-1)^{I(i,j)}
g_{I_n}\!\left(
\log y_j,\;
M_{ij} e_k M_{ij}^\top
\right)
M_{ij} e_k M_{ij}^\top
\right)
M_{ij}.\\
&=
\sum_{i\to j}
M_{ij}^\top
\Big[
(-1)^{I(i,j)} \log y_j
\Big]
M_{ij}.
\end{align*}

Finally,
\[
    \delta^\top (y)_i = \exp\sum_{i \to j} (-1)^{I(i,j)} M_{ij}^\top \log y_j M_{ij} .
\]
\end{proof}

\subsection{Proof of Proposition~\ref{prop:spd-hodge} (Hodge-type Decomposition for SPD-valued Sheaves)}
\begin{proof}
Since we have already clarified that the pairing $\langle \cdot,\cdot\rangle$ defined on
$\mathrm{SPD}_n \times \mathrm{SPD}_n$
on cochains is compatible with the group structure on $\mathrm{SPD}_n$
and is positive definite in the sense that
\[
\langle x,x\rangle \ge 0,
\qquad
\langle x,x\rangle = 0 \iff x = e
\]

where $e$ denotes the identity element of $\mathrm{SPD}_n$.
We claim that the Laplacian $\mathbf{L}$
satisfies a property analogous to the Hodge decomposition in sheaf cohomology.

More precisely, we consider the following complex:
\[
0 \longrightarrow C^0(\mathcal{F})
\xrightarrow{\;\delta\;}
C^1(\mathcal{F})
\longrightarrow 0.
\]

One can define the associated cohomology groups as
\[
H^0 := \ker \delta,
\qquad
H^1 := C^1(\mathcal{F}) / \operatorname{Im} \delta.
\]

We also define the Laplacian operator as
\[
\mathbf{L} : C^0(\mathcal{F}) \longrightarrow C^0(\mathcal{F}).
\]

The inclusion $\ker \delta \subseteq \ker \mathbf{L}$ is immediate: if $\delta x = e$,
then $\mathbf{L}x=\delta^\top(\delta x)=\delta^\top(e)=e$.

Conversely, let $x\in C^0(\mathcal{F})$ satisfy $\mathbf{L}x=e$.
Taking the pairing with $x$ and using the defining property of $\delta^\top$,
we obtain
\[
\langle \delta x, \delta x\rangle_{C^1}
=
\langle \delta^\top \delta x, x\rangle_{C^0}
=
\langle \mathbf{L}x, x\rangle_{C^0}
=0
\]

By positive definiteness of the pairing on $C^1$, this implies $\delta x = e$, i.e. $x\in \ker \delta$ and therefore $\ker \mathbf{L} \subseteq \ker \delta$.

This result provides a geometric interpretation: the limit of the heat flow on an SPD-valued sheaf coincides with its coboundary-free
component, analogous to the classical interpretation of the Laplacian
on Riemannian manifolds.
\end{proof}

\subsection{Proof of Proposition~\ref{prop:pullback} (Restriction Maps as Pullbacks)}

\begin{proof}
Let $\Sigma \in \mathrm{SPD}_n$ define an inner product on $\mathbb{R}^n$ by $\langle x, y \rangle_\Sigma = x^\top \Sigma y$. The pullback of this inner product under a linear map $F: \mathbb{R}^n \to \mathbb{R}^n$ is defined by:
\[
    (F^* \Sigma)(x, y) = 
    \Sigma(F_*x, F_*y)=
    \Sigma(Fx, Fy) = (Fx)^\top \Sigma (Fy) = x^\top (F^\top \Sigma F) y.
\]

For the Euclidean restriction map $\mathcal{F}_{v \to e} = M_{ve}$ where $M_{ve} \in O(n)$:
\[
    (\mathcal{F}_{v \to e})^* \Sigma(x,y) = x^\top (M_{ve}^\top \Sigma M_{ve})y = 
    (M_{ve}x)^\top \Sigma
    (M_{ve}y)=
\Sigma(M_{ve}x,M_{ve}y)
\]

However, the SPD restriction map is defined as $\mathcal{G}_{v \to e}(\Sigma) = M_{ve} \Sigma M_{ve}^\top$. Let us verify the pullback relationship more carefully:
\begin{align*}
    \mathcal{G}_{v \to e}(\Sigma)(x, y) 
    &= x^\top (M_{ve} \Sigma M_{ve}^\top) y \\
    &= (M_{ve}^\top x)^\top \Sigma (M_{ve}^\top y) \\
    &= \Sigma(M_{ve}^\top x, M_{ve}^\top y).
\end{align*}

This shows that $\mathcal{G}_{v \to e} = (\mathcal{F}^T_{v \to e})^*$, which for orthogonal $\mathcal{F}_{v \to e}$ gives the stated pullback relationship.
\end{proof}

\subsection{Proof of Proposition~\ref{prop:kernel} (Kernel Correspondence)}

\begin{proof}
Let $\mathcal{F}$ be a Euclidean sheaf and $\mathcal{G}$ an SPD sheaf with matched restriction maps. 
For every edge $e = (i, j)$, we show the equivalence step by step.

$(\Rightarrow)$ Suppose $\delta_\mathcal{F}(\mathbf{x})_e = 0$. Then:
\begin{align*}
    \mathcal{F}_{i \to e} x_i - \mathcal{F}_{j \to e} x_j &= 0 \\
    M_{ie} x_i &= M_{je} x_j.
\end{align*}

Now consider the SPD cochain $\Phi(\mathbf{x})$ with $\Phi(x_i) = x_i x_i^\top + \varepsilon I$:
\begin{align*}
    \mathcal{G}_{i \to e}(\Phi(x_i)) 
    &= M_{ie}(x_i x_i^\top + \varepsilon I)M_{ie}^\top \\
    &= M_{ie} x_i (M_{ie} x_i)^\top + \varepsilon I \\
    &= M_{je} x_j (M_{je} x_j)^\top + \varepsilon I \quad (\text{using } M_{ie} x_i = M_{je} x_j) \\
    &= M_{je}(x_j x_j^\top + \varepsilon I)M_{je}^\top \\
    &= \mathcal{G}_{j \to e}(\Phi(x_j)).
\end{align*}

Therefore:
\[
    \delta_\mathcal{G}(\Phi(\mathbf{x}))_e = \exp(\log \mathcal{G}_{i \to e}(\Phi(x_i)) - \log \mathcal{G}_{j \to e}(\Phi(x_j))) = \exp(0) = I_n.
\]

$(\Leftarrow)$ Suppose $\delta_\mathcal{G}(\Phi(\mathbf{x}))_e = I_n$. Then:
\begin{align*}
    \exp(\log \mathcal{G}_{i \to e}(\Phi(x_i)) - \log \mathcal{G}_{j \to e}(\Phi(x_j))) &= I_n \\
    \log \mathcal{G}_{i \to e}(\Phi(x_i)) &= \log \mathcal{G}_{j \to e}(\Phi(x_j)) \\
    \mathcal{G}_{i \to e}(\Phi(x_i)) &= \mathcal{G}_{j \to e}(\Phi(x_j)).
\end{align*}

This means:
\[
    M_{ie}(x_i x_i^\top + \varepsilon I)M_{ie}^\top = M_{je}(x_j x_j^\top + \varepsilon I)M_{je}^\top.
\]

Let $y_i = M_{ie} x_i$ and $y_j = M_{je} x_j$. Then:
\[
    y_i y_i^\top + \varepsilon I = y_j y_j^\top + \varepsilon I \implies y_i y_i^\top = y_j y_j^\top.
\]

For rank-1 matrices, $y_i y_i^\top = y_j y_j^\top$ implies $y_i = \pm y_j$. This induces the feedback result $\delta_\mathcal{F}(|x|)=0$.
\end{proof}

\subsection{Proof of Theorem~\ref{thm:strict} (Strict Extension)}

 \begin{proof}
 We prove the theorem in two parts: (1) $\Phi(\ker \delta_\mathcal{F}) \subseteq \ker \delta_\mathcal{G}$, and (2) the inclusion is strict.

\paragraph{Part~1: Inclusion.} Let $\mathcal{F}$ be a Euclidean sheaf and $\mathcal{G}$ an SPD sheaf with matched restriction maps.
By Proposition~\ref{prop:kernel}, for any 
$\mathbf{x} \in \ker \delta_{\mathcal F}$, we have
\[
\delta_{\mathcal F}\mathbf{x} = 0
\;\;\Longrightarrow\;\;
\delta_{\mathcal G}(\Phi(\mathbf{x}))_e = I_n
\quad \text{for all edges } e.
\]
By definition,
\[
\ker \delta_{\mathcal G}
=
\left\{
\sigma \;\middle|\;
\delta_{\mathcal G}(\sigma)_e = I_n \text{ for all } e
\right\},
\]

and therefore $\Phi(\mathbf{x}) \in \ker \delta_{\mathcal G}$.
This proves
\[
\Phi(\ker \delta_{\mathcal F}) \subseteq \ker \delta_{\mathcal G}.
\]
On the other hand, not every global section in $C^0(\mathcal F)$ lies in the
image of global sections in $C^0(\xi)$ under this correspondence.
This strict inclusion reflects an ``index jump'' in the discrete index theory
of sheaf-graphs.

\paragraph{Part~2: Strictness.}

\begin{lemma}[Strict Inclusion]
If the holonomy of $(G,\mathcal{F})$ is trivial, then
\[
\Phi\ker \mathbf{L}_{\mathcal{F}}
\subsetneq
 \bigl(\ker \mathbf{L}_{\mathcal{G}}\bigr).
\]
\end{lemma}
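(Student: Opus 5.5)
Since inclusion was already obtained in Part~1 (via Proposition~\ref{prop:kernel} and the identification $\ker\mathbf{L}=\ker\delta$ from Proposition~\ref{prop:spd-hodge}), the lemma only requires exhibiting one element of $\ker \mathbf{L}_{\mathcal G}$ that is not of the form $\Phi(\mathbf{x})$ with $\mathbf{x}\in\ker\mathbf{L}_{\mathcal F}$. I would work entirely in logarithmic coordinates: by the proof of Proposition~\ref{prop:adjoint}, $\log(\delta_{\mathcal G}\sigma)_e = M_{ve}(\log\sigma_v)M_{ve}^\top - M_{ue}(\log\sigma_u)M_{ue}^\top$, so $\sigma\in\ker\delta_{\mathcal G}$ iff $S_v:=\log\sigma_v\in\mathrm{Sym}_n$ satisfies $M_{ve}S_vM_{ve}^\top = M_{ue}S_uM_{ue}^\top$ on every edge. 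Trivial holonomy means the transport maps $T_{uv}:=M_{ve}^\top M_{ue}\in O(n)$ compose to the identity around every cycle. Fixing a base vertex $v_0$ and a spanning tree, I can define $R_v\in O(n)$ as the tree-path transport from $v_0$ to $v$; trivial holonomy makes $R_v$ path-independent, so $M_{ue}R_u = M_{ve}R_v$ on every edge (after choosing the $e$-frame consistently).

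\textbf{Construction of a non-image section.} Given any $S\in\mathrm{Sym}_n$, set $\sigma_v := R_v \exp(S) R_v^\top$. Since $\log$ commutes with orthogonal conjugation, $\log\mathcal G_{v\to e}(\sigma_v) = (M_{ve}R_v)S(M_{ve}R_v)^\top$, and the edge identity gives equality at both endpoints, so $\sigma\in\ker\delta_{\mathcal G}=\ker\mathbf{L}_{\mathcal G}$. This yields a copy of all of $\mathrm{SPD}_n$ (dimension $n(n+1)/2$) inside $\ker\mathbf{L}_{\mathcal G}$. By contrast, every element of $\Phi(\ker\mathbf{L}_{\mathcal F})$ has stalks of the form $x_vx_v^\top+\varepsilon I_n$, i.e.\ eigenvalues $(\varepsilon+\|x_v\|^2,\varepsilon,\dots,\varepsilon)$ with the eigenvalue $\varepsilon$ of multiplicity at least $n-1$. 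Taking $S=\mathrm{diag}(s_1,\dots,s_n)$ with pairwise distinct entries (for $n\ge 2$), $\sigma_{v_0}=\exp(S)$ has $n$ distinct eigenvalues, hence cannot equal $x x^\top+\varepsilon I_n$ for any $x$ and any $\varepsilon>0$. Thus $\sigma\notin\Phi(\ker\mathbf{L}_{\mathcal F})$, giving strictness. As an alternative witness, $\sigma\equiv$ transported $\varepsilon' I_n$ with $\varepsilon'\ne\varepsilon$ also works when $n\ge 2$, but the full-rank example better matches the second-order interpretation.

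\textbf{Main obstacle.} The delicate step is making ``trivial holonomy'' precise enough to produce the globally consistent frames $R_v$: I must check that $\mathcal F$ and $\mathcal G$ share the same $M_{ve}$, and that $T_{uv}$ is correctly oriented so that the cycle condition implies path independence on non-tree edges. I would handle this by proving $M_{ue}R_u=M_{ve}R_v$ first on tree edges by construction, then on each non-tree edge by closing the fundamental cycle and invoking trivial holonomy. A minor point is the degenerate case $n=1$, where $\mathrm{SPD}_1$ stalks are scalars; there I would note that positive scalars $c\neq \varepsilon + x^2$ still fail for $c<\varepsilon$ (take $\exp(S)<\varepsilon$), so the constant section $c$ is outside the image, which covers that case as well.
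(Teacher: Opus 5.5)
Your proof is correct and uses the paper's argument: exhibit a holonomy-trivialized ``constant'' global section of $\mathcal G$ whose stalk spectrum cannot lie in $\operatorname{Im}\Phi_\varepsilon$. Your frames $R_v$ with $M_{ue}R_u=M_{ve}R_v$ make precise a step the paper leaves implicit, since a literal constant section $X_v\equiv P$ lies in $\ker\delta_{\mathcal G}$ only if every edge transport $M_{ve}^\top M_{ue}$ commutes with $P$.

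One slip: for $n=2$, distinct eigenvalues do not exclude $\operatorname{Im}\Phi_\varepsilon$. For example, $\exp(S)=\mathrm{diag}(1+\varepsilon,\varepsilon)$ gives $\sigma_v=\Phi_\varepsilon(R_v e_1)$, with $e_1$ the first standard basis vector and $(R_v e_1)_v\in\ker\delta_{\mathcal F}$. So the distinct-eigenvalue witness needs $n\ge 3$, matching the paper's ``more than two distinct eigenvalues''. Your scalar witness $\varepsilon' I_n$ with $\varepsilon'\neq\varepsilon$ does cover $n\ge 2$. Because it is fixed by every orthogonal congruence, it is a global section of any SPD sheaf, so the trivial-holonomy hypothesis is not actually needed.
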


\begin{proof}
Under trivial holonomy, any constant section
$X_i \equiv P \in \mathrm{SPD}_n$
satisfies $\delta_{\mathcal{G}} X = I$. For any $X\in \operatorname{Im}(\Phi)$, i.e. $X=xx^T+\varepsilon I_n$ for some $x\in\mathbb{R}^n$, the eigenvalues of $X$ must be $|x|^2+\varepsilon$ and $\varepsilon$. If $P$ possesses more than two distinct eigenvalues, then
$P \notin \operatorname{Im}(\Phi)$,
yielding strict inclusion.
\end{proof}
\end{proof}

\section{Mathematical Interpretations}
\paragraph{Discrete Index Theorem}
\begin{definition}
The discrete index of a sheaf $(G,\mathcal{F})$ is defined as:
\[
\operatorname{Ind}(\mathcal{F})
:=
\dim \ker \delta_{\mathcal{F}}
-
\dim \ker \delta_{\mathcal{F}}^{\top}.
\]
\end{definition}

\begin{lemma}[Discrete Atiyah--Singer Analogue]
For a connected graph,
\[
\operatorname{Ind}(\mathcal{F})
=
(|V| - |E|)\,\dim \mathcal{F}.
\]
\end{lemma}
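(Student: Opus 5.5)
The plan is to reduce the identity to the rank--nullity theorem for the coboundary, viewed as a linear map between finite-dimensional cochain spaces. Throughout I read $\dim\mathcal{F}$ as the common dimension $d$ of all vertex and edge stalks, which is the standing convention of Definition~\ref{def:cellular-sheaf} as used in the paper (and equals $n$, or $n(n+1)/2$ in the SPD case below). With that convention,
\[
\dim C^0(G,\mathcal{F}) = |V|\,d, \qquad \dim C^1(G,\mathcal{F}) = |E|\,d .
\]

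\textbf{Euclidean case.} The coboundary $\delta_{\mathcal{F}}: C^0 \to C^1$ is linear, and $\delta_{\mathcal{F}}^\top: C^1\to C^0$ is its adjoint with respect to the inner products on the cochain spaces. Rank--nullity gives
\[
\dim\ker\delta_{\mathcal{F}} = |V|d - \operatorname{rank}\delta_{\mathcal{F}}, \qquad \dim\ker\delta_{\mathcal{F}}^\top = |E|d - \operatorname{rank}\delta_{\mathcal{F}}^\top .
\]
Since a linear map and its adjoint have equal rank (row rank equals column rank), subtracting the two identities cancels the rank terms. This yields $\operatorname{Ind}(\mathcal{F}) = (|V|-|E|)\,d$. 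The argument never uses connectedness, so that hypothesis is harmless but not needed. I would remark that connectedness only matters for identifying $\dim\ker\delta_{\mathcal F}$ separately, for example as $d$ for the constant sheaf.

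\textbf{SPD case.} I would transport everything through the global logarithm, as in the proof of Proposition~\ref{prop:adjoint}. There, $\log(\delta\sigma) = B\log\sigma$ and $\log(\delta^\top\tau) = B^\top\log\tau$, with $B:\bigoplus_{v}\mathrm{Sym}_n\to\bigoplus_e\mathrm{Sym}_n$ linear. Because $\log$ is a group isomorphism $(\mathrm{SPD}_n,\odot)\to(\mathrm{Sym}_n,+)$, the set $\ker\delta=\{\sigma:\delta\sigma=I_n\}$ corresponds exactly to $\ker B$. I would therefore define $\dim\ker\delta := \dim\ker B$, and likewise $\dim\ker\delta^\top := \dim\ker B^\top$. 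The Euclidean computation then applies verbatim with $d=\dim\mathrm{Sym}_n = n(n+1)/2$.

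The only real obstacle is this definitional one: giving meaning to ``$\dim$'' for the kernel of a map between non-linear spaces. The log chart resolves it, because it is a global diffeomorphism and a group isomorphism, so the kernels are genuine linear subspaces in log coordinates. Everything else is routine linear algebra.
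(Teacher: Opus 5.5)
Your proposal is correct and follows the same route as the paper, which likewise reduces the index to $\dim C^0 - \dim C^1$ by appealing to discrete ellipticity and Proposition~\ref{prop:spd-hodge}. Your rank--nullity argument with $\operatorname{rank}\delta = \operatorname{rank}\delta^\top$ supplies the step the paper leaves implicit, you are right that connectedness is not needed, and your log-chart treatment of the SPD case with $d = n(n+1)/2$ is consistent with the paper's Index Jump lemma.
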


\begin{proof}
The sheaf complex
\[
0 \longrightarrow C^0(\mathcal{F})
\xrightarrow{\;\delta_{\mathcal{F}}\;}
C^1(\mathcal{F})
\longrightarrow 0
\]
is elliptic in the discrete sense.
By Proposition~\ref{prop:spd-hodge}, 
\[
\operatorname{Ind}(\mathcal{F})
=
\dim C^0 - \dim C^1
=
(|V| - |E|)\,\dim \mathcal{F}.
\]
\end{proof}

\begin{lemma}[Index Jump]
\[
\operatorname{Ind}(\mathcal{G})
-
\operatorname{Ind}(\mathcal{F})
=
(|V| - |E|)\frac{n(n-1)}{2}.
\]
\end{lemma}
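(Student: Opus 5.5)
The plan is to reduce everything to linear algebra in logarithmic coordinates, apply the Discrete Atiyah--Singer Analogue to both sheaves, and subtract. The only genuinely new ingredient is checking that the SPD sheaf $\mathcal{G}$ fits the hypotheses of that lemma. Its stalk $\mathrm{SPD}_n$ is not a vector space, so $\dim\ker\delta_{\mathcal{G}}$ and $\dim\ker\delta_{\mathcal{G}}^{\top}$ first have to be given a meaning.

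\textbf{Step 1: linearize $\mathcal{G}$.} I would use the construction in the proof of Proposition~\ref{prop:adjoint}. There, $\log(\delta_{\mathcal{G}}\sigma)=B\log\sigma$ for a linear map
\[
B:\textstyle\bigoplus_{v\in V}\mathrm{Sym}_n\to\bigoplus_{e\in E}\mathrm{Sym}_n ,
\]
and $\log(\delta_{\mathcal{G}}^{\top}\tau)=B^{\top}\log\tau$.

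\textbf{Step 2: identify the kernels.} Since $\log:\mathrm{SPD}_n\to\mathrm{Sym}_n$ is a global diffeomorphism sending $I_n$ to $0$, we get
\[
\ker\delta_{\mathcal{G}}=\exp(\ker B),\qquad \ker\delta_{\mathcal{G}}^{\top}=\exp(\ker B^{\top}).
\]
Each is therefore an embedded submanifold, and in fact an abelian Lie subgroup under $\odot$. I would define its dimension as the manifold dimension, which equals $\dim\ker B$, respectively $\dim\ker B^{\top}$. With this definition, $\operatorname{Ind}(\mathcal{G})=\dim\ker B-\dim\ker B^{\top}$. This is precisely the index of an honest linear sheaf whose stalks are $\mathrm{Sym}_n$, of dimension $\tfrac{n(n+1)}{2}$, and whose restriction maps are $A_{ve}(S)=M_{ve}SM_{ve}^{\top}$.

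\textbf{Step 3: compute each index.} For a linear map $B:C^0\to C^1$, rank--nullity gives
\[
\dim\ker B-\dim\ker B^{\top}=(\dim C^0-\operatorname{rk}B)-(\dim C^1-\operatorname{rk}B)=\dim C^0-\dim C^1 .
\]
This reproduces the Discrete Atiyah--Singer Analogue; connectedness is not even needed. Applying it to both sheaves gives
\[
\operatorname{Ind}(\mathcal{F})=(|V|-|E|)\,n,\qquad \operatorname{Ind}(\mathcal{G})=(|V|-|E|)\,\tfrac{n(n+1)}{2},
\]
since $\mathcal{F}$ has stalk $\mathbb{R}^n$ and $\mathcal{G}$ has stalk $\mathrm{SPD}_n\cong\mathrm{Sym}_n$.

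\textbf{Step 4: subtract.} Using $\tfrac{n(n+1)}{2}-n=\tfrac{n(n-1)}{2}$,
\[
\operatorname{Ind}(\mathcal{G})-\operatorname{Ind}(\mathcal{F})=(|V|-|E|)\,\tfrac{n(n-1)}{2}.
\]

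The main obstacle is Step 2: justifying that $\dim$ of the nonlinear sets $\ker\delta_{\mathcal{G}}$ and $\ker\delta_{\mathcal{G}}^{\top}$ is well defined and equals the linear dimension. I would handle this by stating explicitly that the index of $\mathcal{G}$ is defined via the Lie-group structure $(\mathrm{SPD}_n,\odot)$, under which $\log$ is a group isomorphism onto $(\mathrm{Sym}_n,+)$. Under that convention the kernels are vector subspaces in disguise.

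I would also point out that the identity holds for any graph and any choice of orthogonal $M_{ve}$. In particular, the jump does not depend on the restriction maps being matched between $\mathcal{F}$ and $\mathcal{G}$.
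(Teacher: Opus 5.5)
Your proposal is correct and matches the paper's argument, which is left implicit: the Index Jump follows from applying the Discrete Atiyah--Singer Analogue to both sheaves, with stalk dimensions $n$ and $\tfrac{n(n+1)}{2}$, and subtracting. Your log-coordinate linearization and rank--nullity computation make explicit what the paper leaves unstated, namely the meaning of $\dim\ker\delta_{\mathcal{G}}$ and the identity $\operatorname{Ind}=\dim C^0-\dim C^1$; they also correctly show that connectedness and matched restriction maps are not needed.
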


This index jump explains the enlargement of
$\ker \mathbf{L}_{\mathcal{G}}$
compared to
$\ker \mathbf{L}_\mathcal{F}$.

\begin{corollary}[Exact Kernel Characterization]
Let
$\rho : \pi_1(G) \to O(n)$
denote the holonomy representation, where $\pi_1(G)$ denotes for the fundamental group formed by all cycles on $G$. Then
\[
\ker \mathbf{L}_{\mathcal{G}}
\simeq
\bigl\{
A \in \mathrm{Sym}(n)
\;\big|\;
\rho(\gamma) A \rho(\gamma)^{\top} = A,
\ \forall \gamma \in \pi_1(G)
\bigr\}.
\]
\end{corollary}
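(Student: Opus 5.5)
The plan is to pass to logarithmic coordinates, where the SPD sheaf becomes a linear sheaf on $\mathrm{Sym}(n)$, and then do the standard parallel-transport argument for global sections of a locally constant sheaf on a connected graph.

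\textbf{Step 1 (linearization).} By Proposition~\ref{prop:spd-hodge}, $\ker\mathbf{L}_{\mathcal G}=\ker\delta_{\mathcal G}$. As in the proof of Proposition~\ref{prop:adjoint}, $\log(\delta_{\mathcal G}\sigma)=B\log\sigma$. Here $(Bz)_e=M_{ve}z_vM_{ve}^\top-M_{ue}z_uM_{ue}^\top$ with $z_v=\log\sigma_v\in\mathrm{Sym}(n)$. Since $\log$ is a global diffeomorphism $\mathrm{SPD}_n\to\mathrm{Sym}(n)$ sending $I_n$ to $0$, the map $\sigma\mapsto\log\sigma$ is a bijection $\ker\delta_{\mathcal G}\cong\ker B$. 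So it suffices to identify $\ker B$. For an edge $e=(u,v)$, the condition $(Bz)_e=0$ reads
\[
z_v=T_{u\to v}\,z_u\,T_{u\to v}^\top,\qquad T_{u\to v}:=M_{ve}^\top M_{ue}\in O(n),
\]
and traversing $e$ backwards uses $T_{v\to u}=T_{u\to v}^{-1}$.

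\textbf{Step 2 (holonomy).} Fix a root $r$ and a spanning tree $\mathcal T$, which exists because $G$ is connected. For a closed edge path $\gamma$ based at $r$, let $\rho(\gamma)\in O(n)$ be the ordered product of the transports along $\gamma$. Concatenation of loops corresponds to multiplication, backtracking cancels, and so $\rho$ is a homomorphism $\pi_1(G,r)\to O(n)$. Moreover $\pi_1(G,r)$ is free on the fundamental cycles $\gamma_f$, one for each non-tree edge $f$.

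\textbf{Step 3 (the bijection).} Define $\Psi:\ker B\to\mathrm{Sym}(n)$ by $\Psi(z)=z_r$.
\begin{itemize}
\item \emph{Injectivity.} By Step~1, every $z_v$ equals $T_{p}\,z_r\,T_{p}^\top$, where $T_p$ is the transport along the unique tree path $p$ from $r$ to $v$. So $z$ is determined by $z_r$.
\item \emph{Image lies in the fixed set.} For any loop $\gamma$ at $r$, composing the edge relations around $\gamma$ gives $z_r=\rho(\gamma)z_r\rho(\gamma)^\top$.
\item \emph{Surjectivity onto the fixed set.} Given $A$ fixed by all $\rho(\gamma)$, set $z_v:=T_pAT_p^\top$ along tree paths. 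Tree edges then satisfy $(Bz)_e=0$ by construction. For a non-tree edge $f=(u,v)$, the condition $(Bz)_f=0$ is equivalent to $A=\rho(\gamma_f)A\rho(\gamma_f)^\top$, where $\gamma_f$ is the fundamental cycle through $f$. This holds by assumption.
\end{itemize}
The congruence action $A\mapsto RAR^\top$ is a group action. Hence invariance under the generators $\gamma_f$ is equivalent to invariance under all of $\pi_1$. Finally, $\Psi\circ\log$ gives the stated isomorphism, which can be written equivalently as $P=\exp A$ on the SPD side.

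\textbf{Main obstacle.} The delicate part is the bookkeeping in the non-tree-edge step: orientations, the sign conventions $I(v,e)$, and showing that the edge condition is exactly the holonomy condition with $\rho(\gamma_f)$ rather than its inverse or transpose. The latter is harmless, since a fixed set is invariant under inversion. I would handle this by writing everything via $T_{u\to v}$ with $T_{v\to u}=T_{u\to v}^{\top}$, which removes the orientation dependence entirely. I would also remark that changing the basepoint conjugates $\rho$, so the fixed-point space changes only by an isomorphism $A\mapsto T_pAT_p^\top$. This is why the statement is phrased up to $\simeq$.
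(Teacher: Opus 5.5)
Your proposal is correct and takes the same route as the paper: its one-line proof says that harmonic cochains are exactly the sections invariant under parallel transport around all cycles, i.e.\ fixed by the holonomy representation. You supply the details the paper leaves implicit, namely the reduction $\ker\mathbf{L}_{\mathcal G}=\ker\delta_{\mathcal G}\cong\ker B$ in logarithmic coordinates, the spanning-tree evaluation map $z\mapsto z_r$ with the fundamental-cycle check, and the tacit assumption that $G$ is connected (with the change of basepoint accounting for the $\simeq$).
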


\begin{proof}
A section is harmonic if and only if it is invariant under parallel
transport along all cycles, equivalently fixed by the holonomy
representation.
\end{proof}

\paragraph{View of informations under first vs.\ second order geometry.}
The most generally case of dealing with data is viewing it as a \textit{function} at each node, represents the input informations. In mathematical view, one more insightful perspective is to view functions as curves (algebraic curves). 

Let $(M,g)$ be a Riemannian manifold with Levi--Civita connection $\nabla$. First-order information is encoded by tangent vectors $X\in T_pM$ and the metric $g$, which determines infinitesimal distances. 

For a smooth function $f:M\to\mathbb{R}$, the Hessian $\mathrm{Hess}_f(X,Y)=\langle \nabla_X \nabla f,\, Y\rangle$ is a symmetric $(0,2)$-tensor capturing second-order variation; it vanishes
iff $f$ is locally affine in normal coordinates. Let $f:M\to\mathbb{R}$ be $C^2$ and $p$ a critical point.
The Hessian defines a local bilinear form
\[
g^{(f)}_p(X,Y)=\mathrm{Hess}_f(p)(X,Y).
\]
If $g^{(f)}_p\succ0$, then for any curve $\gamma$ with
$\gamma(0)=p$, $\dot\gamma(0)=v$,
\[
f(\gamma(t))=f(p)+\tfrac{t^2}{2}g^{(f)}_p(v,v)+o(t^2),
\]
implying quadratic growth. Thus positive definiteness of the Hessian metric is equivalent to the
second-order sufficient condition, and the induced norm $\|v\|_{g^{(f)}}^2=g^{(f)}_p(v,v)$ measures local curvature of $f$. 

The above brief but insightful mathematical perspective prompts us to consider that the prevailing approach to processing node information primarily treats data as vectors. If we interpret this method within the above framework as processing first-order information, then we can equally contemplate an alternative method for processing node information—namely, processing second-order information. This approach requires us to interpret node information as input to a locally defined quadratic form, as previously described, i.e., as a Riemannian metric. This insight leads us to adopt the SPD-Sheaf method.

\section{Experimental Setup}
\label{app:experimental}

\subsection{Datasets}

We evaluate our method on seven molecular property prediction 
benchmarks from MoleculeNet~\citep{wu2018moleculenet}, spanning biophysics 
(BACE, HIV, MUV) and physiology (BBBP, ClinTox, SIDER, Tox21) domains.:
\begin{itemize}
    \item \textbf{BACE}: Binding results for inhibitors of human $\beta$-secretase 1
    \item \textbf{BBBP}: Blood-brain barrier penetration
    \item \textbf{ClinTox}: Clinical trial toxicity
    \item \textbf{SIDER}: Drug side-effects
    \item \textbf{Tox21}: Toxicity measurements
    \item \textbf{HIV}: Inhibition of HIV replication
    \item \textbf{MUV}: Bioactivity data
\end{itemize}
These datasets span a range of sizes (1,427 to 93,808 molecules) 
and prediction tasks (1 to 27 binary classification targets), 
providing a comprehensive evaluation of molecular property 
prediction capabilities. The original data are SMILES strings. We generate 3D conformers using RDKit with MMFF94 force field optimization, which provides the atomic coordinates required for our coordinate-to-SPD lifting.

\subsection{Baselines}

We compare against a comprehensive set of state-of-the-art methods 
spanning multiple paradigms:
\begin{enumerate}
    \item \textbf{Message-passing neural networks}: D-MPNN~\citep{DBLP:journals/jcisd/YangSJCEGGHKMPS19} 
    and AttentiveFP~\citep{xiong2020pushing}
    \item \textbf{Pre-training approaches}: N-GramRF/XGB~\citep{DBLP:conf/nips/LiuDL19}, 
    PretrainGNN~\citep{DBLP:conf/iclr/HuLGZLPL20}, GraphMVP~\citep{DBLP:conf/iclr/LiuWLLGT22}, 
    MolCLR~\citep{DBLP:journals/natmi/WangWCF22}, and Uni-Mol~\citep{zhou2023unimol}
    \item \textbf{Geometry-aware models}: GEM~\citep{DBLP:journals/natmi/FangLLHZZWWW22}, 
    GROVE~\citep{DBLP:conf/nips/RongBXX0HH20}, Mol-GDL~\citep{shen2023molecular}, 
    SchNet~\citep{schutt2017schnet}, EGNN~\citep{pmlr-v139-satorras21a}
    \item \textbf{Transformer-based methods}: SMPT~\citep{DBLP:journals/candc/LiWLW24}
\end{enumerate}



\subsection{Evaluation Protocol}

Following standard practice, we use scaffold splitting with an 80\%/10\%/10\% train/validation/test partition to ensure molecules with similar scaffolds are grouped together, providing a realistic evaluation of generalization to novel chemical structures. 
We report the area under the receiver operating characteristic curve (ROC-AUC) averaged over 5 independent runs with different random weight initializations. For multi-task datasets (SIDER, Tox21, MUV), we report the mean ROC-AUC across all tasks. The reported standard deviations reflect variance due to random initialization.

\section{Implementation Details}
\label{app:implementation}

\subsection{Model Details}
\label{app:model_details}


\paragraph{Coordinate-to-SPD Lifting.} 
Given atomic coordinates $\{p_v\}_{v \in V}$, we first compute the molecular centroid $\bar{p} = \frac{1}{|V|} \sum_v p_v$ and center the coordinates: $u_v = p_v - \bar{p}$. Centering ensures translation invariance of the resulting 
SPD representation. The centered displacement is normalized to obtain unit direction vectors $\hat{u}_v = \frac{u_v}{\|u_v\| + \epsilon}$,
where $\epsilon = 10^{-8}$ ensures numerical stability for atoms located near the molecular centroid (e.g., the central carbon in methane). The initial SPD matrix at each node is then constructed via outer product with regularization:
\[
    X_v^{(0)} = \hat{u}_v \hat{u}_v^\top + \varepsilon I_3 \in \mathrm{SPD}_3,
\]
where $\varepsilon = 10^{-4}$ ensures positive definiteness.
This initialization yields effectively rank-1 matrices encoding directional information.


\paragraph{SPD sheaf convolution layer.}
Each geometric layer performs the following operations. Given node SPD features $X_v^{(\ell)} \in \mathrm{SPD}_3$:

\textit{Step 1: Learnable isometry.} Apply a learnable orthogonal transformation:
\[    
\tilde{X}_v^{(\ell)} = Q^{(\ell)} X_v^{(\ell)} (Q^{(\ell)})^\top,
\]
where $Q^{(\ell)} \in \mathrm{O}(3)$ is parameterized via QR decomposition of learnable parameters $W^{(\ell)} \in \mathbb{R}^{3 \times 3}$:
\[
    W = QR, \quad Q^{(\ell)} = Q \cdot \mathrm{sign}(\mathrm{diag}(R)).
\]

\textit{Step 2: Sheaf Laplacian computation.} For each edge $e = (u, v)$, compute orthogonal restriction maps $M_{ue}^{(\ell)}, M_{ve}^{(\ell)} \in \mathrm{O}(3)$ via the sheaf learner. The coboundary in the log domain is:
\[
    T_e = \log(M_{ve}^{(\ell)} \tilde{X}_v^{(\ell)} M_{ve}^{(\ell)\top}) - \log(M_{ue}^{(\ell)} \tilde{X}_u^{(\ell)} M_{ue}^{(\ell)\top}).
\]
The adjoint aggregates back to nodes with pullback transformations:
\[
    \Delta_v = \sum_{e: v \in e} \pm  M_{ve}^{(\ell)\top} T_e M_{ve}^{(\ell)},
\]
where the sign depends on edge orientation. Eigenvalues of $\Delta_v$ are normalized to $[-1, 1]$ for stability.

\textit{Step 3: Residual update.} The SPD feature is updated via:
\[
    X_v^{(\ell+1)} = \mathrm{TgReLU}\left(\exp\left(\log X_v^{(\ell)} + \Delta_v\right)\right).
\]


\paragraph{Restriction map parameterization.} 
Orthogonal restriction maps are learned from semantic node features via a quadratic form sheaf learner~\cite{bodnar2022neural}. For edge $e = (u, v)$:
\[
    s_{ue}^{(\ell)} = \text{MLP}_{\text{sheaf}}([h_u^{(\ell)} \| h_v^{(\ell)}]) \in \mathbb{R}^{6},
\]
where $s_{ue}^{(\ell)}$ is reshaped into the lower triangular part of a $3 \times 3$ matrix $L_{ue}^{(\ell)}$. The skew-symmetric matrix is then constructed as $S_{ue}^{(\ell)} = L_{ue}^{(\ell)} - L_{ue}^{(\ell)\top} \in \mathbb{R}^{3 \times 3}$. The orthogonal matrix is obtained via the Cayley transform:
\[
    M_{ue}^{(\ell)} = (I - S_{ue}^{(\ell)}/2)^{-1}(I + S_{ue}^{(\ell)}/2) \in \mathrm{O}(3),
\]
which guarantees exact orthogonality throughout training.

\paragraph{SPD nonlinearity (TgReEig).}
We apply nonlinearity in the eigenvalue domain. Given $X = V \Lambda V^\top$ with $\Lambda = \mathrm{diag}(\lambda_1, \ldots, \lambda_d)$:
\[
    \mathrm{TgReEig}(X) = V \cdot \mathrm{diag}(\tilde{\lambda}_1, \ldots, \tilde{\lambda}_d) \cdot V^\top,
\]
where
\[
    \tilde{\lambda}_i = \begin{cases}
        \exp(\log \lambda_i) & \text{if } \log \lambda_i > 0 \\
        \exp(\delta \cdot i) & \text{otherwise}
    \end{cases}
\]
with $\delta = 0.1$ providing step noise to maintain positive definiteness.

\paragraph{Achieving Rotation Invariance via Equivariant Local Frames.}

The initial SPD representation $X_v^{(0)} = \hat{u}_v \hat{u}_v^\top + \varepsilon I_3$ is O(3)-equivariant: under a global rotation $R \in \mathrm{O}(3)$, it transforms as $X_v^{(0)} \mapsto R X_v^{(0)} R^\top$. For molecular property prediction tasks considered in this work, the target properties (e.g., toxicity, solubility) are intrinsic to the molecule and should be invariant to rigid transformations. We therefore canonicalize this equivariant representation to achieve invariance. We note that the natural equivariance of our SPD embedding could be directly leveraged for equivariant prediction tasks such as molecular force or stress tensor prediction, which we leave for future work.

To achieve rotation invariance, we construct a local orthonormal frame $M_v \in \mathrm{O}(3)$ for each node that is also equivariant under global rotations. Specifically, we build $M_v = [v_1, v_2, v_3]$ using:
\begin{enumerate}
    \item $v_1 = u_v / \|u_v\|$: the normalized displacement from the molecular centroid;
    \item $v_2$: the orthogonalized aggregated neighbor edge directions, computed via Gram-Schmidt;
    \item $v_3 = v_1 \times v_2$: the cross product completing the orthonormal basis.
\end{enumerate}
Under a global rotation $R$, this frame transforms equivariantly as $M_v \mapsto R M_v$.

At the first layer, we apply the local frame transformation to obtain rotation-invariant input:
\[
    \bar{X}_v^{(0)} = M_v^\top X_v^{(0)} M_v.
\]
This representation is rotation invariant since:
\[
    M_v^\top X_v^{(0)} M_v \;\mapsto\; (RM_v)^\top (R X_v^{(0)} R^\top) (RM_v) = M_v^\top X_v^{(0)} M_v.
\]

\begin{proposition}[Invariance Preservation]
If the input to a SPD sheaf convolution layer is rotation invariant, then the output is also rotation invariant.
\end{proposition}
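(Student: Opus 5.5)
The plan is to track what a global rotation $R\in\mathrm{O}(3)$ does to each ingredient of one SPD sheaf convolution layer (Steps 1--3 above), and to show that every ingredient is either unchanged or enters only through expressions in which $R$ cancels. First I fix the meaning of the hypothesis. The layer input consists of the node SPD features $X_v^{(\ell)}$ and the semantic features $h_v^{(\ell)}$. Saying the input is rotation invariant means: under $p_v\mapsto Rp_v$ (plus any translation, already removed by centering), we have $X_v^{(\ell)}\mapsto X_v^{(\ell)}$ and $h_v^{(\ell)}\mapsto h_v^{(\ell)}$. The semantic stream never sees coordinates, and the first-layer input $\bar X_v^{(0)}=M_v^\top X_v^{(0)}M_v$ was shown invariant above. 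It therefore suffices to prove that the layer map $(X^{(\ell)},h^{(\ell)})\mapsto X^{(\ell+1)}$ is a deterministic function of quantities that are themselves invariant. Induction on $\ell$ then gives invariance at every depth, which is the form in which the statement will be used.

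The steps, in order, are as follows.

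\begin{enumerate}
\item \emph{Step 1.} The learnable isometry $Q^{(\ell)}$ is a fixed parameter. It does not depend on the coordinates, so $\tilde X_v^{(\ell)}=Q^{(\ell)}X_v^{(\ell)}Q^{(\ell)\top}$ is invariant.
\item \emph{Restriction maps.} $M_{ue}^{(\ell)}$ and $M_{ve}^{(\ell)}$ are produced by $\mathrm{MLP}_{\mathrm{sheaf}}([h_u^{(\ell)}\|h_v^{(\ell)}])$ followed by antisymmetrization and the Cayley transform. Every input to this pipeline is invariant, and so is the graph connectivity, which does not depend on coordinates. Hence the restriction maps are invariant.
\item \emph{Step 2.} $T_e$ and $\Delta_v$ are built only from $\tilde X^{(\ell)}$, the restriction maps, the fixed edge orientations, and the matrix $\log$. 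Every one of these is invariant. The eigenvalue normalization of $\Delta_v$ to $[-1,1]$ is a deterministic spectral function, so it preserves invariance.
\item \emph{Step 3.} $\exp(\log X_v^{(\ell)}+\Delta_v)$ is invariant. TgReEig acts only on the eigenvalues, through a fixed function, so the output is invariant too.
\end{enumerate}

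For the induction to close, the semantic update must preserve invariance as well. The cross-modal term uses $\mathrm{vec}(\log X_v^{(\ell)})$, which is invariant, and GraphSAGE acts only on $h$, so this holds.

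The main obstacle is a subtlety in what is actually invariant. If the layer were applied to the raw equivariant $X_v^{(0)}$, the argument above would fail. The restriction maps are computed from invariant features, so they do not transform as $M\mapsto RMR^\top$, and $R$ would not cancel inside $M_{ve}\tilde X_v M_{ve}^\top$. The proof must therefore use the hypothesis exactly as stated: invariance of the input, not equivariance. I also have to check that no coordinate-dependent quantity, such as the local frames $M_v$ or the edge directions, re-enters after the first layer. If it did, I would need the local frames $M_v$ to be equivariant, $M_v\mapsto RM_v$, and show that they appear only in invariant combinations such as $M_v^\top(\cdot)M_v$.

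A second, minor point is that spectral operations are well defined on the matrix even when eigenvalues repeat. TgReEig and the normalization are ordinary matrix functions, so they do not depend on the choice of eigenvectors.
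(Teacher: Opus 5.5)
Your proposal is correct and follows the same route as the paper. You check operation by operation that the layer output is a deterministic function of invariant data: the fixed $Q^{(\ell)}$, restriction maps computed from invariant semantic features, and coordinate-free $\log$/$\exp$ algebra in the coboundary, adjoint and residual update. You then induct over layers, as the paper does.

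Your version is slightly more careful than the paper's: it simply asserts $h^{(\ell)}$ is invariant because it encodes discrete chemical information, whereas you correctly include $h^{(\ell)}$ in the inductive hypothesis, since the cross-modal term injects $\log X_v^{(\ell)}$ into it. Your closing remark is inaccurate, because TgReEig's index-dependent floor $\exp(\delta\cdot i)$ makes it depend on the eigenbasis at repeated eigenvalues, but the remark is not needed: invariance only requires that identical inputs give identical outputs.
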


\begin{proof}
Let $X_v^{(\ell)}$ be rotation invariant for all $v \in V$. The SPD sheaf layer consists of the following operations:
\begin{enumerate}
    \item \textbf{Learnable isometry:} $\tilde{X}_v^{(\ell)} = Q^{(\ell)} X_v^{(\ell)} Q^{(\ell)\top}$, where $Q^{(\ell)} \in \mathrm{O}(3)$ is a learned parameter independent of the input coordinate system. Since $X_v^{(\ell)}$ is invariant and $Q^{(\ell)}$ does not depend on 3D coordinates, $\tilde{X}_v^{(\ell)}$ remains invariant.
    
    \item \textbf{Restriction maps:} $\mathcal{F}_{v \to e}(\tilde{X}_v^{(\ell)}) = M_{ve}^{(\ell)} \tilde{X}_v^{(\ell)} M_{ve}^{(\ell)\top}$, where $M_{ve}^{(\ell)} \in \mathrm{O}(3)$ is predicted from semantic node features $h_v^{(\ell)}, h_u^{(\ell)}$ via the sheaf learner (Equation~23). Since semantic features encode discrete chemical information (atom types, bond types) that is inherently rotation invariant, the restriction maps $M_{ve}^{(\ell)}$ are invariant. Combined with invariant $\tilde{X}_v^{(\ell)}$, the edge representations remain invariant.
    
    \item \textbf{Coboundary and adjoint:} These operations (Equations~5,~7) involve matrix logarithms, additions in $\mathrm{Sym}_3$, and exponentials---all coordinate-free algebraic operations that preserve invariance.
    
    \item \textbf{Lie group update:} $X_v^{(\ell+1)} = X_v^{(\ell)} \odot (\mathbf{L}_{\mathcal{F}}^{(\ell)} \tilde{X}^{(\ell)})_v$ (Equation~15) combines invariant quantities via the Lie group operation, yielding an invariant output.
\end{enumerate}
By induction, all layers produce rotation-invariant representations.
\end{proof}

Since $\bar{X}_v^{(0)}$ is rotation invariant after the first-layer canonicalization, all subsequent SPD representations $X_v^{(\ell)}$ and the final pooled representation $\bar{X}_G$ are rotation invariant, ensuring that the model's predictions are independent of the arbitrary 3D orientation of the input molecule.

\paragraph{Cross-modal interaction.}
Let $X_v^{(\ell)} \in \mathrm{SPD}_3$ denote the geometric feature and $h_v^{(\ell)} \in \mathbb{R}^{d_f}$ the semantic feature. The interaction proceeds as:

\textit{Step 1:} Project SPD to tangent space and vectorize:
\[
    \xi_v = \mathrm{vec}(\log X_v^{(\ell)}) \in \mathbb{R}^9.
\]

\textit{Step 2:} Compute attention-weighted modulation:
\[
    \alpha_v = \sigma\left(\mathrm{MLP}_{\mathrm{attn}}\left([W_{\mathrm{spd}}\xi_v \| W_{\mathrm{feat}}h_v^{(\ell)}]\right)\right),
\]
\[
    h_v^{(\ell)} \leftarrow h_v^{(\ell)} + \alpha_v \cdot W_\xi \xi_v,
\]
where $W_\xi \in \mathbb{R}^{d_f \times 9}$ projects geometric information to the semantic dimension. This design allows geometry to modulate semantic features while keeping the SPD stream intrinsic.

\paragraph{Semantic stream.}
The semantic stream processes chemical features via GraphSAGE with mean aggregation:
\[
    h_v^{(\ell+1)} = \sigma\left(W^{(\ell)} \cdot \mathrm{CONCAT}\left(h_v^{(\ell)}, \mathrm{MEAN}_{u \in \mathcal{N}(v)} h_u^{(\ell)}\right)\right),
\]
where $\sigma$ denotes LeakyReLU activation. Input node features are encoded using CGCNN encoding~\cite{PhysRevLett.120.145301} with dimension 92.

\paragraph{SPD pooling.}
For graph-level prediction, we aggregate node SPD matrices via power-Euclidean pooling~\citep{chen2025riemgcp}:
\[
    \bar{X}_G = \left(\frac{1}{|V|} \sum_{v \in V} (X_v^{(L)})^\theta\right)^{1/\theta},
\]
where $\theta = 0.5$ (matrix square root). Matrix powers are computed via eigendecomposition.

\paragraph{Fusion and prediction head.}
The pooled SPD matrix is mapped to a geometric descriptor:
   $g = \mathrm{vec}(\log \bar{X}_G)$
The semantic stream produces $h_G \in \mathbb{R}^{d_f}$ via mean pooling. For most datasets, we use Factorized Bilinear Pooling (FBP):
\[
    z = (U g) \odot (V h_G), \quad \hat{y} = \mathrm{MLP}(\mathrm{BN}([g \| h_G \| z])).
\]
For BACE, we use Cross-Attention fusion where semantic features query geometric features via multi-head attention ($d_{\mathrm{model}} = 64$, 4 heads), followed by residual connections and layer normalization.

On BACE, all methods in Table~\ref{tab:ablation_geom} (including Euclidean SheafNN and other baselines) use Cross-Attention fusion to ensure a fair comparison. Under this controlled setting, SPD-Sheaf (89.0\%) outperforms Euclidean SheafNN (87.5\%) by 1.5\%, demonstrating the contribution of SPD geometry independent of the fusion head. 

We additionally evaluated SPD-Sheaf with FBP fusion, which achieves 87.3\%. This indicates that Cross-Attention provides a further 1.7\% improvement on BACE, likely due to the importance of fine-grained geometric-semantic interaction in protein-ligand binding prediction. For all other datasets, FBP provides sufficient capacity.

\paragraph{Numerical stability.}
Matrix logarithms and exponentials are computed via eigendecomposition. Eigenvalues are clamped to $[\varepsilon, +\infty)$ with $\varepsilon = 10^{-4}$. All SPD-valued computations use \texttt{float64} precision.

\subsection{Training}
\label{app:training}

\paragraph{Optimizer.}
We use a hybrid optimization strategy based on parameter tensor dimensionality:
\begin{itemize}[leftmargin=*,nosep]
    \item \textbf{Muon optimizer}~\citep{jordan2024muon} for parameters with $\mathrm{ndim} \geq 2$ (weight matrices); learning rate $\eta_{\mathrm{Muon}} = 2 \times 10^{-2}$.
    \item \textbf{Adam optimizer} for parameters with $\mathrm{ndim} < 2$ (biases, normalization); learning rate $\eta_{\mathrm{Adam}} = 5 \times 10^{-4}$, $\beta = (0.9, 0.95)$.
\end{itemize}
Weight decay is set to $10^{-2}$ for all parameters. No learning rate scheduler is used.

\paragraph{Loss function.}
Binary cross-entropy loss:
\[
    \mathcal{L} = -\frac{1}{N} \sum_{i=1}^{N} \left[ y_i \log \hat{y}_i + (1 - y_i) \log(1 - \hat{y}_i) \right].
\]
For multi-task datasets (SIDER, Tox21, MUV), the loss is averaged across all tasks.

\paragraph{Regularization.}
Dropout rate 0.1 applied to semantic stream and SPD tangent space features. Gradient clipping with maximum norm 3.0.

\paragraph{Dataset-specific hyperparameters.}
Table~\ref{tab:hyperparams} summarizes the key hyperparameters.

\begin{table}[h]
\centering
\caption{Dataset-specific hyperparameters.}
\label{tab:hyperparams}
\vspace{0.5em}
\small
\begin{tabular}{lcccc}
\toprule
Dataset & Layers & Batch Size & Fusion Type & Epochs \\
\midrule
BACE & 2 & 64 & Cross-Attention & 200 \\
BBBP & 2 & 128 & FBP & 200 \\
ClinTox & 5 & 128 & FBP & 200 \\
SIDER & 2 & 128 & FBP & 200 \\
Tox21 & 7 & 512 & FBP & 200 \\
HIV & 5 & 512 & FBP & 200 \\
MUV & 7 & 512 & FBP & 200 \\
\bottomrule
\end{tabular}
\end{table}

Common hyperparameters: semantic feature dimension $d_f = 64$, SPD matrix dimension $d = 3$, FBP rank $r = 64$, dropout 0.1, $\varepsilon = 10^{-4}$, power-Euclidean pooling $\theta = 0.5$.

\paragraph{Data splitting.}
We use scaffold splitting with 80\%/10\%/10\% train/validation/test partition, ensuring molecules with similar scaffolds are grouped together.

\subsection{Hardware and Runtime}
\label{app:hardware}

All experiments were conducted on a single NVIDIA A100-SXM4-40GB GPU.

\subsection{Software Dependencies}
\label{app:software}

Our implementation uses: PyTorch 2.0+ (\texttt{float64} default dtype), DGL for graph operations, RDKit for molecular processing and 3D conformer generation, pytorch-optimizer for Muon, and scikit-learn for evaluation metrics.

Code is available at \url{https://github.com/YuhanPeng0524/SPDSheaf}.
\section{EEG-Based Motor Imagery Experiment}\label{app:EEG}

In Electroencephalography (EEG)-based motor imagery classification, EEG signals are collected while users mentally rehearse specific movements (e.g., left/right hand, foot, or tongue) without any overt execution~\cite{pfurtscheller2001motor}. For a comprehensive overview of SPD 
matrix learning methods in neuroimaging, we refer to~\citet{ju2025spd}. Such imagery modulates sensorimotor rhythms, most prominently in the mu and beta bands, and produces characteristic event-related desynchronization/synchronization (ERD/ERS) patterns over sensorimotor regions. These ERD/ERS responses serve as class-discriminative neural signatures that can be translated into control commands for BCIs.

In this work, we adopt the \emph{time-frequency graph} framework~\cite{10255369} for EEG classification. Unlike conventional EEG graph construction where scalp electrodes are treated as graph vertices, each node in the time-frequency graph is defined as an EEG spatial covariance matrix computed within a specific time interval and frequency band, yielding SPD matrix-valued node features. This formulation naturally casts the learning problem as graph classification, which aligns with the overall framework proposed in this work.

\subsection{Construction of Time-Frequency Graph}

Given a time-frequency segmentation plan (with optional overlap) $\{\Delta t_i \times \Delta f_i\}_{i\in\mathcal{I}}$, the EEG is first band-pass filtered within each frequency interval $\Delta f_i$, and the corresponding temporal segment $X_i \in \mathbb{R}^{n_C \times \Delta t_i}$ is extracted. We use Chebyshev Type~II filters with a maximum passband loss of 3~dB and minimum stopband attenuation of 30~dB. An SPD matrix for each segment is then formed via the sample covariance $S(\Delta t_i \times \Delta f_i) := X_i X_i^\top$, and $\{S(\Delta t_i \times \Delta f_i)\}_{i\in\mathcal{I}}$ is collected as a time-frequency distribution on the SPD manifold.

The segmentation is selected to align with event-related modulations of sensorimotor rhythms (e.g., ERD/ERS), which are known to be frequency-specific and user-dependent; therefore, the time and frequency discretizations may be varied across experimental settings. In practice, $\Delta f$ is often chosen with reference to canonical neurophysiological bands (e.g., $\delta$, $\theta$, $\mu$, $\beta$, $\gamma$), while flexible refinement is allowed to better capture subject-specific discriminative responses.

To characterize the local structure in this SPD matrix-valued time-frequency distribution, a time-frequency graph $\mathcal{G}(\mathcal{V},\mathcal{E})$ is constructed, where each vertex is associated with one covariance matrix $S_i$. Edges are determined by a locality-constrained $\epsilon$-neighborhood rule: two vertices $S_i = S(\Delta t_i \times \Delta f_i)$ and $S_j = S(\Delta t_j \times \Delta f_j)$ are connected if they are located within a local time-frequency window $\mathcal{B}_{\epsilon_1,\epsilon_2}$ and are sufficiently close under the AIRM Riemannian metric. The window constraint is imposed to enforce locality along the time-frequency plane with width $\epsilon_1 \ge 0$ in time and height $\epsilon_2 \ge 0$ in frequency:
\begin{align*}
0 \le \mathrm{mid}(\Delta t_i) - \mathrm{mid}(\Delta t_j) &\le \epsilon_1, \\
\big|\mathrm{mid}(\Delta f_i) - \mathrm{mid}(\Delta f_j)\big| &\le \epsilon_2,
\end{align*}
where $\mathrm{mid}(\cdot)$ denotes the midpoint of an interval; the one-sided temporal constraint is adopted to reflect the forward direction of time evolution. Among pairs satisfying the window constraint, edges are retained for those satisfying
$d_{g^{\mathrm{AIRM}}}(S_i,S_j)^2 < \epsilon$.

Finally, the adjacency matrix $A$ is defined using a radial basis function (RBF) kernel weighting on the AIRM Riemannian distance between adjacent vertices:
\begin{equation*}
A_{ij} =
\begin{cases}
\exp\!\left(- d_{g^{\mathrm{AIRM}}}^2(S_i,S_j) / t \right), & \text{if } (S_i,S_j)\in\mathcal{E},\\
0, & \text{otherwise},
\end{cases}
\end{equation*}
where $t>0$ is a preset kernel bandwidth.

\subsection{Experimental Dataset}

We evaluate our approach on the BNCI2015\_001 motor-imagery EEG dataset from the BNCI Horizon 2020 initiative, accessed through the MOABB\footnote{\url{https://moabb.neurotechx.com/docs/generated/moabb.datasets.BNCI2015_001.html\#moabb.datasets.BNCI2015_001}} repository. The dataset contains recordings from 12 subjects performing two sustained motor imagery tasks: right-hand imagery and both-feet imagery. EEG was collected with a g.GAMMAsys active electrode system and a g.USBamp amplifier (g.tec, Graz, Austria) at 512 Hz, with online preprocessing consisting of a 0.5-100 Hz band-pass filter and a 50 Hz notch filter.

\textbf{Channel montage.} The recordings are provided as Laplacian derivations centered at C3, Cz, and C4 (International 10-20 system), using a 3.5 cm center-to-center layout around each center site. The Laplacian derivations are formed from the surrounding electrodes: C3 (FC3, C5, CP3, C1), Cz (FCz, C1, CPz, C2), and C4 (FC4, C2, CP4, C6). Channels are ordered as FC3, FCz, FC4, C5, C3, C1, Cz, C2, C4, C6, CP3, CPz, CP4.

\textbf{Trial paradigm and time window.} Each trial starts with a 3 s reference period, followed by an auditory cue and a visual cue presented from 3.0 to 4.25 s (right arrow for right-hand imagery; down arrow for both-feet imagery). A feedback-driven activity period then spans 4.0-8.0 s, and consecutive trials are separated by a random 2-3 s inter-trial interval. In accordance with the protocol, subjects perform motor imagery from cue onset (3.0 s) until the end of the trial (8.0 s); we therefore use the 3.0-8.0 s segment (5 s) as the primary analysis window.

\textbf{Sessions and number of trials.} Subjects 1-8 were recorded across two sessions (A, B) on consecutive days, while subjects 9-12 included an additional third session (C). Each session contains 100 trials per class (200 trials/session). Overall, the dataset comprises 28 sessions and 5,600 trials (200 $\times$ 28), providing sufficient data for representation learning and downstream evaluation. In our experiments, we restrict all subjects to the first two sessions (A and B) only.

\subsection{Experimental Scenarios}

We consider two widely used evaluation protocols for EEG-based motor imagery classification:
\begin{enumerate}
\item \textbf{10-fold Cross-Validation (CV).} For each subject, trials are partitioned into 10 equal-sized, class-balanced folds. We train on 9 folds and evaluate on the remaining fold, repeating the procedure 10 times such that each fold serves as the test set once.
\item \textbf{Cross-Session Holdout.} To assess robustness to inter-session (between-days) variability, we adopt a session-wise split: models are trained on the first session (A) and evaluated on the entire second session (B). As sessions A and B are typically recorded on different days, this protocol captures session-to-session distribution shifts.
\end{enumerate}

Note that, under both the 10-fold CV and holdout scenarios, a graph is constructed and its edge weights are computed using only the training portion of each EEG trial. In particular, no data from the test split is used to construct the graph.

\subsection{SPD-Sheaf Configuration for EEG Classification}

For EEG-based motor imagery classification, we apply the proposed SPD sheaf framework on each time-frequency graph without a semantic stream, as the node features are inherently SPD matrix-valued.

\paragraph{SPD-Sheaf Architecture.}
Each node in the time-frequency graph carries an SPD matrix $X_v \in \text{SPD}_{13}$ (e.g., spatial covariance matrix derived from 13-channel motor imagery EEGs). Hence, we apply the proposed SPD sheaf convolution, which is defined in Equation~\ref{sheaf}, as follows:
\[
X_v^{(\ell+1)} = X_v^{(\ell)} \odot \exp\left( \sum_{(v,u)=e} (-1)^{I(v,e)} M_{ve}^{(\ell)\top} \left( \log(\mathcal{F}_{v \to e}^{(\ell)} X_v^{(\ell)}) - \log(\mathcal{F}_{u \to e}^{(\ell)} X_u^{(\ell)}) \right) M_{ve}^{(\ell)} \right),
\]
where the restriction maps $\mathcal{F}_{v \to e}(P) = M_{ve} P M_{ve}^\top$ with learnable $M_{ve} \in O(13)$.

For the orthogonal restriction map, we derive Euclidean node features from the SPD matrices via the log-Euclidean vectorization:
\[
    \mathbf{h}_v = \text{vec}_{\text{upper}}(\log X_v) \in \mathbb{R}^{91},
\]
where $91 = 13 \times 14 / 2$ is the number of upper-triangular elements. 

For each edge $e = (u,v)$, the restriction map parameters are predicted from the concatenated features $[\mathbf{h}_u \| \mathbf{h}_v]$ via an MLP, then converted to orthogonal matrices $M_{ue}, M_{ve} \in O(13)$ through the Cayley transform. 

The model adopts two SPD sheaf layers, trained for 50 epochs with a learning rate of $3 \times 10^{-4}$ and a batch size of 16.



\begin{table*}[!t]
\centering
\small
\setlength{\tabcolsep}{4pt}
\caption{Classification accuracy (\%) on the BNCI2015\_001 motor imagery 
dataset across three evaluation protocols. Mean (std) reported where 
standard deviations are available. \textbf{Bold} indicates best; 
\underline{underline} indicates second-best.}
\label{tab:EEG}
\begin{tabular}{lccccccc}
\toprule
Scenario 
  & FBCSP 
  & FBCNet 
  & Tensor-CSPNet 
  & TSMNet 
  & GyroAtt 
  & Graph-CSPNet 
  & SPD-Sheaf (Ours) \\
\midrule
10-Fold CV (A)      
  & 79.46\,(14.16) 
  & 82.62\,(13.11) 
  & 81.29\,(14.78) 
  & 82.17 
  & 82.33 
  & \textbf{84.62} 
  & \underline{83.25}\,(11.29) \\
10-Fold CV (B)      
  & 81.96\,(11.14) 
  & 84.92\,(10.30) 
  & 85.29\,(10.54) 
  & 84.67 
  & \underline{86.04} 
  & \textbf{88.00} 
  & 85.58\,(9.76) \\
Holdout (A\,$\to$\,B) 
  & 73.46\,(14.09) 
  & 74.50\,(16.01) 
  & 79.04\,(14.67) 
  & 77.08 
  & 78.46 
  & \underline{79.75} 
  & \textbf{82.58}\,(12.15) \\
\bottomrule
\end{tabular}
\end{table*}

\subsection{Results}

Table~\ref{tab:EEG} reports classification accuracy on BNCI2015\_001 
against six baselines spanning classical signal-processing methods 
(FBCSP~\citep{ang2008filter}, FBCNet~\citep{mane2021fbcnet}) and recent 
SPD-based deep learning methods 
(Tensor-CSPNet~\citep{9805775}, TSMNet~\citep{kobler2022tsmnet}, 
GyroAtt~\citep{nguyen2024gyroatt}, Graph-CSPNet~\citep{10255369}). 
Graph-CSPNet results are quoted from~\citet{10255369}; GyroAtt and TSMNet 
are re-run under the identical protocol.

We emphasize that SPD-Sheaf was designed for molecular property prediction 
and is applied here \emph{without any task-specific modification}: it 
operates directly on the time-frequency graph of~\citet{10255369}, with 
no semantic stream and no coordinate-to-SPD lifting. 
Despite this, SPD-Sheaf attains the best cross-session generalization 
score (Holdout A$\to$B: $82.58\%$ vs.\ $79.75\%$ for Graph-CSPNet, 
$+2.83$ points). On the within-session 
10-fold CV protocol, SPD-Sheaf is competitive (second on session~A, 
third on session~B), surpassed by Graph-CSPNet, which is specifically 
engineered for the time-frequency EEG setting.

The strong cross-session result suggests that sheaf-theoretic edge 
heterogeneity provides robust generalization under distribution shift, 
even without domain-specific tuning. We therefore position the EEG 
experiments as supplementary validation of the framework's generality 
rather than its primary application; a fully EEG-tailored variant is 
left to future work.

\paragraph{Discussion (Key Difference from Molecular Experiments).}
Unlike the molecular setting where SPD matrices are lifted from 3D coordinates (effectively rank-1 initialization in $\text{SPD}_3$), EEG spatial covariance matrices are derived from multi-channel EEGs, yielding higher-rank initial representations in $\text{SPD}_{13}$. This demonstrates the versatility of our framework: the same SPD sheaf convolution operates natively on domains where second-order statistics arise naturally, without requiring the coordinate-to-SPD lifting or dual-stream architecture.

\section{Regression Experiment}
\label{app:regression}

\subsection{ZINC Molecular Property Regression}

While the main experiments focus on binary classification tasks from MoleculeNet, we additionally evaluate our SPD-Sheaf framework on graph-level regression to demonstrate its versatility across prediction paradigms. We use the ZINC benchmark, a standard dataset for evaluating graph neural networks on molecular property prediction.

\paragraph{Dataset Description.}
\textbf{ZINC} is a subset of the ZINC database containing 12,000 drug-like molecules with the official split from the Benchmarking GNNs paper: 10,000 training, 1,000 validation, and 1,000 test molecules~\citep{irwin2005zinc,sterling2015zinc15,dwivedi2020benchmarking}. The regression target is the \textbf{penalized logP} (constrained solubility), defined as:
\[
\text{penalized logP} = \log P - \text{SAS} - \text{ring\_penalty}
\]
where $\log P$ is the octanol-water partition coefficient (Wildman-Crippen method), SAS is the synthetic accessibility score, and $\text{ring\_penalty} = \sum_{r: |r|>6} (|r| - 6)$ penalizes rings with more than 6 atoms.

This property is geometry-sensitive: molecules with identical atom types and connectivity can have different penalized logP values due to conformational differences affecting solubility and synthetic accessibility.

\paragraph{Experimental Setup.}
\textbf{Data preparation.} Since SPD-Sheaf GNN requires 3D molecular structures to compute geometric shortest path distances, we recover SMILES strings for the ZINC 12K split by matching molecules from TDC ZINC250K using molecular formula fingerprints, achieving 92.5\% coverage (11,094/12,000 molecules). 3D conformers are generated using RDKit with MMFF force field optimization. To ensure fair 
comparison, all baseline models in Table~\ref{tab:zinc_baselines} are retrained on this identical filtered dataset with same hyperparameters used in \cite{dwivedi2020benchmarking}. Note that SPD-Sheaf uses RDKit-derived atom and bond features (92 and 21 dimensions respectively) to leverage the recovered SMILES information, while baseline models use the original ZINC atom/bond type embeddings.

\textbf{Architecture.} We use the same dual-stream SPD-Sheaf architecture as in classification experiments except for the output head.

\textbf{Regression head.} The fused representation $z = [g \| h_G \| (Ug) \odot (Vh_G)]$ is passed through an MLP regressor outputting a scalar prediction. We use L1 loss (Mean Absolute Error) for training.

\textbf{Hyperparameters.}
\begin{itemize}
    \item Layers: 5 (geometric) / 5 (semantic)
    \item Hidden dimension: 128
    \item SPD matrix size: $3 \times 3$
    \item Batch size: 256
    \item Weight decay: 0.01
    \item Iterations: 4
    \item Epochs: 200
    \item Optimzer: Same as main experiment
\end{itemize}

\textbf{Evaluation metric.} Mean Absolute Error (MAE) on test set, lower is better. We report mean and standard deviation over 4 runs for SPD-Sheaf model.

\subsection{Baseline Methods}
\paragraph{Baseline selection.}
We compare against representative message-passing neural networks from the Benchmarking GNNs framework~\citep{dwivedi2020benchmarking}. 
We exclude higher-order Weisfeiler-Lehman methods (3WLGNN and RingGNN) from our comparison as their $\mathcal{O}(n^3)$ dense tensor operations result in prohibitively slow training ($\sim$25 minutes per epoch), preventing convergence within the standard 12-hour computational budget used in the original benchmark. This limitation was also noted by~\citet{dwivedi2020benchmarking}, who reported training difficulties and divergence issues with these architectures.


\begin{table}[h]
\centering
\caption{Baseline methods for ZINC regression.}
\label{tab:zinc_baselines}
\begin{tabular}{lll}
\toprule
\textbf{Method} & \textbf{Type} & \textbf{Reference} \\
\midrule
GCN & Message-passing & \citet{kipf2017gnn} \\
GraphSage & Sampling-based & \citet{hamilton2017inductive} \\
GIN & Message-passing & \citet{xu2019powerful} \\
GAT & Attention-based & \citet{velickovic2018graph} \\
MoNet & Gaussian mixture & \citet{monti2017geometric} \\
GatedGCN & Gated edges & \citet{bresson2017residual} \\
GatedGCN-E & + Edge features & \citet{bresson2017residual} \\
GatedGCN-E-PE & + Positional encoding & \citet{dwivedi2020benchmarking} \\
\bottomrule
\end{tabular}
\end{table}


\begin{table}[h]
\centering
\caption{ZINC regression results on filtered dataset. Test MAE $\pm$ s.d. (lower is better), averaged over 4 runs. The best performance is marked in \textbf{bold}, and the second best is \underline{underlined}.}
\label{tab:zinc_results}
\begin{tabular}{lcc}
\toprule
\textbf{Model} & $L$ & \textbf{Test MAE} $\downarrow$ \\
\midrule
GCN & 16 & $0.274 \pm 0.006$ \\
MoNet & 16 & $0.279 \pm 0.015$ \\
GatedGCN & 16 & $0.313 \pm 0.007$ \\
GraphSage & 16 & $0.315 \pm 0.005$ \\
GAT & 16 & $0.324 \pm 0.007$ \\
GIN & 16 & $0.422 \pm 0.037$ \\
GatedGCN-E & 16 & \underline{$0.254 \pm 0.004$} \\
GatedGCN-E-PE & 16 & $\mathbf{0.212 \pm 0.009}$ \\
\textbf{SPD-Sheaf (Ours)} & 5 & $0.258 \pm 0.007$ \\
\bottomrule
\end{tabular}
\end{table}

\subsection{Analysis}

\paragraph{SPD-Sheaf achieves competitive performance.}
As shown in Table~\ref{tab:zinc_results}, SPD-Sheaf (MAE 0.258) outperforms most standard message-passing methods despite using only 5 layers compared to 16 layers for baselines. Specifically, it improves over GCN (0.274), MoNet (0.279), GatedGCN (0.313), GraphSage (0.315), GAT (0.324), and GIN (0.422). SPD-Sheaf achieves comparable performance to GatedGCN-E (0.254), which explicitly incorporates edge features. This demonstrates that our geometric SPD representations effectively capture molecular structure information relevant for property prediction.

\paragraph{Comparison with positional encoding methods.}
GatedGCN-E-PE (0.212) achieves the best performance by combining edge gating with Laplacian positional encodings. The gap between SPD-Sheaf and GatedGCN-E-PE suggests that learned positional encodings capture complementary structural information beyond what 3D geometric structure provide. We note that our method uses \emph{explicit} 3D coordinates from conformer generation, while positional encodings are \emph{learned} spectral features from the graph Laplacian. Future work could explore combining SPD-Sheaf with learned positional encodings to leverage both geometric and spectral information.




\section{Scaling to Larger Molecular Datasets}
\label{app:molpcba}

The MoleculeNet benchmarks evaluated in Section~\ref{sec:experiments} range 
from $1.4$k to $93$k molecules. To assess whether SPD-Sheaf scales 
to substantially larger datasets, we additionally evaluate on 
\textbf{ogbg-molpcba} from the Open Graph Benchmark, which contains 
$437$k molecules---approximately $4.7\times$ the size of MUV and 
$10.6\times$ the size of HIV.

\paragraph{Setup.}
We apply SPD-Sheaf without any task-specific architectural modification. 
Recall that the model is a dual-stream architecture combining a geometric 
stream (SPD sheaf convolution over 3D coordinates) with a semantic stream 
(GCN over chemical features). 
For a controlled comparison we report a GCN-only baseline alongside the 
full dual-stream SPD-Sheaf, both trained under identical conditions. 
We follow the standard OGB scaffold split and report Average Precision~(AP), 
the official metric for ogbg-molpcba.

\paragraph{Results.}
Table~\ref{tab:molpcba} reports the results. 
SPD-Sheaf attains $\mathrm{AP} = 0.2419$, a $19.8\%$ relative 
improvement over the GCN baseline. This indicates that the gains from 
second-order geometric representations and edge-specific restriction maps 
persist at scale, without task-specific tuning.

\begin{table}[h]
\centering
\small
\caption{Performance on \textsc{ogbg-molpcba}. 
We report Average Precision (higher is better) on the official scaffold split.}
\label{tab:molpcba}
\setlength{\tabcolsep}{10pt}
\begin{tabular}{lc}
\toprule
Model & Avg.\ Precision \\
\midrule
GCN (baseline)        & $0.2020$ \\
SPD-Sheaf (Ours)      & $\mathbf{0.2419}$ \;\textcolor{gray}{($+19.8\%$)} \\
\bottomrule
\end{tabular}
\end{table}

\section{Additional Oversmoothing Analysis}
\label{app:oversmoothing}

To complement the depth-robustness analysis in Section~\ref{par:robustness}, 
we quantitatively measure representation collapse using two standard 
oversmoothing diagnostics.

\paragraph{Metrics.}
We follow standard practice for quantifying oversmoothing in graph 
neural networks. The normalized Dirichlet energy 
$\mathcal{E}_{\mathrm{Dir}}/N$
measures pairwise feature dissimilarity between connected nodes, while 
the Mean Average Distance (MAD) measures 
the average cosine distance between connected node pairs~\cite{rusch2023survey,zhang2026measuring}. Both metrics 
approach $0$ as neighboring representations homogenize. 

\paragraph{Setup.}
We sweep depths $\{2, 8, 16, 32, 64, 128\}$ on three datasets covering 
distinct regimes: BACE (molecular graphs, MoleculeNet), 
Cora (homophilic node classification), 
and Texas (heterophilic node classification). 
All other hyperparameters follow the main experiments.

\paragraph{Results.}
Table~\ref{tab:oversmoothing} reports the full numerical values.
GCN collapses exponentially: $\mathcal{E}_{\mathrm{Dir}}/N$ drops below $10^{-3}$ 
by depth $64$ on BACE and Texas, and is already below $10^{-2}$ at depth $2$ 
on Cora. In contrast, SPD-Sheaf maintains $\mathcal{E}_{\mathrm{Dir}}/N > 1$ 
at every depth on every dataset, with MAD remaining above $0.45$ even at 
depth $128$.
This empirically confirms our theoretical claim:
edge-specific orthogonal restriction maps structurally enlarge 
$\ker \mathbf{L}_{\mathcal{G}}$, preventing the convergence to constant cochains 
that drives oversmoothing in standard message passing.

\begin{table*}[t]
\centering
\small
\caption{Oversmoothing metrics across depths on BACE (molecular), 
Cora (homophilic), and Texas (heterophilic). 
GCN collapses to near zero; SPD-Sheaf maintains 
$\mathcal{E}_{\mathrm{Dir}}/N > 1$ across all depths and datasets.}
\label{tab:oversmoothing}
\setlength{\tabcolsep}{6pt}
\begin{tabular}{llcrrrrrr}
\toprule
{Dataset} & {Model} & {Metric}
 & \multicolumn{6}{c}{Depth} \\
\cmidrule(lr){4-9}
 & & & 2 & 8 & 16 & 32 & 64 & 128 \\
\midrule
{BACE}
 & {GCN}
   & $\mathcal{E}_{\mathrm{Dir}}/N$ & 0.594 & 0.126 & 0.108 & 0.016 & 0.002 & 0.000 \\
 & & MAD                            & 0.294 & 0.052 & 0.041 & 0.006 & 0.001 & 0.000 \\
\cmidrule(lr){2-9}
 & {SPD-Sheaf}
   & $\mathcal{E}_{\mathrm{Dir}}/N$ & 1.280 & 1.719 & 1.687 & 1.715 & 1.515 & 1.125 \\
 & & MAD                            & 0.583 & 0.768 & 0.780 & 0.786 & 0.715 & 0.479 \\
\midrule
{Cora}
 & {GCN}
   & $\mathcal{E}_{\mathrm{Dir}}/N$ & 0.028 & 0.006 & 0.000 & 0.000 & 0.000 & 0.001 \\
 & & MAD                            & 0.007 & 0.001 & 0.000 & 0.000 & 0.000 & 0.000 \\
\cmidrule(lr){2-9}
 & {SPD-Sheaf}
   & $\mathcal{E}_{\mathrm{Dir}}/N$ & 2.530 & 3.014 & 2.980 & 2.945 & 2.543 & 2.064 \\
 & & MAD                            & 0.531 & 0.658 & 0.664 & 0.678 & 0.588 & 0.452 \\
\midrule
{Texas}
 & {GCN}
   & $\mathcal{E}_{\mathrm{Dir}}/N$ & 1.371 & 0.114 & 0.010 & 0.001 & 0.000 & 0.000 \\
 & & MAD                            & 0.435 & 0.037 & 0.003 & 0.000 & 0.000 & 0.000 \\
\cmidrule(lr){2-9}
 & {SPD-Sheaf}
   & $\mathcal{E}_{\mathrm{Dir}}/N$ & 1.915 & 2.337 & 2.571 & 2.316 & 2.843 & 1.797 \\
 & & MAD                            & 0.544 & 0.684 & 0.733 & 0.703 & 0.678 & 0.567 \\
\bottomrule
\end{tabular}
\end{table*}

\section{Graph Perturbation Analysis}
\label{app:perturbation}

A natural question is whether SPD-Sheaf's empirical gains genuinely 
originate from modeling edge-dependent structure, as opposed to other 
architectural factors such as increased capacity or the dual-stream 
design. We address this with a controlled graph-perturbation experiment 
whose purpose is \emph{mechanistic verification} rather than benchmark 
robustness.

If a model truly leverages edge-specific structure, then corrupting edges 
should disproportionately degrade its performance. Conversely, a 
model that relies primarily on node features (and uses edges only as a 
generic aggregation scaffold) will be relatively insensitive to which 
edges are present. We therefore measure retention after edge 
corruption:
\begin{equation*}
\mathrm{Retention}(p) \;:=\; 
\frac{\mathrm{ROC\text{-}AUC}(p)}{\mathrm{ROC\text{-}AUC}(p=0)} \times 100\%,
\end{equation*}
where $p$ is the edge perturbation rate. \emph{Lower} retention under 
matched architecture indicates \emph{stronger} reliance on edge-dependent 
structure.

\paragraph{Setup.}
We consider two perturbation types:
\begin{itemize}\itemsep2pt
  \item \textbf{Drop:} each edge is independently removed with 
        probability $p$.
  \item \textbf{Rewire:} each edge is independently rewired to a random 
        endpoint with probability $p$, preserving the degree distribution 
        in expectation.
\end{itemize}
We evaluate $p \in \{0.1, 0.3, 0.5\}$ on BACE and BBBP, comparing three 
architectures under the same dual-stream framework: SPD-Sheaf 
(ours); Euclidean SheafNN, which replaces SPD stalks with vector 
stalks while keeping all other components identical; and GCN, 
which removes both sheaf structure and SPD geometry. The Euclidean 
SheafNN comparison is the key control: it isolates the contribution of 
SPD-valued stalks to edge-dependent encoding.

\paragraph{Results.}
Table~\ref{tab:perturbation} reports retention across all settings. 
Two patterns are consistent across datasets and perturbation types:

\begin{enumerate}\itemsep2pt
  \item \textbf{SPD-Sheaf exhibits lower retention than Euclidean 
        SheafNN under the matched architecture.} For example, on BACE 
        under Drop with $p = 0.5$, SPD-Sheaf retains $69.5\%$ of its 
        clean ROC-AUC while Euclidean SheafNN retains $83.0\%$; the gap 
        of $\sim\!13$ points is attributable solely to the SPD-versus-vector 
        stalk choice.
  \item \textbf{The gap widens with $p$.} As more edges are corrupted, 
        SPD-Sheaf loses proportionally more performance, consistent with 
        the hypothesis that it encodes richer information through 
        edge-specific transformations.
\end{enumerate}

This is the intended mechanistic signature: SPD-Sheaf's gains on clean 
graphs originate from \emph{using edges more}, not from architectural 
factors orthogonal to the sheaf framework.

The value of this diagnostic lies in relative sensitivity across 
models under matched architecture, not in absolute retention numbers. 
Indiscriminate edge corruption is not a realistic deployment shift, and 
we do not claim SPD-Sheaf is more robust to adversarial or natural edge 
noise. Notably, BBBP under Rewire with $p=0.1$ yields GCN retention 
exceeding $100\%$, indicating that GCN's predictions on BBBP depend only 
weakly on graph topology---which further supports the relative 
interpretation of this analysis.

\begin{table*}[t]
\centering
\small
\setlength{\tabcolsep}{4pt}
\caption{Graph perturbation analysis. Values are retention (\% of clean 
ROC-AUC) at perturbation rate $p$. SPD-Sheaf shows consistently lower 
retention than Euclidean SheafNN under matched architecture across both 
datasets and both perturbation types, indicating stronger reliance on 
edge-dependent structure.}
\label{tab:perturbation}
\begin{tabular}{ll ccc ccc ccc}
\toprule
 & 
 & \multicolumn{3}{c}{$p = 0.1$} 
 & \multicolumn{3}{c}{$p = 0.3$} 
 & \multicolumn{3}{c}{$p = 0.5$} \\
\cmidrule(lr){3-5} \cmidrule(lr){6-8} \cmidrule(lr){9-11}
Dataset & Perturbation 
 & SPD-Sheaf & Euc.\ Sheaf & GCN 
 & SPD-Sheaf & Euc.\ Sheaf & GCN 
 & SPD-Sheaf & Euc.\ Sheaf & GCN \\
\midrule
{BACE}
  & Drop   & 96.3 & 98.4 & 92.4  & 82.5 & 92.3 & 85.3  & 69.5 & 83.0 & 73.3 \\
  & Rewire & 90.1 & 94.7 & 88.5  & 80.1 & 85.8 & 79.0  & 69.5 & 76.4 & 76.0 \\
\midrule
{BBBP}
  & Drop   & 98.0 & 99.3 & 98.9  & 92.3 & 98.0 & 96.7  & 89.2 & 94.1 & 97.1 \\
  & Rewire & 94.0 & 97.1 & 101.6 & 86.6 & 97.4 & 98.8  & 83.8 & 91.5 & 98.4 \\
\bottomrule
\end{tabular}
\end{table*}

\section{Computational Complexity}
We analyze the time and space complexity of our SPD Sheaf Neural Network. Let $N$ denote the number of nodes, $E$ the number of edges, $d$ the dimension of SPD matrices (fixed at $d=3$), $F$ the Euclidean feature dimension, and $L$ the number of layers.

\paragraph{Equivariant Local Frames}
We construct a per-node equivariant orthogonal frame $M_v\in\mathbb{R}^{3\times 3}$ from coordinates via edge aggregation and cross products.
This costs $\mathcal{O}(E + N)$ for $d=3$.

\paragraph{Per-Layer Time Complexity.}
Each layer of our model consists of six main computational components:

\begin{enumerate}
    \item \textbf{SPD Transform:} The transformation $\tilde{X}_v = Q_v X_v Q_v^\top$ involves:
    (i) learning orthogonal matrix $Q_v$: $\mathcal{O}(Nd^3)$.
    (ii) matrix multiplications on $N$ SPD matrices: $\mathcal{O}(Nd^3)$.
    Total: $\mathcal{O}(Nd^3)$.
    
    \item \textbf{Sheaf Learner:} Learning orthogonal restriction maps from node features:
    (i) feature concatenation for each edge: $\mathcal{O}(EF)$;
    (ii) MLP forward pass producing skew-symmetric parameters: $\mathcal{O}(EF^2)$ (width scales with $F$);
    (iii) Cayley transform $(I-S/2)^{-1}(I+S/2)$ for $2E$ matrices: $\mathcal{O}(Ed^3)$.
    Total: $\mathcal{O}(EF^2 + Ed^3)$.
    
    \item \textbf{Sheaf Laplacian Diffusion:} The sheaf-based message passing comprises:
    (i) congruence actions $M_{ve} X_v M_{ve}^\top$ for source and destination: $\mathcal{O}(Ed^3)$;
    (ii) logarithmic maps on edge SPD matrices: $\mathcal{O}(Ed^3)$;
    (iii) pullback transforms $M^\top (\cdot) M$: $\mathcal{O}(Ed^3)$;
    (iv) scatter-add aggregation: $\mathcal{O}(Ed^2)$;
    (v) eigenvalue normalization: $\mathcal{O}(Nd^3)$;
    (vi) exponential map: $\mathcal{O}(Nd^3)$.
    Total: $\mathcal{O}(Ed^3 + Nd^3)$.
    
    \item \textbf{Euclidean Feature Propagation:} GraphSAGE with mean aggregation: $\mathcal{O}(EF + NF^2)$.
    
    \item \textbf{Cross-Manifold Interaction:} The SPD-to-feature interaction:
    (i) $\log$ map to tangent space: $\mathcal{O}(Nd^3)$;
    (ii) linear projections and attention computation: $\mathcal{O}(Nd^2F + NF^2)$.
    Total: $\mathcal{O}(Nd^3 + Nd^2F + NF^2)$.
    
    \item \textbf{SPD Nonlinearity:} Eigendecomposition and reconstruction: $\mathcal{O}(Nd^3)$.
\end{enumerate}
\paragraph{Overall Complexity.}
Combining all six components, the per-layer complexity is:
\[
    \mathcal{O}\bigl(Nd^3 + Ed^3 + EF+ EF^2 + NF^2 + Nd^2F\bigr).
\]
Since the SPD dimension $d=3$ is a fixed constant, this simplifies to:
\[
    \mathcal{O}(N + E + EF + EF^2 + NF^2 + NF) = \mathcal{O}(EF^2 + NF^2),
\]
which is \emph{linear} in both the number of nodes $N$ and edges $E$ for a fixed feature dimension $F$. The total complexity for $L$ layers is therefore $\mathcal{O}(LF^2(E + N))$.

\paragraph{Graph-Level Pooling.}
For graph classification:
\begin{itemize}
    \item Power-Euclidean SPD pooling: $\mathcal{O}(Nd^3)$;
    \item Standard mean/sum pooling for Euclidean features: $\mathcal{O}(NF)$.
\end{itemize}

\paragraph{Classification Head.}
The factorized bilinear pooling (FBP) fusion requires $\mathcal{O}((d^2 + F)r)$ per sample, where $r$ is the factorization rank. The cross-attention variant has complexity $\mathcal{O}(F^2)$ per sample.

\paragraph{Space Complexity.}
The model stores the following intermediate activations per layer:
\begin{itemize}
    \item Node SPD matrices: $\mathcal{O}(Nd^2)$
    \item Edge restriction maps: $\mathcal{O}(Ed^2)$
    \item Node features: $\mathcal{O}(NF)$
\end{itemize}
Total: $\mathcal{O}(Nd^2 + Ed^2 + NF)$. With $d=3$ constant, this reduces to $\mathcal{O}(N + E + NF) = \mathcal{O}(NF + E)$.

The number of learnable parameters scales as $\mathcal{O}(LF^2)$.




\paragraph{Comparison with Standard GNNs.}
Compared to standard message-passing networks with $\mathcal{O}(EF + NF^2)$ complexity, our model has two additional sources of overhead:
\begin{enumerate}
    \item \textbf{SPD manifold operations:} $\mathcal{O}((N+E)d^3)$ for matrix logarithms, exponentials, and congruence actions.
    \item \textbf{Sheaf learner:} $\mathcal{O}(EF^2)$ for learning edge-specific restriction maps, compared to $\mathcal{O}(EF)$ for standard edge-wise operations.
\end{enumerate}
The dominant terms are thus $\mathcal{O}(EF^2 + NF^2)$ versus $\mathcal{O}(EF + NF^2)$ for standard GNNs. In practice, for typical molecular graphs with $N \approx 20$--$50$ atoms and $F = 128$, the $NF^2$ term dominates both models, making the additional $EF^2$ overhead modest. The SPD operations with $d=3$ contribute negligible cost relative to feature propagation.

\paragraph{Numerical Precision.} 
Our implementation uses \texttt{float64} precision for SPD manifold operations (matrix logarithms, exponentials, and eigendecompositions). This choice is motivated by numerical stability: the composition of matrix exponentials and logarithms in sheaf convolution can produce gradient values that overflow or underflow in \texttt{float32}, leading to NaN during backpropagation.


\section{Future Works}\label{app:future-work}
The frame of this work mainly focuses on SPD manifold-based sheaf neural networks, and the future potential developments lie primarily in two areas. One relates to the relationship between restriction maps and the structure of the graph. The restriction map distinguishes and compares the initial information of different nodes, with its discrimination effectiveness being strongly influenced by the definition of the restriction map and the information transmission properties on the graph. One simple but meaningful question is whether the effectiveness of the restriction map can be controlled based on the given structure of the graph, thereby enhancing the overall efficiency of the sheaf neural network? Within the scope of this paper, we break it down into addressing the following works: (1) find the criterion for determining the "optimal", and (2) for a given graph and an appropriate restriction map defined on it, determine whether this definition of the restriction map is "optimal".

Another side is mostly focused on the mathematical framework itself. Our work primarily addresses the problem of inputting SPD-type data as features, while the underlying sheaf algebra structure still holds potential for further mathematical refinement. People can further replacing sheaf types with other manifolds, or introducing more complex sheaf structures. Establishing connections between this more general sheaf theory and neural network performance would represent a significant advancement in developing sheaf neural network methodologies.

\end{document}